\newcommand{\RR}{{\mathcal R}}
\newtheorem{infthm}{Informal Theorem}
\newlength{\figurewidth}
\newcommand{\II}{\mathds{1}}
\def\EE{{\mathbb{E}}}
\def\PP{{\mathbb{P}}}
\def\MI{{\mathcal{I}}}
\def\setR{\mathbb{R}}
\def\bZ{\mathbf{Z}}
\def\bz{\mathbf{z}}
\def\mut{\mathcal{I}}
\title[Price of Privacy]{The Price of Privacy in Untrusted Recommendation Engines}
 \author{
 \Name{Siddhartha Banerjee} \Email{sidb@stanford.edu}\\
 \addr Stanford University
 \AND
 \Name{Nidhi Hegde} \Email{nidhi.hegde@technicolor.com}\\
 \addr Technicolor, Paris Research Lab
 \AND
 \Name{Laurent Massouli{\'e}} \Email{laurent.massoulie@inria.fr}\\
 \addr Microsoft Research - INRIA Joint Center}
\begin{document}
\maketitle

\begin{abstract}
Recent increase in online privacy concerns prompts the following question: can a recommender system be accurate if users do not entrust it with their private data? To answer this, we study the problem of learning item-clusters under local differential privacy, a powerful, formal notion of data privacy. We develop bounds on the sample-complexity of learning item-clusters from privatized user inputs. Significantly, our results identify a sample-complexity separation between learning in an information-rich and an information-scarce regime, thereby highlighting the interaction between privacy and the amount of information (ratings) available to each user.
 
In the information-rich regime, where each user rates at least a constant fraction of items, a spectral clustering approach is shown to achieve a sample-complexity lower bound derived from a simple information-theoretic argument based on Fano's inequality. However, the information-scarce regime, where each user rates only a vanishing fraction of items, is found to require a fundamentally different approach both for lower bounds and algorithms. To this end, we develop new techniques for bounding mutual information under a notion of \emph{channel-mismatch}, and  also propose a new algorithm, \emph{MaxSense}, and show that it achieves optimal sample-complexity in this setting.
 
The techniques we develop for bounding mutual information may be of broader interest. To illustrate this, we show their applicability to $(i)$ learning based on 1-bit sketches, and $(ii)$ adaptive learning, where queries can be adapted based on answers to past queries.
\end{abstract}

\begin{keywords}
Differential privacy, recommender systems, lower bounds, partial information
\end{keywords}

\section{Introduction}
Recommender systems are fast becoming one of the cornerstones of the Internet; in a world with ever increasing choices, they are one of the most effective ways of matching users with items. Today, many websites use some form of such systems. Research in these algorithms received a fillip from the Netflix prize competition in 2009. Ironically, however, the contest also exposed the Achilles heel of such systems, when \cite{PrivAttack1} demonstrated that the Netflix data could be de-anonymized. Subsequent works (for example, \cite{PrivAttack2}) have reinforced belief in the frailty of these algorithms in the face of privacy attacks.

To design recommender systems in such scenarios, we first need to define what it means for a data-release mechanism to be private. The popular perception has coalesced around the notion that a person can either participate in a recommender system and waive all claims to privacy, or avoid such systems entirely. The response of the research community to these concerns has been the development of a third paradigm between complete exposure and complete silence. This approach has been captured in the formal notion of \emph{differential privacy} (refer \cite{Dwork06}); essentially it suggests that although perfect privacy is impossible, one can control the leakage of information by \emph{deliberately corrupting sensitive data before release}. The original definition in \cite{Dwork06} provides a statistical test that must be satisfied by a data-release mechanism to be private. Accepting this paradigm shifts the focus to designing algorithms that obey this constraint while maximizing relevant notions of utility. This trade-off between utility and privacy has been explored for several problems in database management \cite{Sulq,Dwork06,DworkMNS,Contobs,Panpriv} and learning \cite{BLR,PrivERM,SQbarrier,Kasietal,McsMiro,Smith}. 

In the context of recommender systems, there are two models for ensuring privacy: centralized and local. In the centralized model, the recommender system is trusted to collect data from users; it then responds to queries by publishing results that have been corrupted via some differentially private mechanism. However, users increasingly desire control over their private data, given their mistrust in centralized databases (which is supported by examples such as the Netflix privacy breach). In cases where the database cannot be trusted to keep data confidential, users can store their data locally, and differential privacy is ensured through suitable randomization at the `user-end' before releasing data to the recommender system. This is precisely the context of the present paper: the design of differentially private algorithms for untrusted recommender systems.

The latter model is variously known in privacy literature as \emph{local differential privacy} (see ~\cite{Kasietal}; we henceforth refer to it as \emph{local-DP} ), and in statistics as the `randomized response technique' (see~\cite{Warner}). However, there are two unique challenges to local-DP posed by recommender systems which have not been satisfactorily dealt with before:
\begin{enumerate}[nolistsep,noitemsep]
\item The underlying space (here, the set of ratings over \emph{all} items) has very high dimensionality.
\item The users have \emph{limited information}: they rate only a (vanishingly small) fraction of items.
\end{enumerate}
In this work we address both these issues. We consider the problem of learning an unknown (low-dimensional) clustering for a large set of items from privatized user-feedback. Surprisingly, we demonstrate a sharp change in the sample-complexity of local-DP learning algorithms when shifting from an information-rich to an information-scarce regime -- no similar phenomenon is known for non-private learning. With the aid of new information-theoretic arguments, we provide lower bounds on the sample-complexity in various regimes. On the other hand, we also develop novel algorithms, particularly in the information-scarce setting, which match the lower bounds up to logarithmic factors. Thus although we pay a `price of privacy' when ensuring local-DP in untrusted recommender systems with information-scarcity, we can design optimal algorithms for such regimes.

\subsection{Our Results}
%We now give an overview of our technical results, and discuss their relevance to the problem of designing algorithms for untrusted recommender systems. 
We focus on learning a generative model for the data, under \emph{user-end, or local differential privacy} constraints. 
%This entails a subtle difference in the definition of utility as compared to centralized differential privacy. In the latter, the true database \emph{is known to the curator}, but privacy constraints require any output to be perturbed; the performance measure is the number of samples (i.e., database size) required to output a hypothesis that is both private and close to the truth. 
Local differential privacy ensures that user data is privatized before being made available to the recommender system -- the aim of the system is thus to learn the model from privatized responses to (appropriately designed) queries. The metric of interest is the \emph{sample-complexity} -- the minimum number of users required for efficient learning. 

Formally, given a set of items, we want to learn a partition or \emph{clustering} of the item-set, such that items within a cluster are statistically similar (in terms of user-ratings). The class of models (or \emph{hypothesis class}) we wish to learn is thus the set of mappings from items $[N]$\footnote{Throughout the paper, we use $[N]$ to denote the set $\{1,2,\ldots,N\}$.} to clusters $[L]$ (where typically $L<<N$). The system can collect information from $U$ users, where each user has rated only $w$ out of the $N$ items, and interacts with the system via a mechanism satisfying $\epsilon$-local-DP. To be deemed successful, we require that an algorithm \emph{identify the correct cluster label for all items}\footnote{This is for ease of exposition -- our results extend to allowing a fraction of item-misclassifications, c.f. Appendix \ref{app:misc}.}. 

To put the above model in perspective, consider the problem of movie-recommendation -- here items are movies, and the recommender system wants to learn a clustering of these movies, wherein two movies in a cluster are `similar'. We assume that each user has watched $w$ movies, but is unwilling to share these ratings with the recommender system without appropriate privatization of their data. Once the recommender system has learnt a good clustering, it can make this knowledge public, allowing users to obtain their own recommendations, based on their viewing history. This is similar in spirit to the `You Might Also Like' feature on IMDB or Amazon.

Our starting point for sample-complexity bounds is the following basic lower bound (c.f. Section \ref{sec:pre} for details):
\begin{infthm} 
\textbf{(Theorem~\ref{Thm:basiclb})} For any (finite) hypothesis class $\mathcal{H}$ to be `successfully' learned under $\epsilon$-local-DP, the number of users must satisfy:
$U_{LB}=\Omega\left(\frac{\log|\mathcal{H}|}{\epsilon}\right).$
\end{infthm}
\noindent The above theorem is based on a standard use of Fano's inequality in statistical learning. Similar connections between differential privacy and mutual information have been established before (c.f. Section \ref{ssec:relwork}) -- we include it here as it helps put our main results in perspective.

Returning to the recommender system problem, note that for the problem of learning item-clusters, $\log|\mathcal{H}|=\Theta (N)$. We next consider an \emph{information-rich setting}, wherein $w=\Omega(N)$, i.e., each user knows ratings for a constant fraction of the items. We show the above bound is matched (up to logarithmic factors) by a local-DP algorithm based on a novel `pairwise-preference' sketch and spectral clustering techniques:
\begin{infthm}
\textbf{(Theorem~\ref{Thm:pairpref})} In the information-rich regime under $\epsilon$-local-DP, clustering via the Pairwise-Preference Algorithm succeeds if the number of users satisfies: $U_{PP}^{IR}=\Omega\left(\frac{N\log N}{\epsilon}\right).$
\end{infthm}

The above theorems thus provide a complete picture of the information-rich setting. In practical scenarios, however, $w$ is quite small; for example, in a movie ratings system, users usually have seen and rated only a \emph{vanishing} fraction of movies. Our main results in the paper concern non-adaptive, local-DP learning in the \emph{information-scarce regime} -- wherein $w=o(N)$. Herein, we observe an interesting phase-change in the sample-complexity of private learning:
\begin{infthm}
In the information-scarce regime under $\epsilon$-local-DP, the number of users required for non-adaptive cluster learning must satisfy: $U_{LB}^{IS}=\Omega\left(\frac{N^2}{w^2}\right)$ \textbf{(Theorem~\ref{Thm:weaklb})}. 

\noindent Furthermore, for small $w$, in particular, $w=o(N^{\frac{1}{3}})$, we have: $U_{LB}^{IS}=\Omega\left(\frac{N^2}{w}\right)$ \textbf{(Theorem~\ref{Thm:stronglb})}.
\end{infthm}

To see why this result is surprising, consider the following toy problem: each item $i\in[N]$ belongs to one of two clusters. Users arrive, sample a \emph{single item} uniformly at random and learn its corresponding cluster, answer a query from the recommender system, and leave. 

For non-private learning, if there is no constraint on the amount of information exchanged between the user and the algorithm, then the number of users needed for learning the clusters is $\Theta\left(N\log N\right)$ (via a simple coupon-collector argument). Note that the amount of data each user has is $\Theta(\log N)$ (item index$+$cluster). Now if we put a constraint that the \emph{average amount of information} exchanged between a user and the algorithm is $1$ bit, then intuition suggests that the recommender system now needs $O\left(N\log^2 N\right)$ users. This is achieved by the following simple strategy: each user reveals her complete information with probability $\frac{1}{\log N}$, else reveals no information -- clearly the amount of information exchanged per user is $1$ bit on average, and a modified coupon collector argument shows that this scheme requires $O(N\log^2N)$ users to learn the item clusters. 

However, the situation changes if we impose a condition that the amount of information exchanged is \emph{exactly $1$ bit} per user (for example, the algorithm asks a yes/no question to the user); as a side-product of the techniques we develop for Theorem~\ref{Thm:stronglb}, we show that the number of users required in this case is $O(N^2)$ (c.f. Theorem~\ref{Thm:one-bit}). This fundamental change in sample-complexity scaling is due to the \emph{combination of users having limited information and a `per-user information' constraint} (as opposed to the average information constraint). One major takeaway of our work is that local differential privacy in the information-scarce regime has a similar effect.

Finally for the information-scarce regime, we develop a new algorithm, MaxSense, which (under appropriate separation conditions) matches the above bound up to logarithmic factors:
\begin{infthm}
\textbf{(Theorem~\ref{Thm:maxsense})} In the information-scarce regime under $\epsilon$-local-differential-privacy, for given $w=o(N)$, clustering via the MaxSense Algorithm (Section~\ref{sec:maxsense}) is successful if the number of users satisfies: $U_{MS}=\Omega\left(\frac{N^2\log N}{w\epsilon}\right).$
\end{infthm}

\noindent\textbf{Techniques:} Our main technical contribution lies in the tools we use for the lower bounds in the information-scarce setting. By viewing the privacy mechanism as a noisy channel with appropriate constraints, we are able to use information theoretic methods to obtain bounds on private learning. Although connections between privacy and mutual information have been considered before (refer \cite{2party,Renyi}), existing techniques do not capture the change in sample-complexity in high-dimensional regimes. We formalize a new notion of `channel mis-alignment' between the `sampling channel' (the partial ratings known to the users) and the privatization channel. In Section~\ref{sec:LBscarce} we provide a structural lemma (Lemma~\ref{lemma:Ibnd}) that quantifies this mismatch under general conditions, and demonstrate its use by obtaining tight lower bounds under $1$-bit (non-private) sketches. In Section~\ref{ssec:LB} we use it to obtain tight lower bounds under local-DP. In Section~\ref{ssec:adaptive} we discuss its application to adaptive local-DP algorithms, establishing a lower bound of order $\Omega(N\log N)$ -- note that this again is a refinement on the bound in Theorem~\ref{Thm:basiclb}. Though we focus on the item clustering problem, our lower bounds \emph{apply to learning any finite hypothesis class under privacy constraints}.

The information theoretic results also suggest that $1$-bit privatized sketches are sufficient for learning in such scenarios. Based on this intuition, we show how existing spectral-clustering techniques can be extended to private learning in some regimes. More significantly, in the information-scarce regime, where spectral learning fails, we develop a novel algorithm based on blind probing of a large set of items. This algorithm, in addition to being private and having optimal sample-complexity in many regimes, suggests several interesting open questions, which we discuss in Section~\ref{sec:extn}.

\subsection{Related Work}
\label{ssec:relwork}
\textbf{Privacy preserving recommender systems:} The design of recommender systems with differential privacy was studied by \cite{McsMiro} under the centralized model. Like us, they separate the recommender system into two components, a learning phase (based on a database appropriately perturbed to ensure privacy) and a recommendation phase (performed by the users `at home', without interacting with the system). They numerically compare the performance of the algorithm against non-private algorithms. In contrast, we consider a stronger notion of privacy (local-DP), and for our generative model, are able to provide tight analytical guarantees and further, quantify the impact of limited information on privacy.

\noindent\textbf{Private PAC Learning and Query Release:} Several works have considered private algorithms for PAC-learning. \cite{BLR,SQbarrier} consider the private query release problem (i.e., releasing approximate values for all queries in a given class) in the centralized model. \cite{Kasietal} show equivalences between: a) centralized private learning and agnostic PAC learning, b) local-DP and the statistical query (SQ) model of learning; this line of work is further extended by \cite{beimel}. Although some of our results (in particular, Theorem \ref{Thm:basiclb}) are similar in spirit to lower bounds for PAC (see \cite{Kasietal,beimel} there are significant differences both in scope and technique. Furthermore:  
%We discuss this further in Appendix \ref{app:misc}. 
\begin{enumerate}[nolistsep,noitemsep]
\item We emphasize the importance of limited information, and characterize its impact on learning with local-DP. Hitherto unconsidered,information scarcity is prevalent in practical scenarios, and as our results shows, it has strong implications on learning performance under local-DP.
\item Via lower bounds, we provide a tight characterization of sample-complexity, unlike~\cite{Kasietal,BLR,SQbarrier}, which are concerned with showing polynomial bounds. This is important for high dimensional data.
\end{enumerate}

\noindent\textbf{Privacy in Statistical Learning:} 
A large body of recent work has looked at the impact of differential privacy on statistical learning techniques. A majority of this work focusses on centralized differential privacy. For example, ~\cite{PrivERM} consider privacy in the context of empirical risk minimization; they analyze the release of classifiers, obtained via algorithms such as SVMs, with (centralized) privacy constraints on the training data.\cite{Robust} study algorithms for privacy-preserving regression under the centralized model; these however require running time which is exponential in the data dimension. \cite{Smith} obtains private, asymptotically-optimal algorithms for statistical estimation, again though, in the centralized model.

More recently, \cite{Duchi2013} consider the problem of finding minimax rates for statistical estimators under local-DP. Their techniques are based on refined analysis of information theoretic quantities, including generalizations of the Fano's Inequality bounds we use in Section \ref{ssec:LBbasic}. However, the estimation problems they consider have a simpler structure -- in particular, they involve learning from samples generated directly from an underlying model (albeit privatized). What makes our setting challenging is the combination of a generative model (the bipartite stochastic blockmodel) with incomplete information (due to user-item sampling) -- it seems unlikely that the techniques of \cite{Duchi2013} can extend easily to our setting. Moreover, lower bound techniques do not naturally yield good algorithms

\noindent\textbf{Other Notions of Privacy:} The local-DP model which we consider has been studied before in privacy literature~(\cite{Kasietal,DworkMNS}) and statistics~(\cite{Warner}). It is a stronger notion than central differential privacy, and also stronger than two other related notions: pan-privacy~(\cite{Panpriv}) where the database has to also deal with occasional release of its state, and privacy under continual observations~(\cite{Contobs}), where the database must deal with additions and deletions, while maintaining privacy.  %These are discussed in more detail in Section~\ref{ssec:DP}.

\noindent\textbf{Recommendation algorithms based on incoherence:} Apart from privacy-preserving algorithms, there is a large body of work on designing recommender systems under various constraints (usually low-rank) on the ratings matrix~(for example,~\cite{Wain09, KMOnoise}). These methods, though robust, fail in the presence of privacy constraints, as the noise added as a result of privatization is much more than their noise-tolerance. This is intuitive, as successful matrix completion would constitute a breach of privacy; our work builds the case for using simpler lower dimensional representations of the data, and simpler algorithms based on extracting limited information (in our case, $1$-bit sketches) from each user.

\section{Preliminaries} 
\label{sec:pre}

We now present our system model, formally define different notions of differential privacy, and introduce some tools from information theory that form the basis of our proofs. 

\subsection{The Bipartite Stochastic BlockModel}
\label{ssec:model}

Recommender system typically assume the existence of an underlying low-dimensional generative model for the data -- the aim then is to learn parameters of this model, and then, use the learned model to infer unknown user-item rankings. In this paper we consider a model wherein items and users belong to underlying clusters, and a user's ratings for an item depend only on the clusters they belong to. This is essentially a bipartite version of the \emph{Stochastic Blockmodel} \cite{StochBlock}, widely used in model selection literature. The aim of the recommendation algorithm is to learn these clusters, and then reveal them to the users, who can then compute their own recommendations privately. Our model, though simpler than the state of the art in recommender systems, is still rich enough to account for many of the features seen empirically in recommender systems. In addition it yields reasonable accuracy in non-private settings on meaningful datasets (c.f.~\cite{TomoMass}). 

Formally, let $[U]$ be the set of $U$ users and $[N]$ the set of $N$ items. The set of users is divided into $K$ clusters $[K]$, where cluster $i$ contains $\alpha_i U$ users. Similarly, the set of items is divided into $L$ clusters $[L]$, where cluster $\ell$ contains $\beta_{\ell} N$ items. We use $A$ to denote the (incomplete) matrix of user/item ratings, where each row corresponds to a user, and each column an item. For simplicity, we assume $A_{ij}\in\{0,1\}$; for example, this could correspond to `like/dislike' ratings. Finally we have the following statistical assumption for the ratings -- for user $u\in[U]$ with user class $k$, and item $i\in [N]$ with item class $\ell$, the rating $A_{ui}$ is given by a Bernoulli random variable  $A_{ui}\sim\mbox{Bernoulli}(b_{k\ell})$. Ratings for different user-item pairs are assumed independent.

In order to model limited information, i.e., the fact that users rate only a fraction of all items, we define a parameter $w$ to be the number of items a user has rated. More generally, we only need to know $w$ in an orderwise sense -- for example, $w=\Theta(f(N))$ for some function $f$. We assume that the rated items are picked uniformly at random. We define \emph{$w=\Omega(N)$ to be the information-rich regime, and $w=o(N)$ to be the information-scarce regime}.

%For lower bounds, we specialize this model to the situation where there is only one user class ($K=1$) and where users have perfect knowledge of the type of the items they rate.
Given this model, the aim of the recommender system is to learn the item-clusters from user-item ratings. Note that the difficulty in doing so is twofold:
\begin{itemize}[nolistsep,noitemsep]
\item The user-item ratings matrix $A$ is incomplete -- in particular, each user has ratings for only $w$ out of $N$ items.
\item Users share their information only via a privacy-preserving mechanism  (as we discuss in the next section). 
\end{itemize}
Our work exposes how these two factors interact to affect the \emph{sample-complexity}, i.e., the minimum number of users required to learn the item-clusters. We note also that another difficulty in learning is that the user-item ratings are noisy -- however, as long as this noise does not depend on the number of items, this does not affect the sample-complexity scaling.

\subsection{Differential Privacy}
\label{ssec:DP}

Differential privacy is a framework that defines conditions under which an algorithm can be said to be privacy preserving with respect to the input. Formally (following \cite{Dwork06}):
\begin{definition}
\textbf{($\epsilon$-Differential Privacy)} A randomized function $\Psi:\mathcal{X}\rightarrow\mathcal{Y}$ that maps data $X\in\mathcal{X}$ to $Y\in\mathcal{Y}$ is said to be $\epsilon$-differentially private if, for all values $y\in\mathcal{Y}$ in the range space of $\Psi$, and for all `neighboring' data $x,x'$, we have:
\begin{equation}
\label{eq:DP}
\frac{\PP[Y=y|X=x]}{\PP[Y=y|X=x']}\leq e^{\epsilon}
\end{equation}
\end{definition}  
We assume that $Y$ conditioned on $X$ is independent of any external side information $Z$ (in other words, the output of mechanism $\Psi$ depends only on $X$ and its internal randomness). The definition of `neighboring' is chosen according to the situation, and determines the data that remain private. In the original definition \cite{Dwork06}, two databases are said to be neighbors if the larger database is constructed by adding a single tuple to the smaller database. In the context of ratings matrices, two matrices can be neighbors if they differ in: $i)$ a single row (per-user privacy), or $ii)$ a single rating (per-rating privacy). 

Two crucial properties of differential privacy are \emph{composition} and \emph{post-processing}. We state these here without proof; c.f. \cite{Dwork06} for details. Composition captures the reduction in privacy due to sequentially applying multiple differentially-private release mechanisms:
\begin{proposition} 
\label{prop:composition}
\textbf{(Composition)} If $k$ outputs, $\{Y_1,Y_2,\ldots,Y_k\}$ are obtained from data $X\in\mathcal{X}$ by $k$ different randomized functions, $\{\Psi_1,\Psi_2,\ldots,\Psi_k\}$, where $\Psi_i$ is $\epsilon_i$-differentially private, then the resultant function is $\sum_{i=1}^k\epsilon_i$ differentially private.
\end{proposition}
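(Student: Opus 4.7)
The plan is to establish the composition claim by a direct pointwise ratio bound on the joint distribution of $(Y_1,\ldots,Y_k)$, exploiting the fact that the randomized mechanisms $\Psi_1,\ldots,\Psi_k$ use independent internal randomness given the input $X$. Concretely, I would first fix an arbitrary pair of neighboring databases $x,x'\in\mathcal{X}$ and an arbitrary output tuple $(y_1,\ldots,y_k)$ in the joint range. I would then write
\[
\frac{\PP[Y_1=y_1,\ldots,Y_k=y_k \mid X=x]}{\PP[Y_1=y_1,\ldots,Y_k=y_k \mid X=x']}
= \prod_{i=1}^{k}\frac{\PP[Y_i=y_i \mid X=x]}{\PP[Y_i=y_i \mid X=x']},
\]
where the factorization uses independence of the coins of the $\Psi_i$'s conditional on $X$ (this is the standard assumption underlying ``composition''; the same input $X$ is fed to each $\Psi_i$, but no shared randomness is used).

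The second step is to apply the $\epsilon_i$-DP guarantee of each individual mechanism to each factor. Since $x,x'$ are neighbors and $\Psi_i$ is $\epsilon_i$-DP, we have $\PP[Y_i=y_i\mid X=x]/\PP[Y_i=y_i\mid X=x']\le e^{\epsilon_i}$ for every $y_i$ in the range of $\Psi_i$. Multiplying these $k$ inequalities yields an overall bound of $\exp(\sum_{i=1}^k\epsilon_i)$ on the ratio of joint likelihoods. Because the inequality holds pointwise for every output tuple and every neighboring pair, the composite mechanism $X\mapsto(Y_1,\ldots,Y_k)$ satisfies the differential privacy definition with parameter $\sum_{i=1}^k\epsilon_i$.

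For continuous output spaces or mechanisms with densities, the same argument works with the probability mass function replaced by the Radon--Nikodym derivative of the joint law of $(Y_1,\ldots,Y_k)$ given $X$ with respect to a dominating product measure; independence of the mechanisms again factorizes this density, and the $\epsilon_i$-DP bound applies factorwise. A measure-theoretic wrapping (integrating the pointwise bound over any measurable event) then recovers the definition in its event-based form, but this is routine.

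The only real subtlety — and where I would be careful — is the independence assumption. If the mechanisms shared randomness, the factorization step would fail and the bound could degrade. I would therefore state explicitly at the outset that the $\Psi_i$ are run with independent randomness, which is the standard convention in the DP literature (cf.\ \cite{DworkMNS}); once this is in place, the rest of the argument is a one-line calculation and does not pose any technical obstacle.
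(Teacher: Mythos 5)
Your proof is correct and is the standard argument for (basic) composition in differential privacy. The paper itself does not supply a proof for this proposition; it simply cites \cite{DworkMNS}, so there is no paper proof to compare against. Your factorization of the joint likelihood ratio under the conditional-independence-of-coins assumption, followed by the factorwise application of the individual $\epsilon_i$-DP bounds, is exactly the argument given in that reference, and your explicit flagging of the independent-randomness hypothesis is the one point worth making explicit. No gaps.
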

\noindent Post-processing states that processing the output of a differentially private release mechanism can only make it more differentially private (i.e., with a smaller $\epsilon$) vis-a-vis the input:
\begin{proposition}
\label{prop:postproc}
\textbf{(Post-processing)} If a function $\Psi_1:\mathcal{X}\rightarrow\mathcal{Y}$ is $\epsilon$-differentially private, then any composition function $\Psi_2\circ\Psi_1:\mathcal{X}\rightarrow\mathcal{Z}$ is $\epsilon'$-differentially private for some $\epsilon'\leq\epsilon$.
\end{proposition}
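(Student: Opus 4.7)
The plan is to establish the stronger statement that $\Psi_2\circ\Psi_1$ is itself $\epsilon$-differentially private, from which $\epsilon'\leq \epsilon$ follows immediately by definition. Fix any neighboring pair $x,x'\in\mathcal{X}$ and any output value $z\in\mathcal{Z}$, and assume, as is standard in the randomized post-processing model, that the internal randomness of $\Psi_2$ is independent of that of $\Psi_1$. Then the target inequality is
$$\PP[\Psi_2(\Psi_1(x))=z]\leq e^{\epsilon}\,\PP[\Psi_2(\Psi_1(x'))=z].$$

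First, I would condition on the intermediate output of $\Psi_1$. By the law of total probability, using the assumed independence of $\Psi_2$'s randomness from $\Psi_1$'s, in the countable case
$$\PP[\Psi_2(\Psi_1(x))=z]=\sum_{y\in\mathcal{Y}}\PP[\Psi_2(y)=z]\cdot\PP[\Psi_1(x)=y],$$
with the analogous integral representation against the law of $\Psi_1(x)$ in the general measure-theoretic case. The crucial structural observation is that the kernel $y\mapsto\PP[\Psi_2(y)=z]$ is a fixed non-negative function that does not depend on which of $x$ or $x'$ was fed into $\Psi_1$.

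Second, I would invoke the $\epsilon$-DP hypothesis on $\Psi_1$ pointwise in $y$, namely $\PP[\Psi_1(x)=y]\leq e^{\epsilon}\PP[\Psi_1(x')=y]$. Substituting into the display above and pulling the constant $e^{\epsilon}$ out of the sum (which is valid because $\PP[\Psi_2(y)=z]\geq 0$) yields
$$\PP[\Psi_2(\Psi_1(x))=z]\leq e^{\epsilon}\sum_{y}\PP[\Psi_2(y)=z]\PP[\Psi_1(x')=y]=e^{\epsilon}\,\PP[\Psi_2(\Psi_1(x'))=z],$$
which is precisely the $\epsilon$-DP bound for the composition. Swapping the roles of $x$ and $x'$ gives the symmetric inequality, so $\Psi_2\circ\Psi_1$ is $\epsilon$-DP and hence $\epsilon'\leq\epsilon$.

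There is no substantive obstacle: the argument reduces to the law of total probability followed by monotonicity of the integral under a pointwise bound. The only mildly delicate point is the formal treatment when $\mathcal{Y}$ or $\mathcal{Z}$ is uncountable, which requires representing $\Psi_1$ and $\Psi_2$ as Markov kernels and replacing the sum with an integral against a common dominating measure; the pointwise likelihood-ratio bound is preserved under this integration. Note that the proposition claims only $\epsilon'\leq\epsilon$ and not strict inequality, so one need not exhibit any coarsening property of $\Psi_2$: the trivial case $\Psi_2=\mathrm{id}$ already shows the bound is tight.
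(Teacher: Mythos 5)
Your proof is correct and is the standard textbook argument. Note that the paper does not actually supply its own proof of this proposition: it states it as a known property and cites \cite{DworkMNS} for details, so there is no in-paper argument to compare against. Your approach — conditioning on the intermediate output $\Psi_1(x)$, using independence of $\Psi_2$'s randomness, applying the pointwise $\epsilon$-DP likelihood-ratio bound, and pulling $e^{\epsilon}$ through the sum/integral by monotonicity — is exactly the argument one finds in the references and is cleanly stated. Your remarks on the measure-theoretic generalization (replacing the sum by an integral against a Markov kernel) and on tightness via $\Psi_2=\mathrm{id}$ are both sound and show you understand why the proposition asserts only $\epsilon'\leq\epsilon$ rather than strict improvement.
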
 
 
In settings where the database curator is untrusted, an appropriate notion of privacy is \emph{local differential privacy} (or local-DP). For each user $u$, let $X_u$ be its private data -- in the recommendation context, the rated-item labels and corresponding ratings -- and let $Y_u$ be the data that the user makes publicly available to the untrusted curator. Local-DP requires that $Y_u$ is $\epsilon$ differentially private w.r.t. $X_u$. This paradigm is similar to the Randomized Response technique in statistics~\cite{Warner}. It is the natural notion of privacy in the case of untrusted databases, as the data is privatized \emph{at the user-end before storage in the database}; to emphasize this, we alternately refer to it as \emph{User-end Differential Privacy}. 

We conclude this section with a mechanism for releasing a single bit under $\epsilon$-differential privacy. Differential privacy for this mechanism is easy to verify using equation~\ref{eq:DP}.
\begin{proposition}
\label{prop:1bitpriv}
\textbf{($\epsilon$-DP bit release):} Given bit $S^0\in\{0,1\}$, set output $S$ to be equal to $S^0$ with probability $\frac{e^{\epsilon}}{1+e^{\epsilon}}$, else equal to $\overline{S}^0=1-S^0$. Then $S$ is $\epsilon$-differentially private w.r.t. $S^0$.
\end{proposition}

\subsection{Preliminaries from Information Theory} 
\label{subsec:ITbasics}

For a random variable $X$ taking values in some discrete space $\mathcal{X}$, its entropy is defined as $H(X):=\sum_{x\in\mathcal{X}}-\PP[X=x]\log \PP[X=x]$ \footnote{For notational convenience, we use $\log(\cdot)$ as the logarithm to the base $2$ throughout; hence, the entropy is in `bits'}. For two random variables $X,Y$, the mutual information between them is given by:
\begin{equation*}
\MI(X;Y):=\sum_{(x,y)}\PP[X=x,Y=x]\log\left(\frac{\PP[X=x,Y=y]}{\PP[X=x]\PP[Y=y]}\right)\hspace{0.5cm}.
\end{equation*}
%One important property of mutual information that we use is the Data-Processing inequality, which states that mutual information decreases upon further processing. Formally we have:
%\begin{proposition}
%\label{prop:DPI}
%\textbf{(Data-Processing Inequality)} For random variables $X,Y,Z$ forming a Markov chain $X\rightarrow Y\rightarrow Z$, we have that:
%\begin{equation*}
%\MI(X;Z)\leq\MI(Y;Z)
%\end{equation*}
%\end{proposition}

Our main tools for constructing lower bounds are variants of Fano's Inequality, which are commonly used in non-parametric statistics literature (c.f. \cite{SantWain,Wain09}). Consider a finite hypothesis class $\mathcal{H}, |\mathcal{H}|=M$, indexed by $[M]$. Suppose that we choose a hypothesis $H$ uniformly at random from $\{1,2,\ldots,M\}$, sample a data set $\mathbf{X}_1^U$ of $U$ samples drawn in an i.i.d. manner according to a distribution $P_{\mathcal{H}}(H)$ (in our case, $u\in[U]$ corresponds to a user, and $X_u$ the ratings drawn according to the statistical model in Section~\ref{ssec:model}), and then provide a private version of this data $\widehat{\mathbf{X}}_1^U$ to the learning algorithm. We can represent this as the Markov chain:
\begin{equation*}
H\in\mathcal{H}\xrightarrow{\mbox{Sampling}} \mathbf{X}_1^U\xrightarrow{\mbox{Privatization}}\widehat{\mathbf{X}}_1^U\xrightarrow[\mbox{Selection}]{\mbox{Model}}\widehat{H}
\end{equation*}
Further, we define a given learning algorithm to be \emph{unreliable} for the hypothesis class $\mathcal{H}$ if for a hypothesis drawn uniformly at random, we have $\max_{h\in[M]}\PP\left[\widehat{H}\neq H| H=h\right]>\frac{1}{2}$.

Fano's inequality provides a lower bound on the probability of error under any learning algorithm in terms of the mutual information between the underlying hypotheses and the samples. A basic version of the inequality is as follows:
\begin{lemma} \textbf{(Fano's Inequality)}
\label{lemma:fano}
Given a hypothesis $H$ drawn uniformly from $\mathcal{H}$, and $U$ samples $\mathbb{X}_1^U$ drawn according to $H$, for any learning algorithm, the average probability of error $P_e:=\PP[\widehat{H}\neq H]$ satisfies:
\begin{equation}
\label{eq:fano}
P_e\geq 1-\frac{\mathcal{I}(H;\mathbf{X}_1^U)+1}{\log\left(M\right)} \, .
\end{equation}
\end{lemma}

As a direct consequence of this result, if the samples are such that $\mathcal{I}(H;\mathbf{X}_1^U)=o(\log M)$, then any algorithm fails to correctly identify \emph{almost all} of the possible underlying models. Though this is a weak bound, equation \ref{eq:fano} turns out to be sufficient to study sample-complexity scaling in the cases we consider. In 
Appendix \ref{app:misc}, we consider stronger versions of the above lemma, as well as more general criterion for approximate model selection (e.g., allowing for distortion).

\section{Item-Clustering under Local-DP: The Information-Rich Regime}
\label{sec:rich}

In this section, we derive a basic lower bound on the number of users needed for accurate learning under local differential privacy. This relies on a simple bound on the mutual information between any database and its privatized output, and hence is applicable in general settings. Returning to item-clustering, we give an algorithm that matches the optimal scaling (up to logarithmic factor) under one of the following two conditions: $i)$ $w=\Omega(N)$, i.e., each user has rated a constant fraction of items (the information-rich regime), or $ii)$ only the ratings are private, not the identity of the rated items.

\subsection{Differential Privacy and Mutual Information}
\label{ssec:LBbasic}

We first present a lemma that characterizes the mutual information leakage across any differentially private channel: 

\begin{lemma}
\label{lemma:basicIbnd}
Given (private) r.v. $X\in\mathcal{X}$, a privatized output $Y\in\mathcal{Y}$ obtained by any locally $\epsilon-$DP mechanism $\Phi:\mathcal{X}\rightarrow\mathcal{Y}$, and any side information $Z$, we have:
$I(X;Y|Z)\leq\epsilon\log e.$
\end{lemma}

Lemma~\ref{lemma:basicIbnd} follows directly from the definitions of mutual information and differential privacy (note that for any such mechanism, the output $Y$ given the input $X$ is conditionally independent of any side-information). We note that similar results have appeared before in literature; for example, equivalent statements appear in \cite{2party,Renyi}. We present the proof here for the sake of completeness:
\begin{proof}[Proof of Lemma \ref{lemma:basicIbnd}]
\begin{align*}
I(X;Y|Z) &=\sum_{(x,y)\in\mathcal{X}\times\mathcal{Y}}p(x,y|Z)\log{\left[\frac{p(x,y|Z)}{p(x|Z)p(y|Z)}\right]}\\
&=\sum_{(x,y)\in\mathcal{X}\times\mathcal{Y}}p(x,y|Z)\log\left[\frac{p(y|x,Z)}{\sum_{x'\in\mathcal{X}}p(x'|Z)p(y|x',Z)}\right]\\
&=\sum_{(x,y)\in\mathcal{X}\times\mathcal{Y}} -p(x,y|Z)
\log\left[\sum_{x'\in\mathcal{X}}p(x'|Z)\frac{p(y|x',Z)}{p(y|x,Z)}\right]\\
&\stackrel{(a)}{\leq}\sum_{(x,y)\in\mathcal{X}\times\mathcal{Y}}-p(x,y|Z)
\log\left[\sum_{x'\in\mathcal{X}}p(x'|Z)e^{-\epsilon}\right]\leq\epsilon\log e.
\end{align*}
Here inequality $(a)$ is a direct application of the definition of differential privacy (Equation \ref{eq:DP}), and in particular, the fact that it holds for any side information.
\end{proof}

Returning to the private learning of item classes, we obtain a lower bound on the sample-complexity by considering the following special case of the item-clustering problem: consider $\mathcal{H}=\{0,1\}^N$, and let $C_N\in\mathcal{H}$ be a mapping of the item set $[N]$ to \emph{two classes} represented as $\{0,1\}$ -- hence the size of the hypothesis class is $2^N$. Each user $u$ has some private data $X_u$, which is generated via the bipartite Stochastic Blockmodel (c.f., Section \ref{ssec:model}). Recall we define a learning algorithm to be \emph{unreliable} for $\mathcal{H}$ if $\max_{h\in\mathcal{H}}\PP\left[\widehat{C_N}\neq C_N| C_N=h\right]>\frac{1}{2}$. Using Lemma~\ref{lemma:basicIbnd} and Fano's inequality (Lemma~\ref{lemma:fano}), we get the following lower bound on the sample-complexity:
\begin{theorem} 
\label{Thm:basiclb} 
Suppose the underlying clustering $C_N$ is drawn uniformly at random from $\{0,1\}^N$. Then any learning algorithm obeying $\epsilon$-local-DP is unreliable if the number of queries satisfies:
%$U$ obeys:
%\begin{equation*}
$U<\left(\frac{N}{\epsilon\log e}\right)$.
%\end{equation*}
\end{theorem}

\begin{proof}
We now have the following information-flow model for \emph{each user} (under local-DP):
$$C_N\xrightarrow{\mbox{Sampling}} X_u\xrightarrow{\mbox{Privatization}}\widehat{X}_u$$
Here sampling refers to each user rating a subset of $w$ items. Now by using the Data-Processing Inequality 
%(Proposition~\ref{prop:DPI}), 
(Theorem $2.8.1$ from \cite{CoverThomas}), followed by Lemma~\ref{lemma:basicIbnd}, we have that: 
$$\MI(C_N;\widehat{X}_1^U)\leq \sum_{u=1}^U \MI(X_u;\widehat{X_u}|\widehat{\mathbf{X}_1^{u-1}})< U \epsilon\log e,$$ 
Fano's inequality (Lemma~\ref{lemma:fano}) then implies that a learning algorithm is unreliable if the number of queries satisfies:  $U <\left(\frac{N}{\epsilon\log e}\right)$.
%\begin{equation*}
%U<\left(\frac{N}{\epsilon\log e}\right).
%\end{equation*}
\end{proof}
We note here that the above theorem, though stated for the bipartite Stochastic Blockmodel, in fact gives sample-complexity bounds for more general model-selection problems. Further, in Appendix \ref{app:misc}, we extend the result to allow for \emph{distortion} -- wherein the algorithm is allowed to make a mistake on some fraction of item-labels.  

For the bipartite Stochastic Blockmodel, though the above bound is not the tightest, it turns out to be achievable (up to log factors) in the information-rich regime, as we show next. We note that a similar bound was given by \cite{beimel} for PAC-learning under centralized DP, using more explicit counting techniques. Both our results and the bounds in \cite{beimel} fail to exhibit the correct scaling in the information-scarce case ($w=o(N)$) setting. However, unlike proofs based on counting arguments, our method allows us to leverage more sophisticated information theoretic tools for other variants of the problem, like those we consider subsequently in Section \ref{sec:LBscarce}.

\subsection{Item-Clustering in the Information-Rich Regime}
\label{ssec:PP}

To conclude this section, we outline an algorithm for clustering in the information-rich regime. The algorithm proceeds as follows: $i)$ the recommendation algorithm provides each user $u$ with two items $(i_u,j_u)$ picked at random, whereupon the user computes a private sketch $S^0_u$ which is equal to $1$ if she rated the two items positively, and else $0$, $ii)$ users release a privatized version $S_u$ of their private sketch using the $\epsilon$-DP bit release mechanism, $iii)$ the algorithm constructs matrix $\widehat{A}$, where $\widehat{A}(i,j)$ entry is obtained by adding the sketches from all users queried with item-pair $(i,j)$, and finally $iv)$ performs spectral clustering of items based on matrix $\widehat{A}$. This algorithm, which we refer to as the Pairwise-Preference algorithm, is formally specified in Figure \ref{alg:pairpref}.
%Appendix \ref{app:PP}. 

\begin{figure}[h]
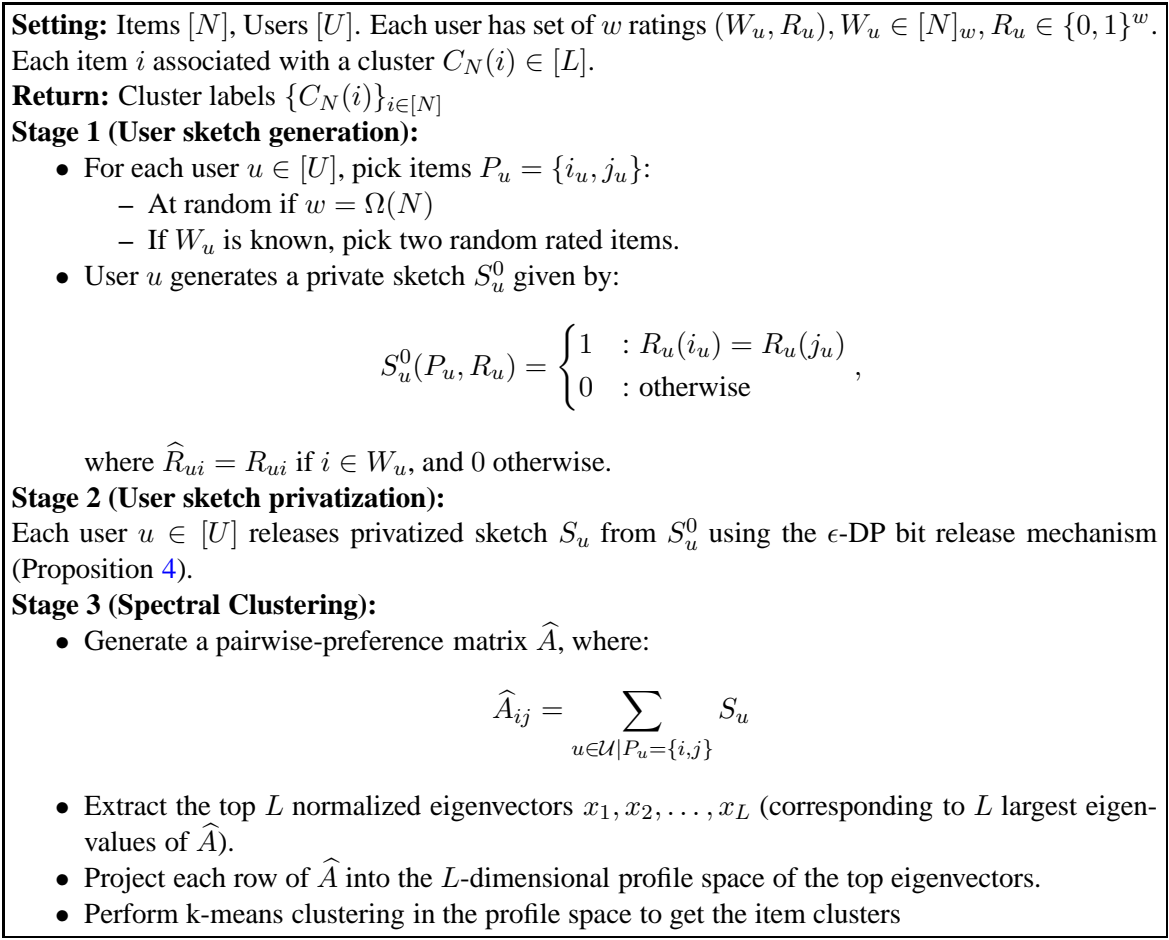

\noindent\textbf{Setting:} Items $[N]$, Users $[U]$. Each user has set of $w$ ratings $(W_u,R_u), W_u\in[N]_w, R_u\in\{0,1\}^w$. Each item $i$ associated with a cluster $C_N(i)\in[L]$.

\noindent\textbf{Return:} Cluster labels $\{C_N(i)\}_{i\in[N]}$

\noindent\textbf{Stage 1 (User sketch generation):} 
\begin{itemize}[nolistsep,noitemsep]
\item For each user $u\in[U]$, pick items $P_u=\{i_u,j_u\}$:
\begin{itemize}[nolistsep,noitemsep]
\item At random if $w=\Omega(N)$
\item If $W_u$ is known, pick two random rated items.
\end{itemize} 
\item User $u$ generates a private sketch $S_u^0$ given by:
\begin{equation*}
\label{eq:privpp}
S_u^0(P_u,R_u)=
\begin{dcases*} 
1 & : $R_u(i_u)=R_u(j_u)$\\
0 & : otherwise
\end{dcases*},
\end{equation*}
where $\widehat{R}_{ui}=R_{ui}$ if $i\in W_u$, and $0$ otherwise.
\end{itemize}

\noindent\textbf{Stage 2 (User sketch privatization):}\\
Each user $u\in[U]$ releases privatized sketch $S_u$ from $S_u^0$ using the $\epsilon$-DP bit release mechanism (Proposition~\ref{prop:1bitpriv}).

\noindent\textbf{Stage 3 (Spectral Clustering):}
\begin{itemize}[nolistsep,noitemsep]
\item Generate a pairwise-preference matrix $\widehat{A}$, where:
\begin{equation*}
\widehat{A}_{ij}=\sum_{u\in\mathcal{U}|P_u=\{i,j\}}S_u
\end{equation*}
\item Extract the top $L$ normalized eigenvectors $x_1,x_2,\ldots, x_L$ (corresponding to $L$ largest eigenvalues of $\widehat{A}$).
\item Project each row of $\widehat{A}$ into the $L$-dimensional profile space of the top eigenvectors.
\item Perform k-means clustering in the profile space to get the item clusters 
\end{itemize}
\caption{The Pairwise-Preference Algorithm}
\label{alg:pairpref}
\end{figure}

Recall in the bipartite Stochastic Blockmodel, we assume that the $U$ users belong tp $K$ clusters, each of size $\alpha_i U$. We now have the following theorem that characterizes the performance of the Pairwise-Preference algorithm.
\begin{theorem} 
\label{Thm:pairpref}
The Pairwise-Preference algorithm %(Algorithm~\ref{alg:pairpref}) 
satisfies $\epsilon$-local-DP. Further, suppose the eigenvalues and eigenvectors of $\widehat{A}$ satisfy the following non-degeneracy conditions:
\begin{itemize}
\item The $L$ largest magnitude eigenvalues of $A$ have distinct absolute values.
\item The corresponding eigenvectors $y_1,y_2,\ldots,y_L$, normalized under the $\alpha$-norm, $||y||^2_{\alpha}=\sum_{k=1}^K\alpha_ky_k^2$, for some $\alpha$ satisfy:
\begin{equation*}
t_i\neq t_j\quad,1\leq i<j\leq L
\end{equation*}
where $t_i:= (y_1(i),\ldots ,y_L(i))$.
\end{itemize} 
Then, in the information-rich regime (i.e., when $w=\Omega(N)$), there exists $c>0$ such that the item clustering is successful with high probability if the number of users satisfies:
\begin{equation*}
%\label{eq:ppUscale}
U\geq c\left(N\log N\right) \, .
\end{equation*}
\end{theorem}
 
\begin{proof}[Proof Outline]
Local differential privacy under the Pairwise-Preference algorithm is guaranteed by the use of $\epsilon$-DP bit release, and the composition property. The performance analysis is based on a result on spectral clustering by \cite{TomoMass}. The main idea is to interpret $\widehat{A}$ as representing the edges of a random graph over the item set, with an edge between an item in class $i$ and another in class $j$ if $\widehat{A}_{ij}>0$. In particular, from the definition of the Pairwise Preference algorithm, we can compute that the probability of such an edge is $\Theta\left(\frac{b_{ij}\log N}{N}\right)$. This puts us in the setting analyzed by \cite{TomoMass} -- we can now use their spectral clustering bounds to get the result. For the complete proof, refer Appendix \ref{app:PP}.
\end{proof}

\section{Local-DP in the Information-Scarce Regime: Lower Bounds}
\label{sec:LBscarce}

As in the previous lower bound, we consider a simplified version of the problem, where there is a single class of users, and each item is ranked either $0$ or $1$ deterministically by each user (i.e., $b_{ui}=b_i\in\{0,1\}$ for all items). Let $C_N(\cdot):[N]\rightarrow\{0,1\}$ be the underlying clustering function; in general we can think of this as an $N$-bit vector $\bZ\in\{0,1\}^N$. We assume that the user-data for user $u$ is given by $X_u=(I_u,Z_u)$, where $I_u$ is a size $w$ subset of $[N]$ representing items rated by user $u$, and $Z_u$ are the ratings for the corresponding items; in this case, $Z_u=\{\bZ(i)\}_{i\in I_u}$. The set $I_u$ is assumed to be chosen uniformly at random from amongst all size-$w$ subsets of $[N]$. We also denote the privatized sketch from user $u$ as $S_u\in\mathcal{S}$. Here the space ${\mathcal S}$ to which sketches belong is assumed to be an arbitrary finite or countably infinite space. The sketch is assumed $\epsilon$-differentially private. Finally, as before, we assume that $\bZ$ is chosen uniformly over $\{0,1\}^N$. Thus we have the following information-flow model for the user $u$:
\begin{equation*}
\bZ\xrightarrow{\mbox{Sampling}} (I_u,Z_u)\xrightarrow{\mbox{Privatization}}S_u
\end{equation*} 

Now to get tighter lower bounds on the number of users needed for accurate item clustering, we need more accurate bounds on the mutual information between the underlying model on item-clustering and the data available to the algorithm. 
The main idea behind our lower bound techniques is to view the above chain as a combination of two channels -- the first wherein the user-data $(I_u,Z_u)$ is generated (sampled) by the underlying statistical model, and the second wherein the algorithm receives a sketch $S_u$ of the user's data. We then develop a new information inequality that allows us to \emph{bound the mutual information in terms of the mismatch between the channels}. This technique turns out to be useful in settings without privacy as well -- in Section \ref{ssec:1bit}, we show how it can be used to get sample-complexity bounds for learning with $1$-bit sketches.

\subsection{Mutual Information under Channel Mismatch}
\label{ssec:2lemmas}

We now establish a bound for the mutual information between a statistical model and a low-dimensional sketch, which is the main tool we use to get sample-complexity lower bounds. We define $[N]_w$ to be the collection of all size-$w$ subsets of $[N]$, and $\mathcal{D}:=[N]_w\times \{0,1\}^w$ to be the set from which user information (i.e., $(I,Z)$) is drawn, and define $D=|\mathcal{D}|=\binom{N}{w}2^w$. Finally $\EE_X[\cdot]$ indicates that the expectation is over the random variable $X$.  
\begin{lemma}
\label{lemma:Ibnd}
Given the Markov Chain $\bZ\rightarrow(I,Z)\rightarrow S$, let $(I_1,Z_1),(I_2,Z_2)\in\mathcal{D}$ be two pairs of `user-data' sets which are independent and identically distributed according to the conditional distribution of the pair $(I,Z)$ given $S=s$. Then, the mutual information $\mut(\bZ;S)$ satisfies:
\begin{equation*}
%\label{eq:1}
\mut(\bZ;S)\le \EE_S\left[\EE_{(I_1,Z_1)|S\perp \! \! \! \perp(I_2,Z_2)|S}\left[2^{|I_1\cap I_2|}\II_{\{Z_1\equiv Z_2\}}-1\right]\right],
\end{equation*}
where we use the notation $\II_{\{Z_1\equiv Z_2\}}$ to denote that the two user-data sets are consistent on the index set on which they overlap, i.e., $\II_{\{Z_1\equiv Z_2\}}:=\II_{\{Z_1(\ell)=Z_2(\ell)\forall \ell \in I_1\cap I_2\}}$
\end{lemma}

\begin{proof}	
For brevity, we use the shorthand notation $p(\bz)=\PP[\bZ=\bz], p(s)=\PP[S=s], p(\bz|s)=\PP[\bZ=\bz|S=s]$ and finally $p(\bz,s)=\PP[(\bZ,S)=(\bz,s)]$. Now we have:
\begin{align}
\mut(\bZ;S)
&=\sum_{\bz,s}p(\bz,s)\log\left(\frac{p(\bz,s)}{p(\bz)p(s)}\right)\nonumber\\
&=\sum_{s}\sum_{\bz}p(\bz|s)p(s) \log\left(\frac{p(\bz|s)}{p(\bz)}\right)\nonumber\\
\label{Ibnd1} &\leq\EE_S\left[\sum_{\bz}p(\bz|s)\log\left(\frac{p(\bz|s)}{p(\bz)}\right)\right]
\end{align}
Let $f(s,z):=\log\left(\frac{p(\bz,s)}{p(\bz)}\right)$. Similar to above, we use the shorthand notation $p(\cdot|a,b):=\PP[\cdot|A=a,B=b]$, where $(A,B)$ are random variables and $(a,b)$ their corresponding realizations. Now we have:
\begin{align}
f(s,z)&=\log\left(\frac{p(\bz|s)}{p(\bz)}\right)
=\log\left(\frac{\sum_{(i_2,z_2)}p(\bz,i_2,z_2|s)}{p(\bz)}\right)\nonumber\\
&\mbox{(Summing over $(I_2,Z_2)$)}\nonumber\\
&=\log\left(\frac{\sum_{(i_2,z_2)}p(\bz|i_2,z_2,s)p(i_2,z_2|s)}{p(\bz)}\right)\nonumber\\
&=\log\left(\frac{\sum_{(i_2,z_2)}p(\bz|i_2,z_2)p(i_2,z_2|s)}{p(\bz)}\right)\nonumber\\
&\mbox{(By the Markov property)}\nonumber\\
&\leq\frac{\sum_{(i_2,z_2)}p(\bz|i_2,z_2)p(i_2,z_2|s)}{p(\bz)}-1\nonumber\\
&\mbox{(Since $\log(x)\leq x-1$)}\nonumber\\
&=\sum_{(i_2,z_2)}p(i_2,z_2|s)2^{|i_2|}\II_{\{z\equiv z_2\}}-1,\label{Ibnd2}
\end{align}
where the last equality is obtained using the fact that the type of each item $\bZ(l)$ is independent and uniformly distributed over $\{0,1\}$. Next, using a similar set of steps, we have:
\begin{align}
\sum_{\bz}p(\bz|s)f(s,z)&=\sum_{\bz}\sum_{i_1}p(\bz,i_1|s)f(s,z)\nonumber\\
&=\sum_{i_1,z_1}\sum_{z_{-i_1}}p(i_1,z_1|s)\PP[Z_{-I_1}=z_{-i_1}|i_1,z_1]f(s,z)\nonumber\\
&\mbox{$\Big($Where $Z_{-i_1}:=\{Z(l)|l\in[N]\setminus i_1\}\in\{0,1\}^{N-|i_1|}\Big)$}\nonumber\\
&=\sum_{i_1,z_1}\sum_{z_{-i_1}}p(i_1,z_1|s)2^{-(N-|i_1|)}f(s,z),
\label{Ibnd3}
\end{align}
Finally, we combine equations (\ref{Ibnd1}),(\ref{Ibnd2}) and (\ref{Ibnd3}) together to get the result:
%Finally, in equation \ref{eq:longeq}, we combine equations (\ref{Ibnd1}),(\ref{Ibnd2}) and (\ref{Ibnd3}) together to get the result.
%\floatstyle{plain}
%\restylefloat{figure}
%\begin{figure*}[!t]
%\normalsize
%\setcounter{MYtempeqncnt}{\value{equation}}
%\setcounter{equation}{6}
\begin{align*}
%\label{eq:longeq}
\mut(\bZ;S)&\leq\EE_S\left[\sum_{\bz}\PP[\bZ=\bz|S=s]f(s,z)\right]\\
&\leq\EE_S\left[\sum_{(i_1,z_1)}\sum_{z_{-i_1}}p(i_1,z_1|s)2^{-(N-|i_1|)}\left(\sum_{(i_2,z_2)}p(i_2,z_2|s)2^{|i_2|}\II_{\{z\equiv z_2\}}-1\right)\right]\nonumber\\
&=\EE_S\left[\sum_{(i_1,z_1)}\sum_{(i_2,z_2)}p(i_1,z_1|s)p(i_2,z_2|s)2^{-(N-|i_1|-|i_2|)}\left(\sum_{z_{-i_1}}\II_{\{(z_1,z_{-i_1})\equiv z_2\}}-1\right)\right]\nonumber\\
&=\EE_S\left[\EE_{(I_1,Z_1)|S\perp \! \! \! \perp(I_2,Z_2)|S}\left[2^{|I_1\cap I_2|}\II_{\{Z_1\equiv Z_2\}}-1\right]\right]
\end{align*}
%\setcounter{equation}{\value{MYtempeqncnt}}
%\hrulefill
%\end{figure*}	
%\floatstyle{boxed}
%\restylefloat{figure}
\end{proof}

We note here that the above lemma is a special case (where $\bZ$ takes the uniform measure over $\{0,1\}^N$) of a more general lemma, which we state and prove in Appendix \ref{app:lbound}

\subsection{Sample-Complexity for Learning with $1$-bit Sketches}
\label{ssec:1bit}

To demonstrate the use of Lemma \ref{lemma:Ibnd}, we first consider a related problem that demonstrates the effect of per-user constraints (as opposed to average constraints) on the mutual information. We consider the same item-class learning problem as before with $w=1$ (i.e., each user has access to a single rating), but instead of a privacy constraint, we consider a `per-user bandwidth' constraint, wherein each user can communicate only \emph{a single bit} to the learning algorithm. 

\begin{theorem}
\label{Thm:one-bit}
Suppose $w=1$, with $(I,Z)$ drawn i.i.d uniformly over $[N]\times\{0,1\}$. Then for any $1$-bit sketch derived from $(I,Z)$, it holds that:
%\begin{equation}
$\mut(\bZ,S) = O\left(\frac{1}{N}\right),$
%\end{equation}
and consequently, there exists a constant $c>0$ such that any cluster learning algorithm using queries with $1$-bit responses is unreliable if the number of users satisfies
%\begin{equation*}
$U < cN^2.$
%\end{equation*}
\end{theorem}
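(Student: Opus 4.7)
My plan is to apply Lemma~\ref{lemma:Ibnd} with $w=1$ and then leverage the binary range of $S$ to obtain the sharp $1/N$ decay. With $w=1$ the overlap $|I_1\cap I_2|\in\{0,1\}$, so the integrand $2^{|I_1\cap I_2|}\II_{\{Z_1\equiv Z_2\}}-1$ vanishes unless $I_1=I_2$, in which case it equals $2\II_{\{Z_1=Z_2\}}-1$. Using the identity $2(a_0^2+a_1^2)-(a_0+a_1)^2 = (a_0-a_1)^2$ with $a_z = P(I=i,Z=z\mid S=s)$, the conditional expectation given $S=s$ simplifies to
\[
\sum_i \bigl(P(I=i,Z=0\mid S=s) - P(I=i,Z=1\mid S=s)\bigr)^2.
\]

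I would then reparameterize through the forward channel $p_{i,z}^{(s)} := P(S=s\mid I=i, Z=z)$, which satisfies $p_{i,z}^{(0)} + p_{i,z}^{(1)} = 1$ regardless of the privatization scheme. Because $\bZ$ is uniform on $\{0,1\}^N$ and $I$ is uniform on $[N]$, the joint factors as $P(I=i, Z=z, S=s) = \tfrac{1}{2N} p_{i,z}^{(s)}$, and $P(S=s) = \tfrac{1}{2N}\sum_{j,z'} p_{j,z'}^{(s)}$. Substituting collapses the bound to
\[
\mut(\bZ;S) \le \sum_s \frac{1}{2N} \cdot \frac{\sum_i (p_{i,0}^{(s)} - p_{i,1}^{(s)})^2}{\sum_{j,z'} p_{j,z'}^{(s)}}.
\]
Since $p_{i,z}^{(s)} \in [0,1]$, the elementary inequality $(p-q)^2 \le p+q$ makes each ratio at most $1$, and summing over the two values of $s$ yields $\mut(\bZ;S) \le 1/N$.

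For $U$ users, the samples $(I_u, Z_u)$ are i.i.d.\ given $\bZ$ and each $S_u$ uses fresh private randomness, so the $(S_u)$ are conditionally independent given $\bZ$; hence $\mut(\bZ; S_1^U) = H(S_1^U) - \sum_u H(S_u \mid \bZ) \le \sum_u \mut(\bZ;S_u) \le U/N$. Applying Fano's inequality (Lemma~\ref{lemma:fano}) with $|\mathcal{H}| = 2^N$ then gives a probability of error at least $1 - (U/N + 1)/N$, which exceeds $1/2$ whenever $U < cN^2$ for an appropriate constant $c>0$, yielding the $\Omega(N^2)$ lower bound.

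The main obstacle is the second step: the sharp $1/N$ factor hinges on simultaneously tracking the $\tfrac{1}{2N}$ prefactor from the uniform prior, the cancellation of the denominator $\sum_{j,z'}p_{j,z'}^{(s)}$, the elementary inequality $(p-q)^2 \le p+q$ for $p,q\in[0,1]$, and the fact that there are only two values of $s$ in the outer sum. Losing any of these pieces would at best yield the trivial $\mut(\bZ;S)\le H(S)\le 1$ bound, which is off by a factor of $N$ and fails to distinguish $1$-bit sketches from arbitrary finite-alphabet ones.
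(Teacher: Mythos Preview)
Your proof is correct. Both you and the paper open with Lemma~\ref{lemma:Ibnd} and reduce, via the identity $2(a_0^2+a_1^2)-(a_0+a_1)^2=(a_0-a_1)^2$, to bounding $\sum_s\PP(S=s)\sum_i(\pi^s_{i,0}-\pi^s_{i,1})^2$. The divergence is in how that quadratic form is controlled. The paper first invokes convexity of $\mut(\bZ;S)$ in the channel $\PP(S\mid I,Z)$ to reduce to \emph{deterministic} sketches $S=\II_A(I,Z)$, and then carries out an explicit combinatorial calculation partitioning $[N]$ according to $|A\cap\{(i,0),(i,1)\}|\in\{0,1,2\}$. You instead keep the channel arbitrary, parameterize by the forward probabilities $p_{i,z}^{(s)}=\PP(S=s\mid I=i,Z=z)\in[0,1]$, and close with the elementary inequality $(p-q)^2\le p+q$, which makes each ratio at most $1$ and the two-term sum at most $1/N$. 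Your route is slightly shorter and avoids the convexity/extremal-point reduction altogether; the paper's route, on the other hand, identifies the worst-case sketches explicitly (indicator queries), which is informative in its own right. The passage from the single-user bound to $\mut(\bZ;S_1^U)\le U/N$ via conditional independence given $\bZ$, followed by Fano, is standard and matches the paper's use of Lemma~\ref{lemma:fano}.
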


\begin{proof}
In order to use Lemma~\ref{lemma:Ibnd}, we first note that $\mut(\bZ,S)$ is a convex function of $\PP[S=s|\bZ=z]$ for fixed $\PP[\bZ=\bz]$ (Theorem $2.7.4$ in~\cite{CoverThomas}). Writing $\PP[S=s|\bZ=\bz]$ as $\sum_{(i,z)} \PP[S=s|(I,Z)=(i,z)]\PP[(I,Z)=(i,z)|\bZ=\bz]$, we observe that the extremal points of the kernel $\PP[S=s|\bZ=\bz]$ correspond to $\PP[S=s|(i,z)] \in \{0,1\}$, where the mutual information is maximized. This implies that the class of deterministic queries with $1$-bit response that maximizes mutual information has the following structure: given user-data $(I_u,Z_u)$, the algorithm provides user $u$ with an arbitrary set $A\subseteq\{(i,z)|i\in[N],z\in\{0,1\}\}$ of (items,ratings), and the user identifies if $(I_u,Z_u)$ is contained in $A$. Formally, the query is denoted $S_u=\II_A(I_u,Z_u)$ (i.e., is $(I_u,Z_u)\in A?$).  

Defining $p_{i,z}^s:=\PP[(I,Z)=(i,z)|S=s]$, for a query response $S=\II_A(I_u,Z_u)$, we have the following:
\begin{align*}
p_{i,z}^1&=\frac{\PP[[(I,Z)=(i,z)]\PP[S=1|(i,z)]}{\sum_{(j,z_j')}\PP[(I,Z)=(j,z_j')]\PP[S=1|(j,z_j')]} \\
&=\frac{\II_A(i,z)}{\sum_{j=1}^N\left\{\II_A(j,0) + \II_A(j,1)\right\}}
= \frac{\II_A(i,z)}{|A|},
\end{align*}
and similarly $p_{i,z}^0=\frac{\II_{\bar{A}}(i,z)}{|\bar{A}|}$ where $\bar{A}$ is the complement of set $A$. From Lemma~\ref{lemma:Ibnd}, for r.v.s $(I_1,Z_1)\perp \! \! \! \perp(I_2,Z_2)|S$, we have:
\begin{align*}
\mut(\bZ,S)&\leq  \EE_{S}\left[\EE\left[2^{|I_1\cap I_2|}\II_{\{Z_1\equiv Z_2\}}-1\right]\right]\\ 
&=\sum_{s\in \{0,1\}}\PP[S=s]\EE\left[\II_{\{I_1= I_2\}}\left(2\II_{\{Z_1\equiv Z_2\}}-1\right)\right].
\end{align*}
Introducing the notation $\PP(I=\ell, Z(\ell)=\sigma|S=s)=\pi_{\ell,\sigma}^s$ , the following identity is easily established:
\begin{align}
\label{eq:3a}
\sum_{\ell=1}^N \EE\Big[ \II_{\{I_1=I_2=\ell\}}(2\II_{\{Z_1(\ell)=Z_2(\ell)\}}&-1)|S=s\Big]=
\sum_{\ell=1}^N\left(\pi_{\ell,0}^s-\pi_{\ell,1}^s\right)^2
\end{align}
The RHS of (\ref{eq:3a}) is a non-negative definite quadratic form of the variables $p_{i,z}^s$ (since $\pi_{\ell,\sigma}^s=\sum_{i,\sigma|\ell\in i,z(\ell)=\sigma}p_{i,z}^s$). Thus:
\begin{align*}
\mut(\bZ,S)&\leq\sum_{s\in \{0,1\}}\PP[S=s]\sum_{\ell=1}^N\left(\pi_{\ell,0}^s-\pi_{\ell,1}^s\right)^2\\
&= \sum_{s\in \{0,1\}}\PP[S=s] \frac{1}{|A_s|^2} \sum_{i=1}^N \II_{\{|A_s \cap \{(i,0),(i,1)\}|=1\}},
\end{align*}
where $A_s=A$ if $s=1$ and $\bar{A}$ if $s=0$. Now for a given $A$, consider the partitioning of the set $[N]$ into $C_0\cup C_1\cup C_2$, where for $k=1,2,3$, $\forall i\in C_k, |A \cap \{(i,0),(i,1)\}| = k$. We then have the following:
\begin{align*}
\mut(\bZ,S) & \leq\PP[S=1]\frac{|C_1|}{|A|^2} + \PP[S=0]\frac{|C_1|}{|\bar{A}|^2}\\
& = \frac{|A|}{2N}\frac{|C_1|}{|A|^2} + \frac{|\bar{A}|}{2N}\frac{|C_1|}{|\bar{A}|^2}\mbox{\hspace{1cm}(Since $S=\II_A(I,Z)$)}\\
& = \frac{|C_1|}{2N}\left(\frac{1}{|A|} + \frac{1}{2N-|A|}\right)
\leq \frac{1}{N}.
\end{align*}
Now, using Fano's inequality (Lemma~\ref{lemma:fano}) to get the result.
\end{proof}

Note that the above bound is tight -- to see this, consider a (adaptive) scheme where each user is asked a random query of the form ``Is $(I_u,Z_u)=(i,b)$?''(where $i\in[N]$ and $b=\{0,1\}$). The average time between two successful queries is $2N$, and one needs $N$ successful queries to learn all the bits. This demonstrates an interesting change in the sample-complexity of learning with per-user communications constraints ($1$-bit sketches in this section, privacy in next section) versus average-user constraints (mutual information bound or average bandwidth). 

\subsection{Sample-Complexity for Learning under Local-DP}
\label{ssec:LB}

We now exploit the above techniques to obtain lower bounds on the scaling required for accurate clustering with DP in an information-scarce regime, i.e., when $w=o(N)$. To do so, we first require a technical lemma that establishes a relation between the distribution of a random variable with and without conditioning on a differentially private sketch:
\begin{lemma}
\label{lemma:uncondition}
Given a discrete random variable $A\in\mathcal{A}$ and some $\epsilon$-differentially private `sketch' variable $S\in\mathcal{S}$ generated from $A$, there exists a function $\lambda:\mathcal{A}\times\mathcal{S}\rightarrow[e^{-\epsilon},e^{\epsilon}]$ such that for any $a\in\mathcal{A}$ and $s\in\mathcal{S}$:
\begin{equation}
\PP(A=a|S=s)=\PP(A=a) \lambda(a,s)
\end{equation}
\end{lemma}
%The result is a direct application of Bayes' Theorem and the definition of differential privacy.

\begin{proof}
\begin{align*}
\PP(A=a|S=s)&=\frac{\PP(A=a)\PP(S=s|A=a)}{\sum_{a'\in\mathcal{A}}\PP(A=a')\PP(S=s|A=a')}\\
&\quad\mbox{(From Bayes' Theorem)}\\
&=\PP(A=a)\left(\sum_{a'\in\mathcal{A}}\PP(A=a')\frac{\PP(S=s|A=a')}{\PP(S=s|A=a)}\right)^{-1}
%&=\PP(A=a)\left(\sum_{a'\in\mathcal{A}}\PP(A=a')\frac{\PP(S=s|A=a')}{\PP(S=s|A=a)}\right)^{-1}\\
%&:=\PP(A=a) \lambda(a,s)
\end{align*}
Thus, we can define: $$\lambda(a,s)=\left(\sum_{a'\in\mathcal{A}}\PP(A=a')\frac{\PP(S=s|A=a')}{\PP(S=s|A=a)}\right)^{-1}.$$
Further, from the definition of $\epsilon$-DP, we have:
\begin{equation*}
e^{-\epsilon}\leq\frac{\PP(S=s|A=a')}{\PP(S=s|A=a)}\leq e^{\epsilon},
\end{equation*}
and hence we have $\lambda(a,s)\in [e^{-\epsilon},e^{\epsilon}],\,\forall\,a\in\mathcal{A},s\in\mathcal{S}$.
\end{proof}

Recall we define $\mathcal{D}:=[N]_w\times \{0,1\}^w$ to be the set from which user information $(I,Z)$ is drawn. We write $\PP^0$ for the base probability distribution on $(I_1,Z_1)$ and $(I_2,Z_2)$ (note: the two are i.i.d uniform) over $\mathcal{D}$, and denote by $\EE^0$ mathematical expectation under $\PP^0$. We also need the following estimate (c.f. Appendix \ref{app:lbound} for the proof):
\begin{lemma}
\label{lemma:combbnd}
If $w=o(N)$, then:
\begin{equation*}
\left|\frac{\binom{N-w}{w}}{\binom{N}{w}}-\left(1-\frac{w^2}{N}\right)\right|=\Theta\left(\frac{w^4}{N^2}\right)
\end{equation*}
\end{lemma}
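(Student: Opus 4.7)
I would prove the lemma by combining a probabilistic interpretation with Bonferroni's inequalities. Observe that if $I_1,I_2$ are independent uniform random $w$-subsets of $[N]$, then $\PP[I_1\cap I_2=\emptyset]=\binom{N-w}{w}/\binom{N}{w}$ (condition on $I_1$ and count the $w$-subsets of $[N]$ avoiding it). Writing $A_n:=\{n\in I_1\cap I_2\}$ for $n\in[N]$, we have $\PP[A_n]=(w/N)^2$ by the independence of $I_1$ and $I_2$, whence $S_1:=\sum_n\PP[A_n]=w^2/N$ exactly. Consequently
\[
\frac{\binom{N-w}{w}}{\binom{N}{w}}-\Bigl(1-\frac{w^2}{N}\Bigr)=S_1-\PP\Bigl[\bigcup_n A_n\Bigr],
\]
which is non-negative by the union bound. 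The task therefore reduces to controlling the gap between the first inclusion--exclusion term and the true probability of the union.

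Next I would compute the higher-order inclusion--exclusion sums. By independence, for distinct $n,m$ one has $\PP[A_n\cap A_m]=\bigl(w(w-1)/(N(N-1))\bigr)^2$, giving
\[
S_2:=\sum_{n<m}\PP[A_n\cap A_m]=\binom{N}{2}\Bigl(\frac{w(w-1)}{N(N-1)}\Bigr)^{\!2}=\frac{w^2(w-1)^2}{2N(N-1)},
\]
and analogously $S_3=w^2(w-1)^2(w-2)^2/(6N(N-1)(N-2))=O(w^6/N^3)$. Bonferroni's inequalities give $S_1-S_2\le\PP[\bigcup_n A_n]\le S_1-S_2+S_3$, which translates into the sandwich
\[
S_2-S_3\le\frac{\binom{N-w}{w}}{\binom{N}{w}}-\Bigl(1-\frac{w^2}{N}\Bigr)\le S_2.
\]
The upper bound is then immediate: for $w=o(N)$, $S_2\le w^4/(2N(N-1))=O(w^4/N^2)$.

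The main subtlety lies in the matching lower bound $\Omega(w^4/N^2)$, for which one needs $S_2-S_3=\Omega(w^4/N^2)$. Here $S_2\ge (w(w-1))^2/(2N^2)$, which is $\Theta(w^4/N^2)$ once $w\to\infty$, while $S_3/S_2=(w-2)^2/(3(N-2))$ is $o(1)$ precisely when $w=o(\sqrt N)$. Thus the stated $\Theta(w^4/N^2)$ is sharpest in the regime $w\to\infty$ with $w=o(\sqrt N)$; however the uniform upper bound $O(w^4/N^2)$—which is the direction invoked later in Appendix~\ref{app:lbound} to control the mutual information estimates arising from Lemma~\ref{applemma:Ibnd}—holds throughout $w=o(N)$. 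If a tighter lower bound is required beyond the $w=o(\sqrt N)$ window, one would have to include further Bonferroni terms and track the alternating sum more carefully, but this refinement is not needed for the applications that follow.
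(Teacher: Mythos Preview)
Your proof is correct and takes a genuinely different route from the paper. The paper works directly with the product formula $\binom{N-w}{w}/\binom{N}{w}=\prod_{k=0}^{w-1}(1-w/(N-k))$, sandwiches it between $(1-w/(N-w+1))^w$ and $(1-w/N)^w$, and expands each endpoint by the binomial theorem. Your approach---recasting the ratio as the non-intersection probability $\PP[I_1\cap I_2=\emptyset]$ and applying Bonferroni's inequalities to $\bigcup_n A_n$---is more structured: it yields the sign of the difference immediately from the union bound, produces the exact leading correction $S_2=w^2(w-1)^2/(2N(N-1))$, and organizes higher-order terms systematically. The paper's sandwich argument is more elementary but sacrifices this precision.

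You are also right to flag the restriction on the lower bound: your Bonferroni step delivers the matching $\Omega(w^4/N^2)$ only for $w=o(\sqrt N)$. In fact the paper's expansion-based argument carries the same implicit restriction (the terms beyond $k=2$ in the binomial expansions are $o(w^4/N^2)$ only when $w^2=o(N)$), so neither proof actually establishes the full $\Theta$ over the stated range $w=o(N)$. Your observation that only the $O(w^4/N^2)$ direction is invoked downstream is exactly right, and that direction is cleanly secured by your sandwich $0\le S_1-\PP\bigl[\bigcup_n A_n\bigr]\le S_2$ without any restriction on $w$ beyond $w\le N$.
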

%\begin{proof}
%\begin{equation*}
%\frac{\binom{N-w}{w}}{\binom{N}{w}}=\prod_{k=0}^{w-1}\left(1-\frac{w}{N-k}\right)
%\end{equation*}
%\begin{equation*}
%\Rightarrow\left(1-\frac{w}{N-w+1}\right)^w\leq\frac{\binom{N-w}{w}}{\binom{N}{w}}\leq\left(1-\frac{w}{N}\right)^w
%\end{equation*}
%Now for the upper bound, using the binomial expansion, we have:
%\begin{align*}
%\left(1-\frac{w}{N}\right)^w&=1-\frac{w^2}{N}+\frac{w^4}{2N^2}-\ldots\\
%&=1-\frac{w^2}{N}+\Theta\left(\frac{w^4}{N^2}\right)
%\end{align*}
%Similarly for the lower bound, we have:
%\begin{align*}
%\left(1-\frac{w}{N-w+1}\right)^w&=1-\frac{w^2}{N-w+1}+\frac{w^4}{2(N-w+1)^2}-\ldots\\
%&\geq 1-\frac{w^2}{N}-\frac{w^3}{N(N-w+1)}+\frac{w^4}{2(N-w+1)^2}-\ldots\\
%&=1-\frac{w^2}{N}-\Theta\left(\frac{w^4}{N^2}\right)
%\end{align*}
%Combining the two, we get the result.
%\end{proof}

\noindent We can prove our tightened bounds. We first obtain a weak lower bound in Theorem~\ref{Thm:weaklb}, valid for all $w$, and then refine it in Theorem~\ref{Thm:stronglb} under additional conditions. 

\begin{theorem}
\label{Thm:weaklb} 
In the information-scarce regime, i.e., when $w=o(N)$, under $\epsilon$-local-DP we have:
\begin{eqnarray*}
\mut(\bZ,S)=O\left(\frac{w^2}{N}\right)
\end{eqnarray*}
and consequently, there exists a constant $c>0$ such that any cluster learning algorithm with $\epsilon$-local-DP is unreliable if the number of users satisfies  
%\begin{equation*}
$U < c\left(\frac{N^2}{w^2}\right).$
%\end{equation*}
\end{theorem}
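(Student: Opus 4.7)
The plan is to apply Lemma~\ref{lemma:Ibnd} to a single user's sketch, carefully estimate its right-hand side using local-DP together with the sparse structure of $(I,Z)$, and then tensorize over $U$ non-adaptive users and close with Fano's inequality. For a single user, I would parameterize the post-observation conditional as $P(I=i,Z=z\mid S=s)=\frac{1}{D}(1+\delta_s(i,z))$, where the marginal $P(I,Z)=1/D$ is uniform on $\mathcal{D}=\binom{[N]}{w}\times\{0,1\}^w$ (since $\bZ$ is uniform on $\{0,1\}^N$ and $I$ is uniform on $\binom{[N]}{w}$). Normalization forces $\sum_{(i,z)}\delta_s(i,z)=0$, and $\epsilon$-local-DP gives the pointwise bound $|\delta_s|\le e^\epsilon-1$. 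Writing $M_+((i_1,z_1),(i_2,z_2)):=2^{|i_1\cap i_2|}\II_{\{z_1\equiv z_2\}}$, the pivotal algebraic identity is the constant row sum
\begin{equation*}
\sum_{(i_2,z_2)\in\mathcal{D}}M_+((i_1,z_1),(i_2,z_2))=\sum_{i_2}2^{|i_1\cap i_2|}\cdot 2^{w-|i_1\cap i_2|}=\binom{N}{w}2^w=D,
\end{equation*}
which is independent of $(i_1,z_1)$ (so in particular $\EE_{\mathrm{prior}}[M_+]=1$).

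Expanding $\EE_{(I_1,Z_1),(I_2,Z_2)\mid S=s}[M_+]$ via this parameterization produces four terms: a zeroth-order term equal to $\EE_{\mathrm{prior}}[M_+]=1$; two linear cross terms that vanish by combining the constant row-sum identity with $\sum\delta_s=0$; and a quadratic residual $\frac{1}{D^2}\sum\delta_s(i_1,z_1)\delta_s(i_2,z_2)M_+$, which a second use of $(\sum\delta_s)^2=0$ simplifies to $\frac{1}{D^2}\sum\delta_s(i_1,z_1)\delta_s(i_2,z_2)\tilde M$ with $\tilde M:=M_+-1$. The pointwise bound $|\delta_s|\le e^\epsilon-1$ then yields $\bigl|\EE_{\cdot\mid s}[M_+-1]\bigr|\le(e^\epsilon-1)^2\,\EE_{\mathrm{prior}}|\tilde M|$.

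The crux is the estimate $\EE_{\mathrm{prior}}|\tilde M|=O(w^2/N)$. On the event $\{I_1\cap I_2=\emptyset\}$ one has $\tilde M=0$; conditional on $|I_1\cap I_2|=k\ge 1$, a short computation gives $\EE[|\tilde M|\mid k]=2(1-2^{-k})\le 2$. Hence $\EE_{\mathrm{prior}}|\tilde M|\le 2\PP[|I_1\cap I_2|\ge 1]\le 2\EE|I_1\cap I_2|=2w^2/N$, and Lemma~\ref{lemma:Ibnd} delivers $\mut(\bZ;S)=O((e^\epsilon-1)^2 w^2/N)=O(w^2/N)$ for bounded $\epsilon$.

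Since $S_1,\ldots,S_U$ are conditionally independent given $\bZ$ in the non-adaptive local-DP setting, subadditivity of entropy gives $\mut(\bZ;S_1^U)\le\sum_u\mut(\bZ;S_u)=O(Uw^2/N)$, and Fano's inequality (Lemma~\ref{lemma:fano}) for the uniform prior on $\bZ\in\{0,1\}^N$ then forces unreliability whenever $U\cdot O(w^2/N)+1<N/2$, i.e., whenever $U<cN^2/w^2$ for a suitable constant $c>0$. I expect the main obstacle to be the two successive cancellations in the expansion of $\EE_{\cdot\mid s}[M_+]$: the constant row-sum identity together with two applications of $\sum\delta_s=0$ are precisely what converts a naive $\EE_{\mathrm{prior}}|M_+|=\Theta(1)$ bound---which would only recover the coarser $\Omega(N/\epsilon)$ scaling of Theorem~\ref{Thm:basiclb}---into the tight $\EE_{\mathrm{prior}}|\tilde M|=O(w^2/N)$ needed here; everything else is bookkeeping.
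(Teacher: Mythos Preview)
Your proof is correct and follows the same high-level template as the paper (apply Lemma~\ref{lemma:Ibnd}, use $\epsilon$-local-DP to pass from the posterior to the uniform prior on $\mathcal{D}$, bound the resulting prior expectation by $O(w^2/N)$, tensorize over $U$ non-adaptive users, and close with Fano), but the intermediate step is genuinely different.

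The paper first drops the nonpositive piece $\II_{\{Z_1\equiv Z_2\}}-1$ to obtain the nonnegative integrand $\II_{\{Z_1\equiv Z_2\}}(2^{|I_1\cap I_2|}-1)$, then invokes Lemma~\ref{lemma:uncondition} to replace $\EE_s$ by $e^{2\epsilon}\EE^0$, and finally computes $\EE^0[\II_{\{Z_1\equiv Z_2\}}(2^{|I_1\cap I_2|}-1)]$ by splitting on $|I_1\cap I_2|\in\{1\}$ versus $\{>1\}$. You instead exploit the constant row-sum identity $\sum_{(i_2,z_2)}M_+((i_1,z_1),(i_2,z_2))=D$ together with $\sum\delta_s=0$ to kill the zeroth- and first-order terms exactly, leaving only the centered quadratic residual, which you bound by $(e^{\epsilon}-1)^2\,\EE^0|\tilde M|$ and then by $2(e^{\epsilon}-1)^2\,\EE|I_1\cap I_2|=2(e^{\epsilon}-1)^2 w^2/N$. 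Both routes give $O(w^2/N)$, but yours is algebraically cleaner (no case split on $|I_1\cap I_2|$) and, as a bonus, produces the sharper prefactor $(e^{\epsilon}-1)^2\sim\epsilon^2$ rather than the paper's $e^{2\epsilon}$; the theorem as stated does not need this, but it is a strictly better constant.
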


\begin{proof}
To bound the mutual information between the underlying model and each private sketch, we use Lemma~\ref{lemma:Ibnd}. In particular, we show that the mutual information is bounded by $\left(\frac{w^2}{N}\right)$ for any given value $s$ of the private sketch. 

\noindent Consider any sketch realization $S=s$.  Now, we have:
\begin{align*}
\EE\left[2^{|I_1\cap I_2|}\II_{\{Z_1\equiv Z_2\}}-1\right]
&\leq\EE\left[\II_{\{Z_1\equiv Z_2\}}\left(2^{|I_1\cap I_2|}-1\right)\right]
\end{align*}
The RHS of the above equation is a non-negative quadratic function of the variables $\{p_{i,z}\}_{(i,z)\in\mathcal{D}}$, where $p_{i,z}:=\PP[(I,Z)=(i,z)|S=s]\}$. Now, using Lemma~\ref{lemma:uncondition}, we get:
\begin{align*}
\EE\left[2^{|I_1\cap I_2|}\II_{\{Z_1\equiv Z_2\}}-1\right]
&\leq e^{2\epsilon}\EE^0\left[\II_{\{Z_1\equiv Z_2\}}\left(2^{|I_1\cap I_2|}-1\right)\right]\\
&=e^{2\epsilon}\sum_{k=0}^{w}\EE^0\left[\II_{\{|I_1\cap I_2|=k\}}\II_{\{Z_1\equiv Z_2\}}\left(2^{|I_1\cap I_2|}-1\right)\right]\\
&=e^{2\epsilon}\sum_{k=0}^{w}\EE^0\left[\II_{\{|I_1\cap I_2|=k\}}2^{-k}\left(2^{k}-1\right)\right]\\
&=e^{2\epsilon}(\Delta_1+\Delta_2),
\end{align*}
where we define:
\begin{align*}
\Delta_1&=\frac{1}{2}\EE^0\left[\II_{\{|I_1\cap I_2|=1\}}\right],\\ 
\Delta_2&=\EE^0\left[\II_{|I_1\cap I_2|>1}\left(1-2^{-|I_1\cap I_2|}\right)\right]
\end{align*}
Now we bound each of these terms separately. For $\Delta_1$:
\begin{align}
\Delta_1&=\frac{1}{2}\EE^0\left[\II_{\{|I_1\cap I_2|=1\}}\right]\nonumber\\
&=\frac{1}{2}\sum_{\ell=1}^N\EE^0\left[\II_{\{I_1\cap I_2=\{\ell\}\}}\right]=\frac{w\binom{N-w}{w-1}}{2\binom{N}{w}}\nonumber\\
&=\frac{w^2}{2(N-2w+1)}\left(1-\frac{w^2}{N}+O\left(\frac{w^4}{N^2}\right)\right)\nonumber\\
&\mbox{(Using Lemma~\ref{lemma:combbnd})}\nonumber\\
&=\frac{w^2}{2(N-2w+1)}\left(1-\frac{w^2}{N}+O\left(\frac{w^4}{N^2}\right)\right)\nonumber\\
&=O\left(\frac{w^2}{N}\right)\label{eq:delta1bnd}
\end{align}
Similarly for $\Delta_2$, we have:
\begin{align}
\Delta_2 &\leq\EE^0\left[\II_{\{|I_1\cap I_2|>1\}}\right]
=1-\PP^0\left[|I_1\cap I_2|<2\right]\nonumber\\
&=1-\frac{\binom{N-w}{w}+w\binom{N-w}{w-1}}{\binom{N}{w}}\nonumber\\
&=1-\left(1+\frac{w^2}{N-2w+1}\right)\frac{\binom{N-w}{w}}{\binom{N}{w}}\nonumber\\
&=1-\left(1+\frac{w^2}{N-2w+1}\right)\left(1-\frac{w^2}{N}-O\left(\frac{w^4}{N^2}\right)\right)\nonumber\\
&=O\left(\frac{w^4}{N^2}\right)\cdot\label{eq:delta2bnd}
\end{align}
Combining equations (\ref{eq:delta1bnd}) and (\ref{eq:delta2bnd}), we get the result.
\end{proof}

The above result shows how Lemma~\ref{lemma:Ibnd} can be used to obtain sharper bounds on the mutual information contained in a differentially private sketch in the information-scarce setting in comparison to Lemma~\ref{lemma:basicIbnd}. Theorem \ref{Thm:weaklb} gives a lower bound of $\Omega(\frac{N^2}{w^2})$ on the number of samples needed to learn the underlying clustering. Observe however that the dominant term in the above proof is the bound on $\Delta_1$ -- a more careful analysis of this leads to the following stronger bound:
\begin{theorem}
\label{Thm:stronglb} 
Under the scaling assumption $w=o(N^{1/3})$, and for $\epsilon<\ln(2)$, it holds that
\begin{equation*}
\mut(\bZ,S)=O\left(\frac{w}{N}\right)\cdot
\end{equation*}
and thus there exists a constant $c>0$ such that any cluster learning algorithm with $\epsilon$-local-DP is unreliable if the number of users satisfies 
%\begin{equation*}
$U < c\left(\frac{N^2}{w}\right).$
%\end{equation*}
\end{theorem}

The proof of Theorem \ref{Thm:stronglb} is much more technical than that of Theorem \ref{Thm:weaklb} -- we provide an outline below, and defer the complete proof to Appendix \ref{app:lbound}. 
\begin{proof}[Proof Outline]
Starting from Lemma~\ref{lemma:Ibnd}, we first perform a decomposition of the bound.  For any $S=s$, we establish:
\begin{align}
\label{eq:withDelta}
EE\left[2^{|I_1\cap I_2|}\II_{\{Z_1\equiv Z_2\}}-1\right] \leq \sum_{\ell=1}^N\EE\left[\II_{\{\ell \in I_1\cap I_2\}}(2*\II_{Z_1(\ell)=Z_2(\ell)}-1)\right]+O\left(\frac{w^4}{N^2}\right)
\end{align}
Under the scaling assumption $w=o(N^{1/3})$, the second term in the right-hand side of the above equation is $o(w/N)$, and we only need to establish that the first term in the right-hand side is $O(w/N)$. Using the notation $\PP(\ell \in I, Z(\ell)=\sigma|S=s)=\pi_{\ell,\sigma}$, we establish the following:
\begin{equation*}
\sum_{\ell=1}^N \EE \II_{\{\ell\in I_1\cap I_2\}}\left(2\II_{Z_1(\ell)=Z_2(\ell)}-1\right)= \sum_{\ell=1}^N\left(\pi_{\ell,0}-\pi_{\ell,1}\right)^2.
\end{equation*}
Now defining $p_{i,z}:=\PP(I=i,Z=z|S=s)$ we have $\pi_{\ell,\sigma}=\sum_{i,\sigma|\ell\in i,z(\ell)=\sigma}p_{i,z}$.  
%%Using Lemma~\ref{lemma:uncondition} 
We formulate the problem of upper-bounding the first term on the right-hand side of~(\ref{eq:withDelta}) as the following optimization problem:
\begin{equation*}
\begin{aligned}
& \underset{\{p_{i,z}\}_{(i,z)\in\mathcal{D}}}{\text{Maximize}}
&& \sum_{\ell=1}^N\left(\pi_{\ell,0}-\pi_{\ell,1}\right)^2 \\
& \text{Subject to}
&& \sum_{(i,z)\in\mathcal{D}}p_{i,z} = 1, \quad %\\
%&&& 
p_{i,z}D\in\left[1-\epsilon',1+\epsilon'\right].
\end{aligned}
\end{equation*}
Where $\epsilon'=\max(e^{\epsilon}-1,1-e^{-\epsilon})$, and the constraint is derived from the $\epsilon$-DP definition.
 
We first establish that the extremal points of the above convex set consist of the distributions $p^A_{i,z}$ indexed by the sets $A\subset \mathcal{D}$ of cardinality $D/2$, defined by $p^A_{i,z} = \frac{1+\epsilon'}{D}$ if $(i,z) \in A$ and $\frac{1-\epsilon'}{D}$ otherwise.  We then show that for each such $A$, $\sum_{\ell=1}^N\left(\pi^A_{\ell,0}-\pi^A_{\ell,1}\right)^2\le O(w/N)$,
where $\pi^A_{\ell,\sigma}=\sum_{i: \ell \in i}\sum_{z:z(\ell)=\sigma}p^A_{i,z}$.
\end{proof}

Significantly, however, the bound in Theorem \ref{Thm:stronglb} matches the performance of the MaxSense algorithm, which we present next, thereby showing that it is tight.

\section{Local-DP in the Information-Scarce Regime: The MaxSense Algorithm}
\label{sec:maxsense}

The Pairwise-Preference algorithm of Section~\ref{sec:rich}, although orderwise optimal in the information-rich regime, is highly suboptimal in the information-scarce setting. In particular, note that the probability that two randomly probed items have been rated by the user is $O(w^2/N^2)$ -- now, in order to obtain the same guarantees as in the information-rich regime (where we needed $\Theta(N\log(N))$ samples to learn the cluster labels), we now need $\Theta(N\log(N)\cdot N^2/w^2)$ users -- this however is polynomially larger than our lower bounds from Section~\ref{sec:LBscarce}. 

This suggests that in order to learn in an information-scarce regime, an algorithm needs to probe or `sense' a much larger set of items (intuitively, of the order of $\frac{N}{w}$) in order to hit the set of watched items with a non-vanishing probability. We now outline the MaxSense algorithm for cluster-learning in the information-scarce regime, which is based on this intuition. 

As with Pairwise Preference, MaxSense uses (privatized) $1$-bit sketches for learning -- however each sketch now aggregates ratings for several items. A query to user $u$ is formed by constructing a {\em random sensing vector} $H_u=(H_{ui})_{i\in[N]}$, whose entries $H_{ui}=1$ if item $i$ is being sensed, and $0$ otherwise. Each item $i$ is chosen for sensing (i.e. $H_{ui}$ is set to $1$) in an i.i.d. manner with probability $\theta/w$ (for some chosen constant $\theta>0$). User $u$ then constructs a private sketch $S^0_u$, which is the maximum of her ratings for items that are being sensed; as before, unrated items are given a rating of $0$. Formally, $S^0_u=\max_{i\in [N]}H_{ui}Z_{ui}$, where $Z_{ui}$ is $1$ if user $u$ rated item $i$ positively, else $0$. Finally, user $u$ outputs a privatized version $S_u$ of $S^0_u$. The sensing vector $H_u$ is assumed to be known publicly.

Based on the sketches $S_u$ and sensing vectors $H_u$, the algorithm then determines a per-item score given by $B_i=\sum_{u\in[U]}H_{un}S_u,\; n\in [N]$. Finally, it performs $k$-means clustering of these scores in $\setR$. The algorithm is formally specified in Figure \ref{alg:maxsense}.

%\begin{algorithm}[!hb]
\begin{figure}[!hb]
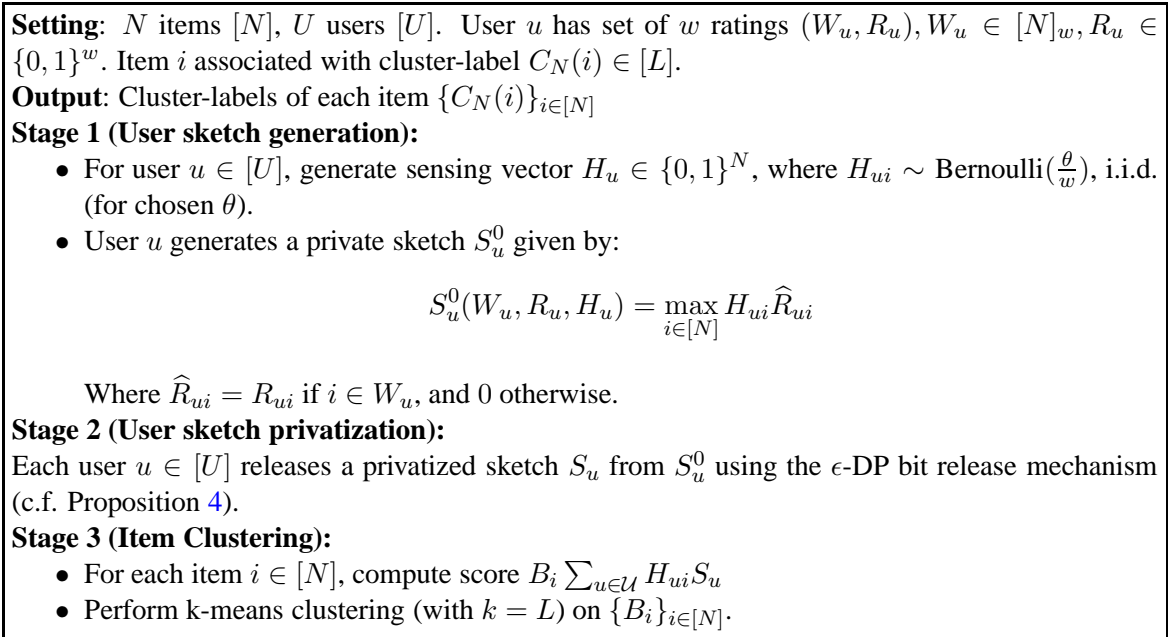

\textbf{Setting}: $N$ items $[N]$, $U$ users $[U]$. User $u$ has set of $w$ ratings $(W_u,R_u), W_u\in[N]_w, R_u\in\{0,1\}^w$. Item $i$ associated with cluster-label $C_N(i)\in[L]$.

\textbf{Output}: Cluster-labels of each item $\{C_N(i)\}_{i\in[N]}$\\
\textbf{Stage 1 (User sketch generation):} 
\begin{itemize}[nolistsep,noitemsep]
\item For user $u\in[U]$, generate sensing vector $H_u\in\{0,1\}^N$, where $H_{ui}\sim\mbox{Bernoulli}(\frac{\theta}{w})$, i.i.d. (for chosen $\theta$).
\item User $u$ generates a private sketch $S_u^0$ given by:
\begin{equation*}
%\label{eq:privsketch}
S_u^0(W_u,R_u,H_u)=\max_{i\in[N]}H_{ui}\widehat{R}_{ui}
\end{equation*}
Where $\widehat{R}_{ui}=R_{ui}$ if $i\in W_u$, and $0$ otherwise.
\end{itemize}

\textbf{Stage 2 (User sketch privatization):}\\ 
Each user $u\in[U]$ releases a privatized sketch $S_u$ from $S_u^0$ using the $\epsilon$-DP bit release mechanism (c.f. Proposition \ref{prop:1bitpriv}).

\textbf{Stage 3 (Item Clustering):}
\begin{itemize}[nolistsep,noitemsep]
\item For each item $i\in[N]$, compute score $B_i\sum_{u\in\mathcal{U}}H_{ui}S_u$
\item Perform k-means clustering (with $k=L$) on $\{B_i\}_{i\in[N]}$. 
\end{itemize}
\caption{The MaxSense Algorithm}
\label{alg:maxsense}
\end{figure}

\begin{theorem}
\label{Thm:maxsense}
The MaxSense algorithm satisfies $\epsilon$-local-DP. Further, let $\widehat{\epsilon} =\frac{2(e^{\epsilon}-1)}{(e^{\epsilon}+1)}$, and define: 
\begin{align*}
	\delta_{\min} =\min_{1\leq \ell< \ell'\leq L}\left|\sum_{k=1}^{K}\alpha_ke^{-\theta\sum_{\ell=1}^L\beta_{\ell}b_{k\ell}}(b_{k\ell}-b_{k\ell'})\right|
\end{align*}
(where the item sensing probability is $\theta/w$). Then for any $d>0$, there exists a constant $C>0$ such that the clustering is successful with  probability $1-N^{-d}$ if the number of users satisfies:
\begin{equation}
\label{eq:Uscale}
U\geq C\left(\frac{N^2\log N}{\widehat{\epsilon}^2\delta_{\min}^2w}\right).
\end{equation}
\end{theorem}

\noindent Before presenting the proof, we note that $\delta_{\min}$ encodes the required separability conditions for successful clustering. In particular, let $v_k=\sum_{\ell} \beta_{\ell}b_{k\ell}$ -- then it can be checked that $\delta_{min}$ is strictly positive for all $\theta$ (except on a set of measure $0$) provided the following holds:
\begin{equation*}
%\label{separ} 
\forall\ell\ne \ell'\in[L], \exists k\in[K] \hbox{ such that }\sum_{j:v_j=v_k} \alpha_j(b_{j\ell}-b_{j\ell'})\ne 0.
\end{equation*} 
Designing algorithms with similar performance under weaker separability remains an open problem.

\begin{proof}
We use $k(u)$ to denote the user-cluster of user $u$ and $l(j)$ to denote the cluster of item $j$.

\noindent\emph{Privacy:} For each user $u$, note that $H_u$ is independent of the data $(W_u,R_u)$. Next, given $H_u$, we have that $(W_u,R_u)\rightarrow S_u^0 \rightarrow S_u$ form a Markov chain, and hence it is sufficient (via the post-processing property) to prove that $S_u^0\rightarrow S_u$ satisfy $\epsilon$-differential privacy. This however is a direct consequence of using the $\epsilon$-DP bit release mechanism. 

\noindent\emph{Performance:} An overview of the proof of correctness of MaxSense is as follows: First, we show that for any item $j$, its count $B_j$ concentrates around $\overline{B_{l(j)}}$, the expected count for its corresponding cluster. Next, we compute the minimum separation $\Delta_{\min}$ between the expected counts for any two item-clusters. Finally, we show that under the given scaling of users, each item count $B_j$ is within a distance $\Delta_{\min}/5$ from $\overline{B_l(j)}$ w.h.p. This implies that any two items belonging to the same cluster are within a distance of $2\Delta_{\min}/5$, while two items of different clusters have a separation of at least $3\Delta_{\min}/5$, thereby ensuring successful clustering.  

First, let $p=\frac{\theta}{w}$ denote the sensing probability, and define:
\begin{equation*}
q_k^0:=\PP[S_u^0=0|k(u)=k]=\prod_{j\in[N]}\left(1-\frac{pw b_{kl(j)}}{N}\right),
\end{equation*}
i.e., $q_k^0$ is the probability that a user $u$ of cluster $k$ will have a (private) sketch $S_u^0$ equal to $0$. Then we have:
\begin{align}
\log q_k^0&=\sum_{j=1}^N\log\left(1-\frac{\theta b_{kj}}{N}\right)
=\sum_{l=1}^L\beta_lN\log\left(1-\frac{\theta b_{kl}}{N}\right)\nonumber\\
&=\sum_{l=1}^L\beta_lN\left(-\frac{\theta b_{kl}}{N}+\Theta\left(\frac{1}{N^2}\right)\right)\nonumber\\
&=-\theta\sum_{l=1}^L\beta_lb_{kl}+\Theta\left(\frac{1}{N}\right)
\label{eq:qk0}
\end{align}
Thus $\log q_k^0\geq -\theta+\Theta\left(\frac{1}{N}\right)$, from which we have:
\begin{equation*}
\frac{1}{e^\theta}\left(1+\Theta\left(\frac{1}{N}\right)\right)\leq q_k^0\leq 1,
\end{equation*}
\noindent Thus we see that for any user, the probability of the MaxSense sketch being $0$ is $\Theta(1)$. Intuitively, this means that each sketch has $>0$ bits of information. We define $q^0=\sum_{k=1}^Kq_k^0$ (i.e., the probability that a random user's sketch is $0$).

Next, for any item $i\in[N]$, consider the item-score $B_i=\sum_{u\in\mathcal{U}}H_{ui}S_i$. From the i.i.d sensing property and the $\epsilon$-DP bit release mechanism mechanism, we have:
\begin{align*}
\mathbb{E}[B_i]&=\sum_{u\in\mathcal{U}}\mathbb{E}[H_{ui}S_u]=\sum_{u=1}^{U}p\left[\frac{\mathbb{E}[1-S_u^0|H_{ui}=1]}{e^{\epsilon}+1}+\frac{e^{\epsilon}\mathbb{E}[S_u^0|H_{ui}=1]}{(e^{\epsilon}+1)}\right],
\end{align*}
Substituting $\widehat{\epsilon}=\frac{2(e^{\epsilon}-1)}{(e^{\epsilon}+1)}$, we can expand the expression for $\mathbb{E}[B_i]$ as follows:
\begin{align*}
\mathbb{E}[B_i]&=\sum_{u=1}^Up\left[\frac{1}{2}-\frac{\widehat{\epsilon}}{4}+\frac{\widehat{\epsilon}}{2}\mathbb{E}\left[\left.\mathbb{E}[S_u^0|H_u]\right|H_{ui}=1\right]\right]\\
&=\sum_{u=1}^Up\left[\frac{1}{2}-\frac{\widehat{\epsilon}}{4}+\frac{\widehat{\epsilon}}{2}\mathbb{E}\left[\left.1-\prod_{j\in[N]}\left(1-\frac{w b_{uj}H_{uj}}{N}\right)\right|H_{ui}=1\right]\right]\\
&=\sum_{u=1}^Up\left[\frac{1}{2}+\frac{\widehat{\epsilon}}{4}-\frac{\widehat{\epsilon}}{2}\left(1-\frac{w b_{ui}}{N}\right)\mathbb{E}\left[\prod_{j\neq i}\left(1-\frac{w b_{uj}H_{uj}}{N}\right)\right]\right]\\
&=\sum_{u=1}^Up\left[\frac{1}{2}+\frac{\widehat{\epsilon}}{4}-\frac{\widehat{\epsilon}}{2}\left(1-\frac{w b_{k(u)l(i)}}{N}\right)\prod_{j\neq i}\left(1-p+p\left(1-\frac{w b_{k(u)l(j)}}{N}\right)\right)\right]\\
&\mbox{(Using the i.i.d sensing properties of $H_{ui}$)}\\
&=\sum_{k=1}^K\alpha_kUp\left[\frac{1}{2}+\frac{\widehat{\epsilon}}{4}-\frac{\widehat{\epsilon}}{2}\left(1-\frac{w b_{kl(j)}}{N}\right)\left(1-\frac{pw b_{kl(i)}}{N}\right)^{-1}\prod_{j\in[N]}\left(1-\frac{pw b_{kl(j)}}{N}\right)\right]\\
&\mbox{(Grouping terms by user and item classes.)}
\end{align*}
Note that we have dropped the explicit dependence on the user index and retained only the user-cluster label. Similarly, we henceforth write $k$ and $l$ for $k(i), l(j)$ respectively, whenever it does not cause confusion in the notation. Thus we have:
\begin{align*}
\mathbb{E}[B_i]&=Up\left[\frac{1}{2}+\frac{\widehat{\epsilon}}{4}-\frac{\widehat{\epsilon}}{2}\sum_{k=1}^K\alpha_kq_k^0\left(1-\frac{w b_{kl(i)}}{N}\right)\left(1-\frac{pwb_{kl(i)}}{N}\right)^{-1}\right]\\
&=Up\left[\frac{1}{2}+\frac{\widehat{\epsilon}}{4}-\frac{\widehat{\epsilon}}{2}\sum_{k=1}^K\alpha_kq_k^0\left(1-\frac{\frac{w(1-p)}{N}b_{kl(i)}}{\left(1-\frac{pwb_{kl(i)}}{N}\right)}\right)\right]\\
%&\left(\mbox{Defining $\gamma_k:=\alpha_k\prod_{l=1}^L\left(1-\frac{pw b_{kl}}{N}\right)^{\beta_l N}$}\right)\\
&= Up\left[\frac{1}{2}+\frac{\widehat{\epsilon}}{4}-\frac{\widehat{\epsilon}}{2}\sum_{k=1}^K\alpha_kq_k^0\right] +Up\frac{(w-\theta)}{N}\frac{\widehat{\epsilon}}{2}\sum_{k=1}^K\frac{\alpha_kq_k^0b_{kl(i)}}{\left(1-\frac{\theta}{N}b_{kl(i)}\right)}
\end{align*}
Now, noting that $\EE[B_i]$ only depends on the class $l(i)$ of item $i$, we define $\overline{B_l}=\mathbb{E}[B_i|l(i)=l]$. Then we have:
\begin{equation*}
\overline{B_l}=Up\left[\frac{1}{2}+\frac{\widehat{\epsilon}}{4}-\frac{\widehat{\epsilon}}{2}q^0\right]
+Up\frac{(w-\theta)}{N}\frac{\widehat{\epsilon}}{2}\sum_{k=1}^K\frac{\alpha_kq_k^0b_{kl}}{\left(1-\frac{\theta b_{kl}}{N}\right)}\\
\end{equation*}
Recall $w=o(N)$, and $\widehat{\epsilon}<1$ -- hence, for sufficiently large $N$, we have that for all item classes $l\in[L]$:$\quad\overline{B_l}\leq Up$.

Next, given any two distinct item classes $l,m$, we define $\Delta_{lm}:=\mathbb{E}[|B_l-B_m|]$. Then we have:
\begin{align*}
\Delta_{lm}&\geq |\mathbb{E}[B_l-B_m]|\quad\mbox{(By Jensen's Inequality)}\\
 &= Up\frac{(w-\theta)}{N}\frac{\widehat{\epsilon}}{2}\left|\sum_{k=1}^K\frac{\alpha_kq_k^0b_{kl}}{\left(1-\frac{\theta b_{kl}}{N}\right)}-\sum_{k=1}^K\frac{\alpha_kq_k^0b_{km}}{\left(1-\frac{\theta b_{km}}{N}\right)}\right|\\
&\geq\frac{U\widehat{\epsilon}(c-c^2w^{-1})}{N}\delta_{lm},
\end{align*}
\noindent where we define (using equation \ref{eq:qk0}):
\begin{align*}
\delta_{lm}&:=\left|\sum_{k=1}^K\frac{\alpha_kq_k^0b_{kl}}{\left(1-\frac{\theta b_{kl}}{N}\right)}-\sum_{k=1}^K\frac{\alpha_kq_k^0b_{km}}{\left(1-\frac{\theta b_{km}}{N}\right)}\right|\\
&=\left|\sum_{k=1}^K\alpha_kq_k^0\frac{(b_{kl}-b_{km})}{\left(1-\frac{\theta b_{kl}}{N}\right)\left(1-\frac{\theta b_{km}}{N}\right)}\right|\\
&\geq\left|\sum_{k=1}^K\alpha_k(b_{kl}-b_{km})e^{-\theta\sum_{l=1}^L\beta_lb_{kl}}\right|\\
&\geq\delta_{\min}:=\min_{1\leq l<l'\leq L}\left|\sum_{k=1}^{K}\alpha_k(b_{kl}-b_{kl'})e^{-\theta\sum_{l=1}^L\beta_lb_{kl}}\right|.
\end{align*}
Let $\Delta_{\min}:=\min_{l,m\in[L]^2,l\neq m}\Delta_{lm}$. Now, for a given item $j$, a standard Chernoff bound (applicable since the sketches are independent and bounded) gives us that for any $a>0$:
\begin{equation*}
\mathbb{P}[|B_j-\overline{B_l(j)}|\geq a\overline{B_l(j)}]\leq 2\exp\left(-\frac{a^2}{3}\overline{B_l(j)}\right)
\end{equation*}
Choose $a=\frac{\Delta_{\min}}{5\overline{B_l(j)}}$. Then we have:
\begin{align*}
\mathbb{P}\left[|B_j-\overline{B_l(j)}|\geq \frac{\Delta_{\min}}{5}\right]&\leq 2\exp\left(-\frac{\Delta_{\min}^2}{75\overline{B_l(j)}}\right)\leq 2\exp\left(-\left(\frac{\Delta_{\min}^2}{75Up}\right)\right),
\end{align*}
and by taking union bound over all items, we have:
\begin{align*}
\mathbb{P}\Bigg[\sup_{j\in[N]}|B_j-\overline{B_l(j)}|\geq \frac{\Delta_{\min}}{5}\Bigg]
&\leq \exp\left(\log 2N-\frac{\Delta_{\min}^2}{75Up}\right)
\leq \exp\left(\log 2N-\frac{Uw\widehat{\epsilon}^2\delta_{\min}^2}{75N^2}\right),
\end{align*}
where we have substituted $p=\frac{c}{w}$. Now if we choose $U$ as:
\begin{equation*}
U=\left(\frac{75N^2(\log 2+(1+d)\log N)}{\widehat{\epsilon}^2\delta_{\min}^2w c}\right)=\Theta\left(\frac{N^2\log N}{\widehat{\epsilon}^2\delta_{\min}^2w c}\right),
\end{equation*}
then we have:
\begin{equation*}
\mathbb{P}\left[\sup_{j\in[N]}|B_j-\overline{B_l(j)}|\geq \frac{\Delta_{\min}}{5}\right]\leq \frac{1}{N^d},
\end{equation*}
Thus, if number of users scale as in (\ref{eq:Uscale}), then clustering is successful with probability $1-N^{-d}$.
\end{proof}

Theorem~\ref{Thm:maxsense} demonstrates that  MaxSense is sufficient to achieve optimal scaling in $N$ (up to logarithmic terms) under suitable separability condition. One problem is that MaxSense does not achieve the optimal `privacy trade-off', namely, a $\frac{1}{\epsilon}$ factor in required sample-complexity scaling. To correct this, we propose the \emph{Multi-MaxSense} algorithm, a generalization of Algorithm \ref{alg:maxsense}, wherein we ask multiple MaxSense queries to each user. 

In Multi-MaxSense, each \emph{query} now has an associated privacy parameter of $\frac{\epsilon}{Q}$, where $Q$ is the number of questions asked to a user -- thus, for each user, we still maintain $\epsilon$-local-DP via the composition property (Proposition \ref{prop:composition}). Independence between the answers is ensured as follows: first, for each user, we choose a random partition of $[N]$ into $\frac{1}{p}$ sets, each of size $Np$; we pick $Q$ of these and present them to the user. Next, each user calculates $Q$ sketches using these $Q$ sensing vectors, and reveals the privatized set of sketches (with each sketch revelation obeying $\frac{\epsilon}{Q}$-differential privacy. Finally, we compute and cluster the item-counts as before. Formally, the algorithm is specified in Figure \ref{alg:mmaxsense}. Now we have the following theorem.

\begin{figure}[!htb]
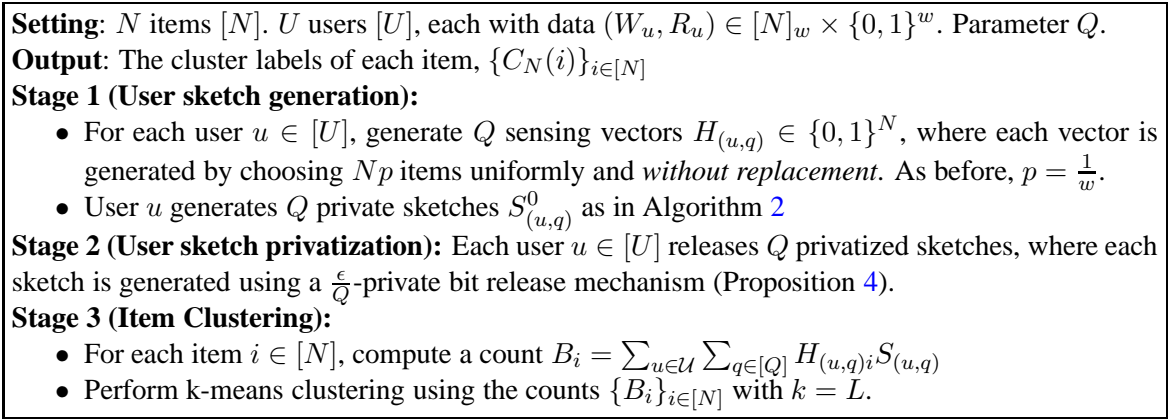

\textbf{Setting}: $N$ items $[N]$. $U$ users $[U]$, each with data $(W_u,R_u)\in[N]_w\times\{0,1\}^w$. Parameter $Q$.

\textbf{Output}: The cluster labels of each item, $\{C_N(i)\}_{i\in[N]}$

\textbf{Stage 1 (User sketch generation):} 
\begin{itemize}[nolistsep,noitemsep]
\item For each user $u\in[U]$, generate $Q$ sensing vectors $H_{(u,q)}\in\{0,1\}^N$, where each vector is generated by choosing $Np$ items uniformly and \emph{without replacement}. As before, $p=\frac{1}{w}$.
\item User $u$ generates $Q$ private sketches $S_{(u,q)}^0$ as in Algorithm \ref{alg:maxsense}
\end{itemize}

\textbf{Stage 2 (User sketch privatization):} Each user $u\in[U]$ releases $Q$ privatized sketches, where each sketch is generated using a $\frac{\epsilon}{Q}$-private bit release mechanism (Proposition \ref{prop:1bitpriv}).

\textbf{Stage 3 (Item Clustering):}
\begin{itemize}[nolistsep,noitemsep]
\item For each item $i\in[N]$, compute a count $B_i=\sum_{u\in\mathcal{U}}\sum_{q\in [Q]}H_{(u,q)i}S_{(u,q)}$
\item Perform k-means clustering using the counts $\{B_i\}_{i\in[N]}$ with $k=L$. 
\end{itemize}
\caption{The Multi-MaxSense Algorithm}
\label{alg:mmaxsense}
\end{figure}

\begin{theorem}
\label{Thm:mmaxsense}
The Multi-MaxSense algorithm satisfies $\epsilon$-local-DP. Further, suppose $Q=\lceil\epsilon\rceil$. Then for any $d>0$, there exists a constant $c$ such that if the number of users satisfies:
\begin{equation*}
%\label{eq:Uscale2}
U\geq c\left(\frac{N^2\log N}{\epsilon\delta_{\min}^2w}\right),
\end{equation*}
then the clustering is successful with probability $1-N^{-d}$.
\end{theorem}
\begin{proof}\\
\emph{Privacy:} Since each user reveals $Q$ bits, and each bit is privatized using a $\frac{\epsilon}{Q}$-DP mechanism, therefore for any user $u$, the $Q$ user sketches $\{S_{u,q}\}_{q=1}^{Q}$ and user data $(W_u,R_u)$ satisfy $\epsilon$-DP using the composition property (Proposition \ref{prop:composition}). The remaining proof for the privacy of the learning algorithm is as before, using the post-processing property.

\noindent\emph{Performance:} To show the improved scaling, observe that:
\begin{enumerate}[nolistsep,noitemsep]
\item Due to choice of sensing vectors, the probability of any probe for a item in any sensing vector (i.e., $H_{(u,q)i}$ for some $u\in[U],q\in[Q],i\in[N]$) being set to $1$ is $p$, i.i.d. 
\item Further, since the multiple sensing vectors given to a single user do not overlap, therefore the sketches $\{S_{u,q}\}_{u,q}$ are also independent.
\end{enumerate}
Hence, the analysis in Algorithm \ref{alg:maxsense} can be repeated with $U$ being replaced with $QU$ and $\epsilon$ being replaced with $\frac{\epsilon}{Q}$. Choosing $Q=\lceil\epsilon\rceil$  implies that we now have:
\begin{align*}
\widehat{\epsilon}&=\frac{2\left(\exp\left(\frac{\epsilon}{\lceil\epsilon\rceil}\right)-1\right)}{\exp\left(\frac{\epsilon}{\lceil\epsilon\rceil}\right)+1}\geq\frac{2(e-1)}{e+2}
\end{align*}
Substituting these in equation \ref{eq:Uscale}, we get the condition for correct clustering w.h.p as:
\begin{align*}
U &\geq c'\left(\frac{N^2\log N}{\epsilon\delta_{\min}^2w}\right).
\end{align*}
\end{proof}

\section{Lower Bounds under Adaptive Queries}
\label{ssec:adaptive}

The lower bounds of Section~\ref{sec:LBscarce} applied to non-adaptive learning, where queries to users are performed in parallel, without leveraging answers of users $1,\ldots,u-1$ when querying user $u$. We now extend these bounds to the adaptive setting, where we now assume that users are queried sequentially, and the query for the $t$-th user can be affected by the sketches $S_1^{t-a}:=\{S_1,\ldots,S_{t-1}\}$ released by the $t-1$ previous users. We now have the following sample-complexity lower bound:

\begin{theorem}
\label{thm:adaptive}
Assume $w=1$, and items are uniformly clustered into one of two clusters $\{0,1\}$ (i.e., $C_N(\cdot)$ is drawn uniformly at random from $\{0,1\}^N$). If users' responses satisfy $\epsilon$-local-DP, then the number of {\em adaptive} queries needed to learn the clustering $C_N(\cdot)$ is $\Omega(N\log N)$.
\end{theorem}
\noindent To prove this, we first need a generalization of Lemma~\ref{lemma:Ibnd}. As the proof is similar to Lemma~\ref{lemma:Ibnd}, we defer it to Appendix \ref{app:lbound}.
\begin{lemma} 
\label{lemma:Ibndgen}
Assume that under measure $\PP$, the set $I$ of items sampled by a user is independent of the type vector $Z$. Let $p_s(i,z):=\PP((I,Z_I)=(i,z)|S=s)$, and for any subset $j\subset[N]$ let  $p_j(z):=\PP(Z_j=z)$. Then the following holds:
\begin{align*}
\mut(Z;&S=s)\leq \sum_{i,z}\sum_{i',z'}p_s(i,z)p_s(i',z')\left[\II_{\{z\equiv z'\}}\frac{p_{i\cup i'}(z\cup z')}{p_i(z)p_{i'}(z')}-1\right].
\end{align*}
\end{lemma}
Note that Lemma~\ref{lemma:Ibndgen} does not make any assumption regarding the distribution of $\bZ$ or of the user-data $(I,Z)$ -- assuming $\bZ$ is uniformly drawn from $\{0,1\}^N$, we get back Lemma \ref{lemma:Ibnd}.

\begin{proof}[Proof Outline for Theorem~\ref{thm:adaptive}]
We consider the system when $T-1$ sketches have been released, and denote by $\PP^T$ the probability distribution conditionally on the previously observed sketch values. We want to develop bounds of the form $\mut(Z;S_1^T)\le \delta_T$, for a suitable function $\delta_T$. These bounds are obtained inductively as follows. First, we can expand and bound the mutual information as follows: 
\begin{align}
\label{eq:recursion}	
\mut(Z;S_1^T)&=\sum_{t=1}^T\mut(Z;S_t|S_1^{t-1})\\
&\leq\mut(Z;S_1^{T-1})+\sup_{s,s_1^{T-1}}\mut(Z;S_T=s|S_1^{T-1}=s_1^{T-1}),
\end{align}
where $\mut(U;V=v|W=w)$ is the mutual information between $U$ and $V=v$ conditioned on $W=w$. 

Now consider any sequence $\{s_1^{T-1},s\}$. We define $\mathbb{P}^T$ to be the probability measure conditional on $S_1^{T-1}=s_1^{T-1}$, and  $p^{T+1}(i,z)=\mathbb{P}^{T+1}[(I,Z)=(i,z)]$ and $p^{T+1}_i(z)=\mathbb{P}^{T+1}[Z(i)=z]$. Using Lemma~\ref{lemma:uncondition}, we have $p^{T+1}(i,z_i)=f_i(z_i) \frac{1}{N}p^{T}_i(z_i)$ where $f_i(z_i)$ belongs to $[1-\epsilon',1+\epsilon']$ where $\epsilon'=e^{\epsilon}-1$. Further, we can use Lemma \ref{lemma:Ibndgen} to obtain:
\begin{align*}
&\mut(Z;S_T=s|S_1^{T-1}=s_1^{T-1})\leq\\
&\sum_{i_1,z_1}\sum_{i_2,z_2}p^{T+1}(i_1,z_1)p^{T+1}(i_2,z_2)\left[\II_{z_1\equiv z_2}\frac{p^{T}_{i_1\cup i_2}(z_1\cup z_2)}{p^{T}_{i_1}(z_1)p^T_{i_2}(z_2)}-1\right]
\end{align*}
\noindent Combining and rearranging the above results, we get:
\begin{equation*}
\mut(Z;S_T=s|S_1^{T-1}=s_1^{T-1})\le \frac{1}{N^2}\mbox{Var}^T\left[\sum_{i=1}^N f_i(Z_i)\right],
\end{equation*}
where $\mbox{Var}^T$ is defined w.r.t. the $\mathbb{P}^T$ measure.

Next, let $\PP^0$ be the unconditional probability, under which the $Z_i$ are i.i.d. uniform on $\{0,1\}$, and define $F:=\sum_{i=1}^N f_i(Z_i)$. Note that under $\PP^0$, the random variable $F$ has variance $\leq2\epsilon'^2 N$ -- a similar bound for $\mbox{Var}^T\left[F\right]$ would yield an upper bound of order $1/N$ on $\mut(Z;S_1^T)$. This appears difficult, as the only information we have about $\PP^T$ is that the sketches $S_1^T$ are obtained via local-DP mechanisms. However, we show that we can control $\mbox{Var}^T\left[F\right]$ via controlling the mutual information leakage. The crux of our argument is encapsulated in the following technical lemma:
\begin{lemma}
\label{lemma:Htovar}
If $\mut(Z;S_1^T)\leq\delta$, then we have: 
\begin{equation*}
\mbox{Var}^T[F]\leq\mbox{Var}^0[F]\cdot\max\left\{20,10\delta\right\}.
\end{equation*}
\end{lemma}
Lemma \ref{lemma:Htovar} is of independent interest, and could enable extensions of our result (e.g. relaxing the assumption that $w=1$). For ease of exposition, we defer the proof to Appendix \ref{app:adapt}. We instead now show how to complete the proof of Theorem \ref{thm:adaptive} via an induction argument.

Assume $\mut(Z;S_1^T)\leq\delta_T$ -- Lemma \ref{lemma:Htovar} now gives us that $\mbox{Var}^T[F]\leq\mbox{Var}^0[F]\cdot\max\left\{20,10\delta\right\}$. Now, using equation \ref{eq:recursion}, we can recursively define $\delta_{T+1}$ as: 
\begin{align*}
\delta_{T+1}= \delta_{T}+\frac{1}{N^2}\mbox{Var}^0[F]\max\left\{20,10\delta_T\right\}.
\end{align*}
Recalling that $\mbox{Var}^0[F]\le 2N\epsilon'^2$, we can bound this as:
\begin{equation*}
\delta_{T+1}\leq \delta_{T}+\frac{C}{N}\max\{1,\delta_{T}\},
\end{equation*}
where $C:=40\epsilon'^2$ is independent of $N$. It then follows that:
$\delta_T= C T/N$ for $T\le N/C$, and for $T>N/C$ one has
\begin{equation*}
\delta_T\le \left(1+\frac{C}{N}\right)^{T}.
\end{equation*}
Thus for any fixed exponent $\alpha>0$, in order to learn $N^{\alpha}$ bits of information about the unknown labels $Z_1^N$, one needs at least $T= \alpha \log(N)/\log(1+C/N)=\Omega(N\log N)$ samples.
\end{proof}	
	
We leave it as a topic for further research to establish how sharp this lower bound is. In particular, if it can be tightened to a lower bound of $\Omega(N^2)$ and further extended to $\Omega(N^2/w)$ for $w\ne 1$, this would imply that MaxSense is optimal even when one can use adaptive queries. If on the other hand there is a gap between non-adaptive and adaptive complexities, then this implies that schemes superior to MaxSense in the adaptive case have yet to be identified.

\section{Conclusion}
\label{sec:extn}

We have initiated a study in the design of recommender systems under local-DP constraints. We have provided lower bounds on the sample-complexity in both information-rich and information-scarce regime, quantifying the effect of limited information on private learning. Further, we showed tightness of these results by designing the MaxSense algorithm, which recovers the item clustering under privacy constraints with optimal sample-complexity. The lower bound techniques naturally extend to cover model selection for more general (finite) hypothesis classes, while $1$-bit sketches appear appropriate for designing efficient algorithms for the same. Development of such algorithms and analysis of matching lower bounds by leveraging and extending the techniques we introduce seem promising future research directions.

%\pagebreak

% Acknowledgments---Will not appear in anonymized version
%\acks{We thank a bunch of people.} 

%\bibliographystyle{abbrv}

%\bibliography{priv_refs}

\begin{thebibliography}{25}
\providecommand{\natexlab}[1]{#1}
\providecommand{\url}[1]{\texttt{#1}}
\expandafter\ifx\csname urlstyle\endcsname\relax
  \providecommand{\doi}[1]{doi: #1}\else
  \providecommand{\doi}{doi: \begingroup \urlstyle{rm}\Url}\fi

\bibitem[Alvim et~al.(2011)Alvim, Andr{\'e}s, Chatzikokolakis, and
  Palamidessi]{Renyi}
M{\'a}rio~S. Alvim, Miguel~E. Andr{\'e}s, Konstantinos Chatzikokolakis, and
  Catuscia Palamidessi.
\newblock {Quantitative Information Flow and Applications to Differential
  Privacy}.
\newblock In \emph{FOSAD}, 2011.

\bibitem[Beimel et~al.(2010)Beimel, Kasiviswanathan, and Nissim]{beimel}
Amos Beimel, Shiva~Prasad Kasiviswanathan, and Kobbi Nissim.
\newblock {Bounds on the Sample Complexity for Private Learning and Private
  Data Release}.
\newblock In \emph{TCC 2010}, 2010.

\bibitem[Blum et~al.(2005)Blum, Dwork, McSherry, and Nissim]{Sulq}
Avrim Blum, Cynthia Dwork, Frank McSherry, and Kobbi Nissim.
\newblock {Practical privacy: the SuLQ framework}.
\newblock In \emph{ACM SIGMOD-SIGACT-SIGART PODS '05}, 2005.

\bibitem[Blum et~al.(2008)Blum, Ligett, and Roth]{BLR}
Avrim Blum, Katrina Ligett, and Aaron Roth.
\newblock {A Learning Theory Approach to Non-interactive Database Privacy}.
\newblock In \emph{STOC}, 2008.

\bibitem[Calandrino et~al.(2011)Calandrino, Kilzer, Narayanan, Felten, and
  Shmatikov]{PrivAttack2}
Joseph~A. Calandrino, Ann Kilzer, Arvind Narayanan, Edward~W. Felten, and
  Vitaly Shmatikov.
\newblock {"You Might Also Like:" Privacy Risks of Collaborative Filtering}.
\newblock In \emph{IEEE Symposium on Security and Privacy}, 2011.

\bibitem[Chaudhuri et~al.(2011)Chaudhuri, Monteleoni, and Sarwate]{PrivERM}
Kamalika Chaudhuri, Claire Monteleoni, and Anand~D. Sarwate.
\newblock {Differentially Private Empirical Risk Minimization}.
\newblock \emph{Journal of Machine Learning Research}, 2011.

\bibitem[Cover and Thomas(2006)]{CoverThomas}
Thomas~M. Cover and Joy~A. Thomas.
\newblock \emph{{Elements of information theory (2. ed.)}}.
\newblock Wiley, 2006.

\bibitem[Duchi et~al.(2013)Duchi, Jordan, and Wainwright]{Duchi2013}
John~C Duchi, Michael~I Jordan, and Martin~J Wainwright.
\newblock Local privacy and statistical minimax rates.
\newblock In \emph{FOCS}, 2013.

\bibitem[Dwork(2006)]{Dwork06}
Cynthia Dwork.
\newblock {Differential Privacy}.
\newblock In \emph{ICALP}, 2006.

\bibitem[Dwork and Lei(2009)]{Robust}
Cynthia Dwork and Jing Lei.
\newblock {Differential privacy and robust statistics}.
\newblock In \emph{STOC}, 2009.

\bibitem[Dwork et~al.(2006)Dwork, McSherry, Nissim, and Smith]{DworkMNS}
Cynthia Dwork, Frank McSherry, Kobbi Nissim, and Adam Smith.
\newblock {Calibrating Noise to Sensitivity in Private Data Analysis}.
\newblock In \emph{TCC}, 2006.

\bibitem[Dwork et~al.(2010{\natexlab{a}})Dwork, Naor, Pitassi, and
  Rothblum]{Contobs}
Cynthia Dwork, Moni Naor, Toniann Pitassi, and Guy~N. Rothblum.
\newblock {Differential Privacy under Continual Observation}.
\newblock In \emph{STOC}, 2010{\natexlab{a}}.

\bibitem[Dwork et~al.(2010{\natexlab{b}})Dwork, Naor, Pitassi, Rothblum, and
  Yekhanin]{Panpriv}
Cynthia Dwork, Moni Naor, Toniann Pitassi, Guy~N. Rothblum, and Sergey
  Yekhanin.
\newblock {Pan-Private Streaming Algorithms}.
\newblock In \emph{ICS}, 2010{\natexlab{b}}.

\bibitem[Gupta et~al.(2011)Gupta, Hardt, Roth, and Ullman]{SQbarrier}
Anupam Gupta, Moritz Hardt, Aaron Roth, and Jonathan Ullman.
\newblock {Privately Releasing Conjunctions and the Statistical Query Barrier}.
\newblock In \emph{STOC}, 2011.

\bibitem[Holland et~al.(1983)Holland, Laskey, and Leinhardt]{StochBlock}
Paul~W. Holland, Kathryn~B. Laskey, and Samuel Leinhardt.
\newblock Stochastic blockmodels: First steps.
\newblock \emph{Social Networks}, 5\penalty0 (2), 1983.

\bibitem[Kasiviswanathan et~al.(2008)Kasiviswanathan, Lee, Nissim,
  Raskhodnikova, and Smith]{Kasietal}
Shiva Kasiviswanathan, Homin~K. Lee, Kobbi Nissim, Sofya Raskhodnikova, and
  Adam Smith.
\newblock {What Can We Learn Privately?}
\newblock In \emph{Proc. {IEEE} {FOCS}}, 2008.

\bibitem[Keshavan et~al.(2010)Keshavan, Montanari, and Oh]{KMOnoise}
Raghunandan~H. Keshavan, Andrea Montanari, and Sewoong Oh.
\newblock {Matrix Completion from Noisy Entries}.
\newblock \emph{Journal of Machine Learning Research}, 11, 2010.

\bibitem[McGregor et~al.(2010)McGregor, Mironov, Pitassi, Reingold, Talwar, and
  Vadhan]{2party}
Andrew McGregor, Ilya Mironov, Toniann Pitassi, Omer Reingold, Kunal Talwar,
  and Salil Vadhan.
\newblock {The Limits of Two-Party Differential Privacy}.
\newblock In \emph{FOCS}, 2010.

\bibitem[McSherry and Mironov(2009)]{McsMiro}
Frank McSherry and Ilya Mironov.
\newblock {Differentially Private Recommender Systems: Building Privacy into
  the Netflix Prize Contenders}.
\newblock In \emph{KDD}, 2009.

\bibitem[Narayanan and Shmatikov(2006)]{PrivAttack1}
Arvind Narayanan and Vitaly Shmatikov.
\newblock {How To Break Anonymity of the Netflix Prize Dataset}.
\newblock \emph{CoRR}, abs/cs/0610105, 2006.

\bibitem[Santhanam and Wainwright(2009)]{SantWain}
Narayana~P. Santhanam and Martin~J. Wainwright.
\newblock {Information-theoretic Limits of Selecting Binary Graphical Models in
  High Dimensions}.
\newblock \emph{CoRR}, abs/0905.2639, 2009.

\bibitem[Smith(2011)]{Smith}
Adam Smith.
\newblock {Privacy-preserving Statistical Estimation with Optimal Convergence
  Rates}.
\newblock In \emph{Proceedings of the 43rd annual ACM symposium on Theory of
  computing}, STOC '11, 2011.

\bibitem[{Tomozei} and {Massouli{\'e}}(2011)]{TomoMass}
Dan-Cristian. {Tomozei} and Laurent {Massouli{\'e}}.
\newblock {Distributed User Profiling via Spectral Methods}.
\newblock \emph{ArXiv e-prints}, 1109.3318, September 2011.

\bibitem[Wainwright(2009)]{Wain09}
Martin~J. Wainwright.
\newblock {Information-theoretic Limits on Sparsity Recovery in the
  High-dimensional and Noisy Setting}.
\newblock \emph{IEEE Trans. Inf. Theor.}, 2009.

\bibitem[Warner(1965)]{Warner}
Stanley~L. Warner.
\newblock {Randomized Response: A Survey Technique for Eliminating Evasive
  Answer Bias}.
\newblock \emph{Journal of the American Statistical Association}, 1965.

\end{thebibliography}

\pagebreak

\appendix

\section{Lower Bounds: Private Learning with Distortion}
\label{app:misc}

The item clustering problem fits in a more general framework of model selection from finite hypothesis-classes, with local-DP constraints: we consider a hypothesis class $\mathcal{H}, |\mathcal{H}|=M$, indexed by $[M]$. Given a hypothesis $Z$, samples $\mathbf{X}_1^U$ are drawn in an i.i.d. manner according to some distribution $P_{\mathcal{H}}(Z)$ (in our case, $u\in[U]$ corresponds to a user, and $X_u$ the ratings drawn according to the statistical model in Section~\ref{ssec:model}. $P_{\mathcal{H}}(Z)$ thus includes both the sampling of items by a user, as well as the ratings given for the sampled items). Let $\widehat{\mathbf{X}}_1^U$ be a privatized version of this data, where for each $u\in[U]$, the output $\widehat{X_u}$ is $\epsilon$-differentially private with respect to the data $X_u$ (by local-DP). Note here that $X_u$ and $\widehat{X}_u$ need not belong to the same space (for example, in the case of the Multi-MaxSense algorithm, $X_u$ is a subset of items and their ratings, while $\widehat{X}_u$ is the collection of privatized responses to the multiple MaxSense queries). Note also that the probability transition kernel $P_{\mathcal{H}}$ can be known to the algorithm (although the exact model $Z$ is unknown). Finally the learning algorithm infers the underlying model from the privatized samples. We can represent this as the Markov chain:
\begin{equation*}
Z\in\mathcal{H}\xrightarrow{\mbox{Sampling}} \mathbf{X}_1^U\xrightarrow{\mbox{Privatization}}\widehat{\mathbf{X}}_1^U\xrightarrow[\mbox{Selection}]{\mbox{Model}}\widehat{Z}
\end{equation*}
In Section \ref{sec:rich}, we considered an algorithm to be successful only if $\widehat{Z}=Z$, i.e., the model is identified perfectly. A natural relaxation of this is in terms of a distortion metric, as follows: given a distance function $d:\mathcal{Z}\times\mathcal{Z}\rightarrow\mathcal{R}_+$, we say the learner is successful if, for a given $d>0$, we have:
\begin{equation*}
d(Z,\widehat{Z})\leq d.
\end{equation*}
For any $h\in\mathcal{H}$, we define the set $B_d(h)\triangleq\{h'\in\mathcal{H}|d(h,h')\leq d\}$. Further, we define $M_d=\max_{h\in\mathcal{H}}|B_d(h)|$ to be the largest size of such a set. Finally, given a distribution for $Z$, we define the average error probability $P_e$ for a learning algorithm for the hypothesis class $\mathcal{H}$ as:
\begin{equation*}
P_e=\PP\left[d(\widehat{Z},Z)>d\right]\, .
\end{equation*}
Then we have the following bound on $P_e$:
\begin{lemma} \textbf{(Generalized Fano's Inequality)}
\label{applemma:fano}
Given a hypothesis $Z$ drawn uniformly from $\mathcal{H}$, for any learning algorithm, the average error probability satisfies:
\begin{equation*}
P_e\geq1- \frac{I(Z;\widehat{\mathbf{X}}_1^U)+1}{\log M-\log M_d}
 \, .
\end{equation*}
\end{lemma}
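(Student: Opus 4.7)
The plan is to follow the classical entropy-splitting argument behind Fano's inequality, but with the twist that the estimator $\widehat{Z}$ is considered ``correct'' on a whole distortion ball $B_d(\widehat{Z})$ rather than on the single point $\widehat{Z}$. First I would introduce the 0/1 error indicator $E\triangleq\II_{\{d(Z,\widehat{Z})>d\}}$, so that $\PP[E=1]=P_e$, and then decompose the conditional entropy as $H(Z\mid\widehat{Z})=H(E\mid\widehat{Z})+H(Z\mid E,\widehat{Z})$.

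Next I would bound each piece. Since $E$ is binary, $H(E\mid\widehat{Z})\le H(E)\le 1$. For the second term I split on the value of $E$: conditional on $E=0$ the true hypothesis $Z$ lies in $B_d(\widehat{Z})$, which by definition has at most $M_d$ elements, so $H(Z\mid E=0,\widehat{Z})\le\log M_d$; conditional on $E=1$ we only have the trivial bound $H(Z\mid E=1,\widehat{Z})\le\log M$. Combining,
\begin{equation*}
H(Z\mid\widehat{Z})\le 1+(1-P_e)\log M_d+P_e\log M.
\end{equation*}

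Then I use $H(Z)=\log M$ (uniform prior) together with $I(Z;\widehat{Z})=H(Z)-H(Z\mid\widehat{Z})$ to obtain, after rearranging,
\begin{equation*}
(1-P_e)\bigl(\log M-\log M_d\bigr)\le I(Z;\widehat{Z})+1.
\end{equation*}
Solving for $P_e$ yields the stated inequality, with $I(Z;\widehat{Z})$ in the numerator. Finally I replace $I(Z;\widehat{Z})$ by $I(Z;\widehat{\mathbf{X}}_1^U)$ using the data-processing inequality (Proposition~\ref{prop:DPI}) on the Markov chain $Z\to\widehat{\mathbf{X}}_1^U\to\widehat{Z}$, which only enlarges the right-hand side and gives the claimed bound.

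No step is genuinely difficult here; the only subtlety is making sure the ball-size $M_d$ is used tightly in the $E=0$ case and that one does not attempt to sharpen the $E=1$ case (which would require assumptions on the geometry of $(\mathcal{H},d)$). The data-processing step is routine once one recognizes that the estimator is a (possibly randomized) function of $\widehat{\mathbf{X}}_1^U$ alone. Thus the proof reduces to a short chain of standard entropy manipulations plus one application of data processing.
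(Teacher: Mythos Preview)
Your proposal is correct and follows essentially the same approach as the paper: introduce the error indicator $E$, use the chain rule $H(Z\mid\widehat{Z})=H(E\mid\widehat{Z})+H(Z\mid E,\widehat{Z})$, bound the two pieces by $1$, $\log M_d$ and $\log M$ respectively, and close with the data-processing inequality. The only cosmetic difference is that the paper invokes data processing at the start (bounding $I(Z;\widehat{\mathbf{X}}_1^U)\ge I(Z;\widehat{Z})$ first) while you apply it at the end; the entropy manipulations are otherwise identical.
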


Lemma \ref{applemma:fano} is standard in deriving lower bounds for model-selection with distortion constraints -- for example, refer \cite{SantWain}. We present the proof for the sake of completeness:

\begin{proof}
First, we define an error indicator $E$ as:
\begin{equation*}
E=
\begin{dcases*} 
1 & : $d(Z,\widehat{Z})> d$\\
0 & : otherwise
\end{dcases*},
\end{equation*}
and hence $P_e=\mathbb{P}[E=1]$. Recall that the entropy is given by $H(x)=-x\log (x)-(1-x)\log (1-x)$. Now we have:
\begin{align*}
I(Z;\widehat{\mathbf{X}}_1^U)&\geq I(Z;\widehat{Z})\quad\mbox{(By the Data Processing Inequality)}\\
&=  H(Z)-H(Z|\widehat{Z})\\
&\geq \log M-H(Z|\widehat{Z},E)-H(E|\widehat{Z}),
\end{align*}
where the last inequality follows from  basic information inequalities, and the fact that $Z$ is uniform over $\mathcal{H}\equiv[M]$. Let us denote $\overline{P_e}=1-P_e$. Expanding the RHS, we have:
\begin{align*}
I(Z;\widehat{\mathbf{X}}_1^U)&\geq\log M-P_eH(Z|\widehat{Z},E=1)-\overline{P_e}H(Z|\widehat{Z},E=0)-1\\
&\hspace{0.5cm}\mbox{(Since $H(P_e)\geq H(E|\widehat{Z})$ and $H(P_e)\leq 1$)}\\
&\geq \overline{P_e}(\log M-H(Z|\widehat{Z},E=0))-1 
\quad\mbox{(Since $H(Z|\widehat{Z},E=1)\leq\log M$)}\\
&\geq \overline{P_e}\left(\log M-\log M_d\right)-1
\quad\mbox{(Since $H(Z|\widehat{Z},E=0)\leq\log|B_d(\widehat{Z})|\leq\log M_d$)}
%\Rightarrow U\cdot I(Z;\widehat{X}_u)&\geq (1-P_e)\left(\log M-\log M_d\right)-H(P_e)\\
%&\hspace{0.5cm}\mbox{(Since users have i.i.d. ratings)}
\end{align*}
Rearranging, we have:
\begin{align*}
%\label{eq:errbnd}
P_e&\geq1- \frac{I(Z;\widehat{\mathbf{X}}_1^U)+1}{\log M-\log M_d}
\end{align*}
\end{proof}
\noindent We now have two immediate corollaries of this lemma. First, we consider the non-adaptive learning case, i.e., where the data of each user $\widehat{X_u}$ is obtained in an i.i.d manner. Then we have:
\begin{corollary}
\label{corr:nonad}
Given a hypothesis $Z$ drawn uniformly from $\mathcal{H}$, for any non-adaptive learning algorithm, the number of users satisfies:
\begin{equation*}
P_e\geq1-\left(\frac{UI(Z;\widehat{X}_u)+1}{\log M-\log M_d}\right) \, .
\end{equation*}
\end{corollary}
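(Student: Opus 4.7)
The plan is to apply the Generalized Fano's Inequality (Lemma~\ref{applemma:fano}) and then reduce the multi-user mutual information $I(Z;\widehat{\mathbf{X}}_1^U)$ to $U$ times a per-user mutual information by exploiting the non-adaptive, identically distributed structure of the queries.

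First I would apply the chain rule for mutual information to write
$$I(Z;\widehat{\mathbf{X}}_1^U)=\sum_{u=1}^{U}I\!\left(Z;\widehat{X}_u\,\big|\,\widehat{\mathbf{X}}_1^{u-1}\right).$$
Then I would bound each summand by its unconditional counterpart. The key observation is that in the non-adaptive setting, the randomized query issued to user $u$ (and consequently the privatization kernel producing $\widehat{X}_u$ from $X_u$) does not depend on $\widehat{\mathbf{X}}_1^{u-1}$, and $X_u$ is drawn i.i.d.\ conditional on $Z$. These two facts together imply the Markov chain $\widehat{\mathbf{X}}_1^{u-1}\to Z\to \widehat{X}_u$, so $H(\widehat{X}_u\mid Z,\widehat{\mathbf{X}}_1^{u-1})=H(\widehat{X}_u\mid Z)$. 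Combining with the standard inequality $H(\widehat{X}_u\mid \widehat{\mathbf{X}}_1^{u-1})\le H(\widehat{X}_u)$ (conditioning reduces entropy) yields $I(Z;\widehat{X}_u\mid\widehat{\mathbf{X}}_1^{u-1})\le I(Z;\widehat{X}_u)$.

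Since the non-adaptive protocol applies the same query distribution to every user and the $X_u$'s are i.i.d.\ given $Z$, the terms $I(Z;\widehat{X}_u)$ do not depend on $u$, which legitimizes the notation in the corollary's statement. Summing the per-user bounds gives $I(Z;\widehat{\mathbf{X}}_1^U)\le U\cdot I(Z;\widehat{X}_u)$. Substituting this estimate into the bound of Lemma~\ref{applemma:fano} produces the claimed inequality.

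No substantial obstacle is anticipated: the proof is essentially a textbook chain-rule argument combined with the conditional independence implied by non-adaptivity. The only delicate point is to make the non-adaptivity hypothesis precise enough to justify dropping the conditioning on past privatized responses; once that Markov structure is in hand, the conclusion is immediate from Lemma~\ref{applemma:fano}.
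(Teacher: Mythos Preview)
Your proposal is correct and matches the paper's intent: the paper states this result as an ``immediate corollary'' of Lemma~\ref{applemma:fano} under the assumption that the $\widehat{X}_u$ are obtained in an i.i.d.\ manner, without spelling out the chain-rule/conditional-independence step you give. Your argument is exactly the standard justification the paper has in mind, so there is nothing to add.
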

\noindent Next, using Lemma \ref{lemma:basicIbnd}, we get a bound on the sample complexity of learning under local-DP. 
\begin{corollary}
Given a hypothesis $Z$ drawn uniformly from $\mathcal{H}$, for any learning algorithm on $U$ privatized samples, each obtained via $\epsilon$-local-DP, the average error probability satisfies:
\begin{equation*}
P_e\geq1-\frac{1}{\ln 2}\left(\frac{U\epsilon +1}{\log M-\log M_d}\right)\,.
\end{equation*}
\end{corollary}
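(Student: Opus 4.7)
The plan is to combine Lemma \ref{applemma:fano} (generalized Fano) with Lemma \ref{lemma:basicIbnd} (per-release DP mutual-information bound) via the chain rule for mutual information. Fano already delivers the bound $P_e \ge 1 - (\mathcal{I}(Z;\widehat{\mathbf{X}}_1^U)+1)/(\log M - \log M_d)$, so the only work left is to show $\mathcal{I}(Z;\widehat{\mathbf{X}}_1^U) \le U\epsilon\log e$ and then perform the (slightly lossy) rearrangement into the form stated in the corollary.

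First I would expand the joint mutual information by the chain rule,
\begin{equation*}
\mathcal{I}(Z;\widehat{\mathbf{X}}_1^U) = \sum_{u=1}^{U} \mathcal{I}\bigl(Z;\widehat{X}_u \,\big|\, \widehat{\mathbf{X}}_1^{u-1}\bigr).
\end{equation*}
For each summand I would invoke the Markov chain $Z \to X_u \to \widehat{X}_u$, which holds conditionally on $\widehat{\mathbf{X}}_1^{u-1}$ because under local-DP user $u$ privatizes $X_u$ using fresh, independent internal randomness, so $\widehat{X}_u$ is conditionally independent of $(Z,\widehat{\mathbf{X}}_1^{u-1})$ given $X_u$. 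The conditional data-processing inequality then yields $\mathcal{I}(Z;\widehat{X}_u \mid \widehat{\mathbf{X}}_1^{u-1}) \le \mathcal{I}(X_u;\widehat{X}_u \mid \widehat{\mathbf{X}}_1^{u-1})$. Applying Lemma \ref{lemma:basicIbnd} with $\widehat{\mathbf{X}}_1^{u-1}$ playing the role of the side information caps each conditional term by $\epsilon \log e = \epsilon/\ln 2$; this step is legitimate because the DP ratio in (\ref{eq:DP}) holds pointwise in the internal randomness of the $u$-th release and therefore survives conditioning on any external variable (including the earlier privatized outputs). Summing over $u$ gives $\mathcal{I}(Z;\widehat{\mathbf{X}}_1^U) \le U\epsilon/\ln 2$.

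Plugging this into Lemma \ref{applemma:fano} yields
\begin{equation*}
P_e \;\ge\; 1 - \frac{U\epsilon/\ln 2 + 1}{\log M - \log M_d} \;\ge\; 1 - \frac{1}{\ln 2}\cdot\frac{U\epsilon + 1}{\log M - \log M_d},
\end{equation*}
where the second inequality uses $1/\ln 2 > 1$ to absorb the additive constant into the looser form stated in the corollary. This is exactly the claimed bound. There is no substantive obstacle: the result is essentially a one-line packaging of Lemmas \ref{lemma:basicIbnd} and \ref{applemma:fano}. The single point requiring care is checking that the local-DP guarantee is preserved under conditioning on the earlier outputs $\widehat{\mathbf{X}}_1^{u-1}$, so that Lemma \ref{lemma:basicIbnd} applies in its conditional form; this is what makes the corollary cover adaptive query schemes as well as non-adaptive ones, matching the generality of its phrasing.
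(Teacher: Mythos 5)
Your proof is correct and follows the route the paper intends: it applies the generalized Fano bound (Lemma~\ref{applemma:fano}) and then controls $\MI(Z;\widehat{\mathbf{X}}_1^U)$ via the chain rule, the conditional data-processing inequality, and the per-release bound $\MI(X_u;\widehat{X}_u\mid \widehat{\mathbf{X}}_1^{u-1})\le \epsilon\log e$ from Lemma~\ref{lemma:basicIbnd} — exactly the decomposition the paper itself uses in the proof sketch of Theorem~\ref{Thm:basiclb}. You also correctly identify that the paper's stated form is a deliberate loosening of the sharper bound $P_e\ge 1-(U\epsilon/\ln 2+1)/(\log M-\log M_d)$, obtained by using $1\le 1/\ln 2$ to pull the $1/\ln 2$ factor over the additive $+1$; and you rightly flag the conditional applicability of Lemma~\ref{lemma:basicIbnd} with $\widehat{\mathbf{X}}_1^{u-1}$ as side information, which is what lets the bound cover adaptive query schemes.
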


Note that these results do not imply that we are assuming a prior on the hypothesis class for our algorithms; rather, the lower bound can be viewed as a probabilistic argument that shows that below a certain sample complexity, any learner fails to distinguish between a large fraction of all possible models. 
%In this light, a stronger restatement of the above result is: if $U=o\left(\frac{\log M-\log M_d}{\mathcal{I}(Z;\widehat{X_u})}\right)$, then $P_e\rightarrow 0$ and hence any learning algorithm fails to learn the underlying hypothesis even up to a distortion of $d$ {\it for almost all models} in the hypothesis class. 

Returning to our problem of learning item clusters, we note that $M=\frac{K^N}{K!}$ in that case. Further, by choosing $d$ as the edit distance (Hamming distance) between two clusterings of items (i.e., for two clusterings $C_N$ and $C_N'$, $d(C_N,C_N')$ is the the number of items that are mapped to different clusters in the two clusterings), we get that:
\begin{align*}
M_d&=\frac{1}{K!}\sum_{i=0}^d\binom{N}{i}(K-1)^i\\
&=\frac{K^N}{K!}\PP\left[\mbox{Binomial}(N,1/K)\geq N-d\right]\\
&\leq\frac{K^N}{K!}\exp\left(\frac{-NK(1-\frac{d}{N}-\frac{1}{K})^2}{3}\right)
\end{align*}
Now, combining the above results, we obtain a more general version of Theorem \ref{Thm:basiclb}. 
\begin{theorem} 
%\label{Thm:basiclb} 
Suppose the underlying clustering $C_N(\cdot):[M]\rightarrow[K]$ is drawn uniformly at random from $\{0,1\}^N$. Further, for a given tolerance $d>0$ and error threshold $p_{\max}$, we define a learning algorithm to be unreliable for the hypothesis class $\mathcal{H}$ if:
$$\max_{h\in[M]}\PP\left[d(\widehat{Z},Z)> d\right]>p_{\max}.$$
Then any learning algorithm that obeys $\epsilon$-local-DP is unreliable if the number of queries $U$ satisfies:
\begin{equation*}
U<(1-p_{\max})\left(\frac{NK(1-\frac{d}{N}-\frac{1}{K})^2}{3\epsilon}\right)\,.
\end{equation*}
\end{theorem}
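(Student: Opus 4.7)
My plan is to combine the generalized Fano inequality (Lemma~\ref{applemma:fano}) with the per-sample mutual information bound under local-DP (Lemma~\ref{lemma:basicIbnd}) and the Chernoff-type estimate for $M_d$ recorded just before the theorem statement, and then pass from an average-case error bound to the worst-case (max over $h$) unreliability condition using the assumption that $C_N$ is uniform on $\mathcal{H}$.

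First, I would set up the Markov chain $C_N \to \mathbf{X}_1^U \to \widehat{\mathbf{X}}_1^U \to \widehat{C}_N$, taking $Z = C_N$ uniform over $\mathcal{H}$ so that $M = |\mathcal{H}| = K^N/K!$. Applying the chain rule together with the data-processing inequality gives $I(C_N; \widehat{\mathbf{X}}_1^U) \leq \sum_{u=1}^U I(X_u; \widehat{X}_u \mid \widehat{\mathbf{X}}_1^{u-1})$. Since each $\widehat{X}_u$ is $\epsilon$-DP with respect to $X_u$ regardless of the side information $\widehat{\mathbf{X}}_1^{u-1}$, Lemma~\ref{lemma:basicIbnd} bounds each conditional term by $\epsilon\log e$, yielding $I(C_N; \widehat{\mathbf{X}}_1^U) \leq U \epsilon \log e$.

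Next, using $M = K^N/K!$ and the bound $M_d \leq (K^N/K!)\exp(-NK(1-d/N-1/K)^2/3)$ given in the excerpt, I get $\log M - \log M_d \geq \tfrac{NK(1-d/N-1/K)^2}{3}\log e$ (with $\log$ in base $2$). Plugging both estimates into Lemma~\ref{applemma:fano} produces the average-error bound
\[
P_e \;\geq\; 1 - \frac{U\epsilon\log e + 1}{\tfrac{NK(1-d/N-1/K)^2}{3}\log e}.
\]
Solving $P_e > p_{\max}$ for $U$, absorbing the additive $+1$ into the asymptotic statement, and noting that under the uniform prior $P_e$ is the average of the per-hypothesis error probabilities, so that $P_e > p_{\max}$ forces $\max_{h} \PP[d(\widehat{Z},Z) > d \mid Z = h] > p_{\max}$, gives the desired threshold $U < (1-p_{\max}) \cdot NK(1-d/N-1/K)^2/(3\epsilon)$.

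The main obstacle is essentially careful bookkeeping rather than a fresh idea: one must track logarithm bases so that the $\log e$ factors cancel correctly, justify that the conditional version of Lemma~\ref{lemma:basicIbnd} applies with $\widehat{\mathbf{X}}_1^{u-1}$ playing the role of side information (which is legitimate because local-DP at user $u$ is a property of the mechanism applied to $X_u$ alone), and argue that the additive $+1$ in Fano is negligible at the threshold under consideration. The average-to-worst-case passage is immediate from the uniform-prior assumption. All the required ingredients are already available in the excerpt, so no new technical machinery is needed.
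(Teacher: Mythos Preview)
Your proposal is correct and follows exactly the route the paper takes: the paper does not write out a separate proof for this theorem but simply says it follows ``using the above results,'' namely the generalized Fano inequality (Lemma~\ref{applemma:fano}), the per-user mutual-information bound $I\le U\epsilon\log e$ from Lemma~\ref{lemma:basicIbnd}, and the Chernoff estimate on $M_d$. Your handling of the log bases, the chain-rule/data-processing step with $\widehat{\mathbf{X}}_1^{u-1}$ as side information, and the passage from the average error $P_e$ to the worst-case error via the uniform prior are all in line with the paper; the additive $+1$ term is indeed dropped in the paper's statement as well.
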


\section{Analysis of the Pairwise Preference Algorithm}
\label{app:PP}

In this appendix, we present a complete proof for the performance of the Pairwise Preference Algorithm from Section \ref{ssec:PP}. For convenience, we first restate the theorem:

\begin{theorem} 
\label{appThm:PP}
(Theorem \ref{Thm:pairpref} in the paper) 
The Pairwise-Preference algorithm %(Algorithm~\ref{alg:pairpref}) 
satisfies $\epsilon$-local-DP. Further, suppose the eigenvalues and eigenvectors of $\widehat{A}$ satisfy the following non-degeneracy conditions:
\begin{itemize}
\item The $L$ largest magnitude eigenvalues of $A$ have distinct absolute values.
\item The corresponding eigenvectors $y_1,y_2,\ldots,y_L$, normalized under the $\alpha$-norm, $||y||^2_{\alpha}=\sum_{k=1}^K\alpha_ky_k^2$, for some $\alpha$ satisfy:
\begin{equation*}
t_i\neq t_j\quad,1\leq i<j\leq L
\end{equation*}
where $t_i:= (y_1(i),\ldots ,y_L(i))$.
\end{itemize} 
Then, in the information-rich regime (i.e., when $w=\Omega(N)$), there exists $c>0$ such that the item clustering is successful with high probability if the number of users satisfies:
\begin{equation*}
%\label{eq:ppUscale}
U\geq c\left(N\log N\right) \, .
\end{equation*}
\end{theorem}
 
\begin{proof}
As mentioned before, privacy for the algorithm is guaranteed by the use of $\epsilon$-DP bit release (Proposition \ref{prop:1bitpriv}), and the composition property of DP (Proposition \ref{prop:composition}). 

We will prove the sample complexity bound for the case where $w=\Omega(N)$ -- the case where rated items are not private follows similarly. From the definition of the $\epsilon$-DP bit release mechanism, we have that:
$$\PP[S_u=1]=\frac{1+(e^{\epsilon}-1)\PP[S_u^0=1]}{e^{\epsilon}+1},$$
and thus for any pair of items $\{i,j\}$, defining $b_{ij}\triangleq\sum_{k=1}^K\alpha_k(b_{ki}b_{kj}+(1-b_{ki})(1-b_{kj}))$ (i.e., the probability that a random user has identical preference for items $i$ and $j$) and $\overline{b_{ij}}=1-b_{ij}$, we have:
\begin{align*}
\PP[S_u=1,P_u=\{i,j\}]&=\frac{1}{N(N-1)}\left(\frac{1}{e^{\epsilon}+1}+\left(\frac{e^{\epsilon}-1}{e^{\epsilon}+1}\right)\frac{w(w-1)}{N(N-1)}b_{ij}\right)\triangleq\frac{b_{ij}'}{N(N-1)},\\
\PP[S_u=0,P_u=\{i,j\}]&=\frac{1}{N(N-1)}\left(\frac{e^{\epsilon}}{e^{\epsilon}+1}+\left(\frac{e^{\epsilon}-1}{e^{\epsilon}+1}\right)\frac{w(w-1)}{N(N-1)}(\overline{b_{ij}}-1)\right)\triangleq\frac{\overline{b_{ij}'}}{N(N-1)},
\end{align*}
where, under the assumptions that $w=\Omega(N)$ and $\epsilon=\Theta(1)$, we have that $b_{ij}',\overline{b_{ij}}'$ are both $\Theta(1)$. Now, since $\widehat{A}_{ij}=\sum_{u\in\mathcal{U}|P_u=\{i,j\}}S_u$, we have that:
\begin{equation*}
\widehat{A}_{ij}\sim\mbox{Binomial}\left(U,\frac{b_{ij}}{N(N-1)}\right)
\end{equation*}
Setting $U=cN\log N$, we have that:
\begin{align*}
\PP[\widehat{A}_{ij}>0]&=1-\left(1-\frac{b_{ij}}{N(N-1)}\right)^U
=\frac{Ub_{ij}}{N(N-1)}+\Theta\left(\frac{U^2}{N^4}\right)
=c'b_{ij}\frac{\log N}{N}+\Theta\left(\frac{(\log N)^2}{N^4}\right)
\end{align*}
%and similarly for $\PP[\widehat{A}_{ij}=0]$.

\noindent Thus we can interpret $\widehat{A}$ as representing the edges of a random graph over the item set, with an edge between an item in class $i$ and another in class $j$ if $\widehat{A}_{ij}>0$; the probability of such an edge is $\Theta\left(\frac{b_{ij}\log N}{N}\right)$. We can now use Theorem $1$ from \cite{TomoMass} to complete the proof. 
\end{proof}
%We note that the above analysis does not give us exact scaling behavior with respect to $\epsilon$; this would require more detailed analysis. However, the MaxSense algorithm from Section \ref{sec:maxsense} allows us to determine the trade-off between privacy and performance more accurately. Furthermore, the MaxSense algorithm, under stronger separation assumptions than the Pairwise-Preference algorithm, achieves the same scaling with respect to $N$ in the information-rich regime and also optimal scaling for many other regimes for $w$.

\section{Lower bounds for the Information-scarce Setting}
\label{app:lbound}

In this appendix, we provide generalizations and complete proofs for the results in Section \ref{sec:LBscarce}. 

Recall that we consider a scenario where there is a single class of users, and each item is ranked either $0$ or $1$ deterministically by each user. $C_N(\cdot):[N]\rightarrow\{0,1\}$ is the underlying clustering function. We assume that the user-data for user $u$ is given by $X_u=(I_u,Z_u)$, where $I_u$ is a size $w$ subset of $[N]$ representing items rated by user $u$, and $Z_u$ are the ratings for the corresponding items; in this case, $Z_u=\{\bZ(i)\}_{i\in I_u}$. We also denote the privatized sketch from user $u$ as $S_u\in\mathcal{S}$, where ${\mathcal S}$ denotes the space from which sketches are drawn, which we assume to be finite or countably infinite. The sketch is assumed to obey $\epsilon$-DP. Finally, we assume that $\bZ$ is chosen uniformly over $\{0,1\}^N$, and the set of items $I_u$ rated by user $u$ is also assumed to be chosen uniformly at random from amongst all size-$w$ subsets of $[N]$.

\subsection{Mutual Information under Generalized Channel Mismatch}
\label{appssec:2lemmas}

Recall we define $[N]_w$ to be the collection of all size-$w$ subsets of $[N]=\{1,2,\ldots,N\}$, and  $\mathcal{D}\triangleq[N]_w\times \{0,1\}^w$ to be the set from which user information (i.e., $(I,Z)$) is drawn (and define $D=|\mathcal{D}|=\binom{N}{w}2^w$). Finally $\EE_X[\cdot]$ indicates that the expectation is over the random variable $X$. We now establish a generalization of Lemma \ref{lemma:Ibnd}.
\begin{lemma} 
\label{applemma:Ibnd}
Assume that under probability distribution $\PP$, the set $I$ of items whose type is available to a given user is independent of the type vector $Z$. Denote $p_s(i,z):=\PP((I,Z_I)=(i,z)|S=s)$. Also, for subsets $j\subset[N]$, we denote  $p_j(z):=\PP(Z_j=z)$. Then the following holds:
\begin{align*}
\mut(Z;S=s)\leq \nonumber \sum_{i,z}\sum_{i',z'}p_s(i,z)p_s(i',z')\left[\II_{z\equiv z'}\frac{p_{i\cup i'}(z\cup z')}{p_i(z)p_{i'}(z')}-1\right].
\end{align*}
\end{lemma}

Note that in the above lemma we do not make any assumption regarding: $i)$ the distribution of $\bZ$, $ii)$ the distribution of the user-data $(I,Z)$. If $\bZ$ is uniformly distributed on $\{0,1\}^N$, we recover Lemma \ref{lemma:Ibnd}.

\begin{proof}[Proof of Lemma \ref{applemma:Ibnd}]
From the definition of mutual information, we have:
\begin{align*}
\mut(\bZ;S)
&=\sum_{\bz,s}\PP[(\bZ,S)=(\bz,s)]\log\left(\frac{\PP[(\bZ,S)=(\bz,s)]}{\PP[\bZ=\bz]\PP[S=s]}\right)
%&=\sum_{s}\sum_{\bz}\PP[\bZ=\bz|S=s]\PP[S=s] \log\left(\frac{\PP[\bZ=\bz|S=s]}{\PP[\bZ=\bz]}\right)\nonumber\\
=\EE_S\left[\mut(Z;S=s)\right], 
\end{align*}
where we use the notation:
$$\mut(Z;S=s):=\sum_{\bz}\PP[\bZ=\bz|S=s]\log\left(\frac{\PP[\bZ=\bz|S=s]}{\PP[\bZ=\bz]}\right)$$
Now note that:
\begin{align}
\PP[\bZ=\bz|S=s]&=\sum_{(i_1,z_1)}\PP[\bZ=\bz,(I_1,Z_1)=(i_1,z_1)|S=s]\nonumber\\
%&=\sum_{(i_2,z_2)}\PP[\bZ=\bz|i_2,z_2,s]\PP[(I_2,Z_2)=(i_2,z_2)|s]\nonumber\\
&=\sum_{(i_1,z_1)}\PP[\bZ=\bz|i_1,z_1]\PP[(I_1,Z_1)=(i_1,z_1)|s]\nonumber\\
%&\mbox{(By the Markov property)}\nonumber\\
%&\leq\frac{\sum_{(i_2,z_2)}\PP[\bZ=\bz|i_2,z_2]\PP[(I_2,Z_2)=(i_2,z_2)|s]}{\PP[\bZ=\bz]}-1\nonumber\\
%&\mbox{(Since $\log(x)\leq x-1$)}\nonumber\\
&=\sum_{(i_1,z_1)}p_s(i,z_i)\frac{\PP[\bZ=\bz]}{p_i(z_i)}\II_{\{z\equiv z_1\}}.\nonumber
\end{align}
Combining the equations, we get
\begin{align*}
\mut(Z;S=s)&=\sum_z\sum_{i_1,z_1}\II_{z\equiv z_1}\PP(Z=z)\frac{p_s(i_1,z_1)}{p_{i_1}(z_1)}
\log\left(\sum_{i_2,z_2}\II_{z\equiv z_2}\frac{p_s(i_2,z_2)}{p_{i_2}(z_2)}\right).
\end{align*}
Using Jensen's inequality, the R.H.S. is upped bounded by the corresponding expression where averaging over $z$ conditionally on $Z_{i_1}=z_1$  is taken inside the logarithm, yielding
\begin{align*}
\mut(Z;S=s)
&\le \sum_{i_1,z_1}p_s(i_1,z_1)\log\left(\sum_z \II_{z\equiv z_1}\frac{\PP(Z=z)}{p_{i_1}(z_1)}
\sum_{i_2,z_2}p_s(i_2,z_2)\frac{\II_{z\equiv z_2}}{p_{i_2}(z_2)}\right)\\
&=\sum_{i_1,z_1}p_s(i_1,z_1)\log\left(\sum_{i_2,z_2}p_s(i_2,z_2)\II_{z_1\equiv z_2}\frac{p_{i_1\cup i_2}(z_1\cup z_2)}{p_{i_1}(z_1)p_{i_2}(z_2)}\right).
\end{align*}
The result now follows from the inequality $\log(x)\le x-1$.
\end{proof}

\subsection{Lower Bound on Scaling for Clustering with Local-DP}
\label{appssec:LB}

We now fill in the proofs for results from Section \ref{ssec:LB}:
\begin{lemma}
(Lemma \ref{lemma:combbnd} in the paper)
If $w=o(N)$, then:
\begin{equation*}
\left|\frac{\binom{N-w}{w}}{\binom{N}{w}}-\left(1-\frac{w^2}{N}\right)\right|=\Theta\left(\frac{w^4}{N^2}\right)
\end{equation*}
\end{lemma}
\begin{proof}
First, it is easy to verify that the binomial coefficients satisfy:
\begin{align*}
%\frac{\binom{N-w}{w}}{\binom{N}{w}}&=\prod_{k=0}^{w-1}\left(1-\frac{w}{N-k}\right)\\
\Rightarrow\left(1-\frac{w}{N-w+1}\right)^w&\leq\frac{\binom{N-w}{w}}{\binom{N}{w}}\leq\left(1-\frac{w}{N}\right)^w
\end{align*}
Now for the upper bound, using the binomial expansion, we have:
\begin{align*}
\left(1-\frac{w}{N}\right)^w
%&=1-\frac{w^2}{N}+\frac{w^4}{2N^2}-\ldots\\
&=1-\frac{w^2}{N}+\Theta\left(\frac{w^4}{N^2}\right)
\end{align*}
Similarly for the lower bound, we have:
\begin{align*}
\Bigg(1-\frac{w}{N-w+1}\Bigg)^w
&=1-\frac{w^2}{N-w+1}+\frac{w^4}{2(N-w+1)^2}-\ldots\\
&\geq 1-\frac{w^2}{N}-\frac{w^3}{N(N-w+1)}+\frac{w^4}{2(N-w+1)^2}-\ldots\\
&=1-\frac{w^2}{N}-\Theta\left(\frac{w^4}{N^2}\right)
\end{align*}
%Combining the two, we get the result.
\end{proof}

\begin{theorem} 
(Theorem \ref{Thm:stronglb} in the paper)
{\it Under the scaling assumption $w=o(N^{1/3})$, and for $\epsilon<\ln(2)$, it holds that
\begin{equation*}
\mut(\bZ,S)=O\left(\frac{w}{N}\right)\cdot
\end{equation*}
and thus there exists a constant $c>0$ such that any cluster learning algorithm with local-DP is unreliable if the number of users satisfies:
\begin{equation*}
U < c\left(\frac{N^2}{w}\right).
\end{equation*}
}
\end{theorem}

\begin{proof}
In the proof of Theorem \ref{Thm:weaklb}, the two steps which are weak are the conversion to the base measure $\PP^0[]$ using Lemma \ref{lemma:uncondition}, and the evaluation of the bound for $\Delta_1$. We start off by performing a similar decomposition of the bound, but without first converting to the base measure. For any $S=s$, we have:
\begin{align*}
\EE\left[2^{|I_1\cap I_2|}\II_{\{Z_1\equiv Z_2}-1\right]
&=\sum_{\ell=1}^N\EE\left[\II_{\{I_1\cap I_2=\{\ell\}\}}(2*\II_{\{Z_1\equiv Z_2\}}-1)\right]\\
&+\EE\left[\II_{\{|I_1\cap I_2|>1\}}(2^{|I_1\cap I_2|}\II_{\{Z_1\equiv Z_2\}}-1)\right]\\
&=\Delta_1'+\Delta_1''+\Delta_2'
\end{align*}
where 
\begin{align*}
\Delta_1'&=\sum_{\ell=1}^N\EE\left[\II_{\{\ell \in I_1\cap I_2\}}(2*\II_{Z_1(\ell)=Z_2(\ell)}-1)\right]\\
\Delta_1''&=-\sum_{\ell=1}^N\EE\left[\II_{\{\ell \in I_1\cap I_2; |I_1\cap I_2|>1\}}(2*\II_{Z_1(\ell)=Z_2(\ell)}-1)\right],\\ 
\Delta_2'&=\EE\left[\II_{|I_1\cap I_2|>1}(2^{|I_1\cap I_2|}\II_{\{Z_1\equiv Z_2\}}-1)\right]
\end{align*}
Note that $\Delta_1'+\Delta_1''$ are similar to $\Delta_1$ and $\Delta_2'$ similar to $\Delta_2$ in Theorem \ref{Thm:weaklb} (albeit without first converting to the base measure). Unlike before, however, we first bound $\Delta_1''+\Delta_2'$, establishing that $\Delta_1''+\Delta_2'=O(w^4/N^2)=o(w/N)$ whenever $w=o(N^{1/3})$. For $\Delta_1'$, we need to employ a more sophisticated technique for bounding.
As before, we write $\PP^0$ for the base probability distribution under which $(I_1,Z_1)$ and $(I_2,Z_2)$  are independent and uniformly distributed over $\mathcal{D}$, and denote by $\EE^0$ mathematical expectation under $\PP^0$. 
For $\Delta_1''$, we have:
\begin{align*}
\Delta_1'' &\leq\sum_{\ell=1}^N\EE\left[\II_{\{\ell \in I_1\cap I_2; |I_1\cap I_2|>1\}}\right]\\
&=\EE\left[|I_1\cap I_2|\II_{\{|I_1\cap I_2|>1\}}\right]
\end{align*}
Since the RHS is non-negative, we use Lemma \ref{lemma:uncondition} to convert the expectation to the base measure. Thus, we get:
\begin{align}
\Delta_1'' &\leq e^{2\epsilon}\left[\EE^0\left[|I_1\cap I_2|\right]-\PP^0\left[|I_1\cap I_2|=1\right]\right]\nonumber\\
&=e^{2\epsilon}\frac{w\binom{N-1}{w-1}-w\binom{N-w}{w-1}}{\binom{N}{w}}\nonumber\\
&=e^{2\epsilon}\left(\frac{w^2}{N}-\left(\frac{w^2}{N-2w+1}\right)\frac{\binom{N-w}{w}}{\binom{N}{w}}\right)\label{eq:delta1bnd2}
\end{align}
Similarly for $\Delta_2'$, we have:
\begin{align*}
\Delta_2' &\leq\EE\left[\II_{\{|I_1\cap I_2|>1\}}2^{|I_1\cap I_2|}\II_{\{Z_1\equiv Z_2\}}\right]
\leq e^{2\epsilon}\EE^0\left[\II_{\{|I_1\cap I_2|>1\}}2^{|I_1\cap I_2|}\II_{\{Z_1\equiv Z_2\}}\right]\\
&\le e^{2\epsilon}\PP^0\left[|I_1\cap I_2|>1\right],
\end{align*}
as $\PP^0\left[Z_1\equiv Z_2\right]=2^{-|I_1\cap I_2|}$. Now since $I_1$ and $I_2$ are picked independently and uniformly over all size $w$ subsets of $[N]$ (under $\PP^0$), we have:
\begin{align}
\Delta_2' &\leq e^{2\epsilon}\left(1-\frac{\binom{N-w}{w}+w\binom{N-w}{w-1}}{\binom{N}{w}}\right)
=e^{2\epsilon}\left(1-\left(1+\frac{w^2}{N-2w+1}\right)\frac{\binom{N-w}{w}}{\binom{N}{w}}\right)\label{eq:delta2bnd2}
\end{align}
Finally combining equations (\ref{eq:delta1bnd2}) and (\ref{eq:delta2bnd2}), we get:
\begin{equation*}
\Delta_1''+\Delta_2'\leq e^{2\epsilon}\left(1+\frac{w^2}{N}-\left(1+\frac{2w^2}{N-2w+1}\right)\frac{\binom{N-w}{w}}{\binom{N}{w}}\right),
\end{equation*}
and using Lemma \ref{lemma:combbnd}, we get:
\begin{align*}
\Delta_1''+\Delta_2'&\leq e^{2\epsilon}\left(1+\frac{w^2}{N}-\left(1+\frac{2w^2}{N}+\frac{2w^2(2w-1)}{N(N-2w+1)}\right)\left(1-\frac{w^2}{N}-O\left(\frac{w^3}{N^2}\right)\right)\right)\\
&=O\left(\frac{w^4}{N^2}\right)
\end{align*}

Thus, we now have:
\begin{equation*}
\EE\left[2^{|I\cap J|}\II_{\{Z(\ell)
=Z'(\ell)\forall\ell\in I\cap J\}}-1\right]\leq\sum_{\ell=1}^N\EE\left[\II_{\{\ell \in I\cap J\}}(2*\II_{Z(\ell)=Z'(\ell)}-1)\right]+O\left(\frac{w^4}{N^2}\right)
\end{equation*}

Under the scaling assumption $w=o(N^{1/3})$, the second term in the right-hand side of the above equation is $o(w/N)$, and we only need to establish that the first term in the right-hand side is $O(w/N)$. 

As in Theorem \ref{Thm:one-bit}, we introduce the notation $\PP(\ell \in I, Z(\ell)=\sigma|S=s)=\pi_{\ell,\sigma}$ (here we can omit indexing with respect to $s$ for notational convenience). The following identity is then easily established:
\begin{equation}
\label{eq:2}
\sum_{\ell=1}^N \EE \II_{\{\ell\in I_1\cap I_2\}}\left(2\II_{Z_1(\ell)=Z_2(\ell)\}}-1\right)= \sum_{\ell=1}^N\left(\pi_{\ell,0}-\pi_{\ell,1}\right)^2.
\end{equation}
The left-hand side of (\ref{eq:2}) is thus a non-negative definite quadratic form of the variables 
$$
p_{i,z}:=\PP(I=i,Z=z|S=s),
$$
where we have that $\pi_{\ell,\sigma}=\sum_{i,\sigma|\ell\in i,z(\ell)=\sigma}p_{i,z}$ in (\ref{eq:2}).
We know however by Lemma \ref{lemma:uncondition} that these variables are  constrained to lie in the convex set defined by the following inequalities:
\begin{align*}
%p_{i,z}&\ge 0,\\
\sum_{(i,z)\in\mathcal{D}}p_{i,z}&=1,\quad
\frac{e^{-\epsilon}}{D}\le p_{i,z}\le \frac{e^{\epsilon}}{D}.
\end{align*}
Defining $\epsilon':=e^{\epsilon}-1=\max(e^{\epsilon}-1,1-e^{-\epsilon})$, we can relax the last constraint to 
$$
1-\epsilon'\le p_{i,z}D\le 1+\epsilon'.
$$
Provided $\epsilon$ is small enough (precisely, provided $\epsilon<\ln(2)$, which we have assumed), it holds that $\epsilon'<1$. 

Given this setup, we can now formulate the problem of upper bounding $\Delta_1'$ as the following optimization problem:
\begin{equation}
\label{eq:optbound}
\begin{aligned}
& \underset{\{p_{i,z}\}_{(i,z)\in\mathcal{D}}}{\text{maximize}}
&& \sum_{\ell=1}^N\left(\pi_{\ell,0}-\pi_{\ell,1}\right)^2 \\
& \text{subject to}
&& \sum_{(i,z)\in\mathcal{D}}p_{i,z} = 1, \; \\
&&& p_{i,z}D\in\left[1-\epsilon',1+\epsilon'\right].
\end{aligned}
\end{equation}
In order to evaluate this bound, we need to first characterize the extremal points of the above convex set. We do this in the following lemma.
\begin{lemma}
\label{lemma:etremalpts} 
The extremal points of the convex set of distributions $\{p_{i,z}\}$ defined by (\ref{eq:optbound}) consists precisely of the distributions $p^A_{i,z}$ indexed by the sets $A\subset \mathcal{D}$ of cardinality 
$$
|A|=\binom{N}{w}2^{w-1}=\frac{D}{2},
$$
defined by
\begin{equation}
\label{eq:4} 
p^A_{i,z}=\left\{\begin{array}{ll}
\frac{1+\epsilon'}{D}&\hbox{if $(i,z)\in A$},\\
\frac{1-\epsilon'}{D}&\hbox{if $(i,z)\notin A$}.
\end{array}
\right.
\end{equation}
\end{lemma}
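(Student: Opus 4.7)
The plan is to exploit the fact that the feasible set for (\ref{eq:optbound}) is the intersection of an affine hyperplane (the normalization constraint) with a Cartesian product of intervals (the range constraints), and then directly characterize the vertices of this polytope. Write $a := (1+\epsilon')/D$ and $b := (1-\epsilon')/D$, and let $K$ denote the set of $p = (p_{i,z})_{(i,z)\in\mathcal{D}}$ satisfying $\sum_{(i,z)} p_{i,z} = 1$ and $p_{i,z} \in [b,a]$. Since $K$ is cut out in $\setR^D$ by one linear equality and $2D$ box inequalities, it has affine dimension $D-1$, so any extremal $p \in K$ must saturate at least $D-1$ of the box inequalities (with linearly independent active gradients, which coordinate constraints automatically provide).

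First I would fix such an extremal $p$. Because $\epsilon' \in (0,1)$ forces $a \neq b$, the upper and lower constraints on a single coordinate cannot be simultaneously tight; hence at most one coordinate $p_{i^*,z^*}$ is `free', while the remaining $D-1$ coordinates are pinned to either $a$ or $b$. Let $k$ be the number of pinned coordinates equal to $a$, so $D-1-k$ equal $b$. Using $a+b = 2/D$ and $a-b = 2\epsilon'/D$, the normalization constraint yields $p_{i^*,z^*} = 1 - ka - (D-1-k)b = (1 + (D-1-2k)\epsilon')/D$. Requiring $p_{i^*,z^*} \in [b,a]$ then reduces to $|D-1-2k|\le 1$, i.e. $k \in \{D/2 - 1, D/2\}$; this requires $D$ to be even, which holds since $D = \binom{N}{w}2^w$ and $w \geq 1$.

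In the case $k = D/2$ the free coordinate equals $b$, while in the case $k = D/2-1$ it equals $a$; in both situations, defining $A := \{(i,z) : p_{i,z} = a\}$ yields $|A| = D/2 = \binom{N}{w}2^{w-1}$ and $p \equiv p^A$ as in (\ref{eq:4}). The converse inclusion is immediate: for any $A \subset \mathcal{D}$ with $|A| = D/2$, the point $p^A$ has every coordinate sitting at one of the two box extremes, so it saturates $D$ box inequalities (all linearly independent) and is normalized by a direct sum, identifying it as a vertex of $K$. I do not expect a significant obstacle here: the only technical care needed is verifying the parity of $D$ and ruling out the possibility that the `free' coordinate remains strictly interior, both of which fall out of the elementary bound $|D-1-2k| \le 1$ above.
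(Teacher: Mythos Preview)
Your argument is correct and complete: the two-free-coordinate perturbation $v=e_{j_1}-e_{j_2}$ cleanly rules out more than one interior coordinate, the arithmetic forcing $k\in\{D/2-1,D/2\}$ is right, and the parity of $D=\binom{N}{w}2^{w}$ is handled. The converse direction (each $p^A$ is extremal) is indeed immediate from the box saturation.

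However, your route is genuinely different from the paper's. The paper does \emph{not} argue via vertex counting or active-constraint rank. Instead it proves the stronger statement that \emph{every} feasible $p$ can be written as a convex combination $\sum_{|S|=D/2}\gamma_S\,p^S$, and does so constructively: it rescales each coordinate to $\alpha_n=(p_n-b)/(a-b)\in[0,1]$, builds a two-layer network whose link capacities are $\alpha_n$ and $1-\alpha_n$, identifies each set $S$ of size $D/2$ with a source--sink route, and then invokes max-flow/min-cut to show the network admits a unit flow, which decomposes into the desired weights $\gamma_S$. Your approach is shorter and more elementary, needing nothing beyond a one-line perturbation argument and a linear computation; the paper's approach, though heavier, actually produces the convex representation and so is self-contained in establishing that the $p^A$ exhaust (rather than merely contain) the extreme points without appealing to general polytope theory. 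For the purposes of the surrounding proof of Theorem~\ref{Thm:stronglb}, where one only needs that the convex quadratic form in (\ref{eq:2}) is maximized at some $p^A$, either route suffices equally well.
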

\begin{proof}
Let $\{p_{i,z}\}$ be a probability distribution satisfying constraints (\ref{eq:optbound}). The aim is to establish the existence of non-negative weights $\gamma_S$ for each subset $S\subset\mathcal{D}$ of size $D/2$, summing to 1, and such that for all $(i,z)\in\mathcal{D}$, one has:
\begin{equation}
\label{eq:8}
p_{i,z}=\sum_{S\subset\mathcal{D},|S|=D/2}\gamma_S(1+\epsilon'\II_{(i,z)\in S}-\epsilon'\II_{(i,z)\notin S})/D.
\end{equation}
Let us now express the existence of such weights $\gamma_S$ as a property of a network flow problem. For each $n\in [D]$, define:
$$
\alpha_n:=\left(p_n-\frac{1-\epsilon'}{D}\right)\frac{D}{2\epsilon'}\cdot
$$
The constraint $p_n\in[(1-\epsilon')/D,(1+\epsilon')/D]$ entails that $\alpha_n\in[0,1]$. Construct now a network with for each $n\in[D]$ two links, labelled $(n\in )$ and $( n \notin)$, and with respective capacities $\alpha_n$ and $1-\alpha_n$.
In addition, for each set $S\subset [D]$, $|S|=D/2$, create a route $r_S$ through this network, which for each $n\in D$ crosses link $(n\in)$ if $n\in S$, and crosses link $(n\notin)$ if $n\notin S$. All such routes are connected to a source and a sink node. 

We now claim that the existence of probability weights $\gamma_S$ satisfying (\ref{eq:8}) is equivalent to the fact that the maximum flow through this network is equal to 1. Indeed, the existence of a flow of total weight 1 is equivalent to the existence of a probability distribution $\gamma_S$ on the routes $r_S$ through this network which match the link capacity constraints, that is to say such that for all $n\in [D]$, one has:
$$
\begin{array}{l}
\sum_{S:n\in S}\gamma_S=\alpha_n,\\
\sum_{S:n\notin S}\gamma_S=1-\alpha_n.
\end{array}
$$
It is readily seen that this condition implies (\ref{eq:8}). Conversely, if the probability weights $\gamma_S$ satisfy (\ref{eq:8}), using the definition of $\alpha_n$, it is easily seen that the two previous equations hold.

Let us now establish the existence of such a flow. To this end, we use the max flow-min cut theorem. Any set of links that contains, for some $n\in[D]$, both links $(n\in)$ and $(n\notin)$, is a cut, and its capacity is at least $\alpha_n+1-\alpha_n$, hence larger than 1. Any cut $C$ which for each $n$ either does not contain $(n\in)$ or does not contain $(n\notin)$ must be such that either:
\begin{equation}
\label{eq:9}
|C\cap\{\cup_{n\in [D]}(n\in)\}|>D/2
\end{equation} 
or: 
\begin{equation}\label{eq:10}
|C\cap\{\cup_{n\in [D]}(n\notin)\}|>D/2,
\end{equation}
for otherwise we can identify $S\subset [D]$, $|S|=D/2$ which crosses this cut $C$. Assume thus that (\ref{eq:9}) holds. Assume without loss of generality that $C$ contains the links $(n\in)$ for all $n=1,\ldots, D/2+1$. The weight of this cut is thus at least $\sum_{n=1}^{D/2+1}\alpha_n$. We now argue that this must be at least 1. Indeed, it holds that:
$$
\sum_{n=1}^D\alpha_n=D/2.
$$
However, if $\sum_{n=1}^{D/2+1}\alpha_n<1$, using the fact that each $\alpha_n$ is at most 1, it follows that $\sum_{n=1}^{D}\alpha_n$ is strictly less than $1+D/2-1=D/2$, a contradiction. The case when cut $C$ verifies Equation~(\ref{eq:10}) is similar.
\end{proof}

We can now complete the proof of Theorem~\ref{Thm:stronglb}. Since as argued the second term in the right-hand side of (\ref{eq:2}) is a non-negative definite quadratic form of the $p_{i,z}$, it is in particular a convex function of the $p_{i,z}$, and as such is maximized over the convex set described by (\ref{eq:optbound}) at one of its extremal points, which are precisely identified by Lemma~\ref{lemma:etremalpts}. It will thus suffice to establish the following inequality for all $A\subset\mathcal{D}$ of size half the cardinality of the full set:
\begin{equation}
\label{eq:5} 
\sum_{\ell=1}^N\left(\pi^A_{\ell,0}-\pi^A_{\ell,1}\right)^2\le O(w/N),
\end{equation}
where we introduced the notation for all $\ell\in[N]$ and $\sigma\in\{0,1\}$:
\begin{equation*}
\pi^A_{\ell,\sigma}=\sum_{i: \ell \in i}\sum_{z:z(\ell)=\sigma}p^A_{i,z},
\end{equation*}
and $p^A_{i,z}$ is as defined in~(\ref{eq:4}). Introducing also the sets 
$$
A_{\ell,\sigma}=\{(i,z):\ell\in i\hbox{ and } z(\ell)=\sigma\},
$$
we have
\begin{equation}
\label{eq:6}
\begin{array}{lll}
\pi^A_{\ell,0}-\pi^A_{\ell,1}&=&\frac{2\epsilon'}{\binom{N}{w}2^w}\left[|A_{\ell,0}\cap A|-|A_{\ell,1}\cap A|\right]\\
&=&\frac{2\epsilon'}{\binom{N}{w}2^w}\left<\II_A,v_{\ell}\right>
\end{array}
\end{equation}
where in the last display we used the following notations. $\left<\cdot , \cdot\right>$ stands for the scalar product in $\mathbb{R}^{D}$, $\II_A$ is the characteristic vector of the set $A$, and $v_{\ell}$ is defined as
$$
v_{\ell}(i,z)=\II_{\{\ell\in i\}}\left(1-2z(\ell)\right).
$$
Equation~(\ref{eq:6}) entails that the left-hand side of Equation~(\ref{eq:5}) also equals
\begin{equation}
\label{eq:7} 
\sum_{\ell=1}^N \left(\frac{2\epsilon'}{D}\right)^2\left<\II_A,v_{\ell}\right>^2.
\end{equation}
The scalar product $\left< v_{\ell},v_{\ell'}\right>$ reads, for $\ell\ne \ell'$:
$$
\begin{array}{lll}
\left< v_{\ell},v_{\ell'}\right>&=&\sum_{i:\ell,\ell'\in i}\sum_z (1-2 z(\ell))(1-2z(\ell'))\\
&=&\sum_{i:\ell,\ell'\in i} 2^{w-2}2\left[(1)*(1)+(1)*(-1)\right]\\
&=&0.
\end{array}
$$
Note further that for all $\ell\in[N]$, one has
$$
||v_{\ell}||^2=\binom{N-1}{w-1}2^w=\frac{wD}{N}.
$$
Orthogonality and equality of norms among the $v_{\ell}$ readily implies that the expression in (\ref{eq:7}) is upper-bounded by
$$
\left(\frac{2\epsilon'}{D}\right)^2 \frac{wD}{N}||\II_A||^2.
$$
Recalling that the vector $\II_A$ has $\frac{D}{2}$ entries equal to 1, and all other entries equal to zero, the square of its Euclidean norm $||\II_A||^2$ equals precisely $\frac{D}{2}$. Plugging this value in the last display, after cancellation, one obtains that the expression in (\ref{eq:7}) is bounded by
$$
2 \epsilon'^2\frac{w}{N}\cdot
$$
This completes the proof.
\end{proof}

\section{Lower Bound for Adaptive Queries}
\label{app:adapt}

In Section \ref{ssec:adaptive}, to establish a lower bound on the sample complexity for privacy-preserving cluster-learning with adaptive queries, we considered the following setup: we defined $\bZ\in\{0,1\}^N$ to be a random type-vector, and defined $\PP^0$ to be the unconditional probability under which the $Z_i$ are i.i.d. uniform on $\{0,1\}$. Finally, in the proof of Theorem \ref{thm:adaptive}, we were interested in a given random variable $F:=\sum_{i=1}^N f_i(Z_i)$, where $f_i(Z_i)\in[e^{-\epsilon},e^{\epsilon}]$. Note that under $\PP^0$, the random variable $F$ has variance $\leq2\epsilon'^2 N$. The crux of the proof of Theorem \ref{thm:adaptive} was based on the following technical lemma:
\begin{lemma}
\label{applemma:Htovar}
(Lemma \ref{lemma:Htovar} in the paper)
If $\mut(Z;S_1^T)\leq\delta$, then we have: 
\begin{equation*}
\mbox{Var}^T[F]\leq\mbox{Var}^0[F]\cdot\max\left\{20,10\delta\right\}.
\end{equation*}
\end{lemma}

In this appendix, we provide a proof for this result. The argument proceeds in several steps.

\noindent\textbf{Step 1: Bounding the divergence between the measure on $F$ under $\PP^T$ and under $\PP^0$}:
\begin{lemma}
For each $f$ in the support of any discrete random variable $F$, let $p_f$ and $p^0_f$ denote the probabilities that $F=f$ under $\PP^T$ and $\PP^0$ respectively. Then we have:
\begin{equation}
\label{eq.ex1}
H(\PP^0)-H(\PP^T)\geq D(p||p^0)=\sum_f p_f \log\left(\frac{p_f}{p^0_f}\right)\cdot
\end{equation}
\end{lemma}
\begin{proof}
For each $f$, let $N_f$ denote the number of vectors $z\in\{0,1\}^N$ for which $F=f$, so that $p^0_f=N_f 2^{-N}$. Now we have:
\begin{align*}
H(\PP^T)&=\sum_f p_f \sum_{z: F(z)=f}\frac{\PP^T(z)}{p_f}\left[\log\left(\frac{1}{p_f}\right)+\log\left(\frac{p_f}{\PP^T(z)}\right)\right]\\
&\le \sum_f p_f\left[\log\left(\frac{1}{p_f}\right)+\log(N_f)  \right]\\
&=\sum_f p_f \left[\log\left(\frac{1}{p_f}\right)+\log(N)+\log(p^0_f)\right]\\
&=H(\PP^0)-D(p||p^0),
\end{align*}
where the inequality follows by upper-bounding the entropy of a probability distribution on a set of size $N_f$ by $\log(N_f)$.
\end{proof}

\noindent{\bf Step 2: Bounding variance of $F$ under $\PP^T$ given divergence constraints:}

Let $\bar{F}=\EE^0[F]$ (i.e., the expectation of $F$ under $\PP^0$).  Note that:
\begin{equation*}
\hbox{Var}^T(F)=\inf_{x\in \RR}\EE^T(F-x)^2\le \EE_{\PP^T}(F-\bar{F})^2=\sum_f p_f (f-\bar{F})^2.
\end{equation*}
Assume that the entropy $H(\PP^T)$ verifies $H(\PP^T)\ge H(\PP^0)-\delta$, for some $\delta\ge 0$. Then in view of (\ref{eq.ex1}) and the previous display, an upper bound on the variance of $F$ under $\PP^T$ is provided by the solution of the following optimization problem:
\begin{eqnarray}
\hbox{Maximize}&\sum_f p_f (f-\bar{F})^2\nonumber\\
\hbox{over}&p_f \ge 0\nonumber\\
\hbox{such that}&\sum_f p_f =1\nonumber\\
\hbox {and}&\sum_f p_f \log\left(\frac{p_f}{p^0_f}\right)\le \delta. \label{eq:ex2}
\end{eqnarray}
It is readily seen (for example, by introducing the Lagrangian of this optimization problem, and a dual variable $\nu^{-1}>0$ for the constraint (\ref{eq:ex2}))that the optimal of this convex optimization problem is achieved by:
\begin{equation*}
p_f:=\frac{1}{Z(\nu)}p^0_f e^{\nu(f-\bar{F})^2},
\end{equation*}
for a suitable positive constant $\nu$, where the normalization constant $Z(\nu)$ is given by:
\begin{equation*}
Z(\nu):=\sum_f p^0_f e^{\nu(f-\bar{F})^2}=\EE^0 e^{\nu(F-\bar{F})^2}.
\end{equation*}
For this particular distribution, the divergence $D(p||p^0)$ reads:
\begin{equation*}
\sum_f \frac{1}{Z(\nu)}p^0_f e^{\nu(f-\bar{F})^2}\left[\nu(f-\bar{F})^2-\log Z(\nu)\right] = 
-\log(Z(\nu))+\frac{\nu}{Z(\nu)}\EE^0(F-\bar{F})^2e^{\nu(F-\bar{F})^2},
\end{equation*}
so that constraint (\ref{eq:ex2}) reads:
\begin{equation}
\label{eq.ex3}
-\log(Z(\nu))+\frac{\nu}{Z(\nu)}\EE^0(F-\bar{F})^2e^{\nu(F-\bar{F})^2}\le \delta.
\end{equation}
This characterization in turn allows to establish the following:
\begin{lemma}
\label{cx-analysis}
Let $\psi(\nu):=\log Z(\nu)$. Assume there exist $a$, $\nu>0$ such that: 
\begin{equation}
\label{fenchel}
\nu a -\psi(\nu)\ge \delta.
\end{equation}
Then the solution to the value of the optimization problem (\ref{eq:ex2}) is less than or equal to $a$.
\end{lemma}
\begin{proof}
Note that by H\"older's inequality, function $\psi$ is convex, so that its derivative:
$$
\psi'(\nu)=Z^{-1}(\nu)\EE_0 (F-\bar{F})^2 e^{\nu(F-\bar{F})^2},
$$
is non-decreasing. Note further that the function $\nu \psi'(\nu) -\psi(\nu)$ appearing in the left-hand side of (\ref{eq.ex3}) is non-decreasing for non-negative $\nu$, as its derivative reads $\nu \psi''(\nu)$. Thus the value $\nu^*$ which achieves the optimum is such that 
$$
\nu^*\psi'(\nu^*)-\psi(\nu^*)=\delta
$$
and the sought bound is $\psi'(\nu^*)$. Now for a given $a\in \RR$, the supremum of $\nu a -\psi(\nu)$ is achieved precisely at $\nu$ such that $a=\psi'(\nu)$. Thus if for some $\nu$ and some $a$, condition~(\ref{fenchel}) holds, it follows that:
$$
\sup_{\nu}\left(\nu a -\psi(\nu)\right)\ge \delta=\sup_{\nu}\left(\nu a^* -\psi(\nu)\right),
$$
where $a^*:=\psi'(\nu^*)$. It follows from monotonicity of $\nu\to \nu \psi'(\nu)-\psi(\nu)$ that the value $\nu'$ where the supremum is achieved in the left-hand side, and such that $a=\psi'(\nu')$, verifies $\nu'\ge \nu^*$. Monotonicity of $\psi'$ then implies that $a\ge a^*$ as announced.
\end{proof}

\noindent{\bf Step 3: Deriving explicit bounds, using concentration results under $\PP^0$}.

Consider the centered and scaled random variable:
\begin{equation*}
G:=\frac{F-\bar{F}}{\sigma}\cdot
\end{equation*}
Recall that after centering, each variable $f_i(Z_i)$ is bounded in absolute value by $\epsilon'$. Thus, using the Azuma-Hoeffding inequality yields the following bound:
\begin{equation}
\label{hoeffding}
\PP^0(G>A)\le e^{-A^2/2},\; A>0,
\end{equation}
and the same bound holds for $\PP^0(G<-A)$. To obtain the above, we used the fact that after centering, $f_i(Z_i)$ is of the form $\sigma_i (2Z_i-1)$ where $\sigma_i$ is the standard deviation of $f_i(Z_i)$ under $\PP^0$. We now apply these to bound the value of $Z(\nu)$ as follows:
\begin{lemma}
\label{concentrate}
Define $\sigma^2=\mbox{Var}^0[F]$ (i.e., under $\PP^0$), and consider any $\nu\in\left(0,\frac{1}{2\sigma^2}\right)$. Then the partition function $Z(\nu)$ verifies:
\begin{equation}\label{beah}
Z(\nu)\le 1+ \frac{4\nu\sigma^2}{1-2\nu\sigma^2}\cdot
\end{equation}
\end{lemma}
\begin{proof}
We can write:
\begin{align*}
Z(\nu)&=\int_0^{\infty}\PP^0\left(e^{\nu(F-\bar{F})^2}\ge t\right) dt
\leq 1+\int_1^{\infty}\PP^0\left(\nu(F-\bar{F})^2\ge \log t\right)dt\\
&=1+\int_0^{\infty}\PP^0\left(|G|\ge\sqrt{\frac{x}{\nu\sigma^*}}\right)e^x dx \quad\mbox{(Substituting $e^x=t$)}\\
&=1+\int_0^{\infty}\PP^0\left(|G|\ge y\right)2by e^{by^2}dy
\quad\mbox{(Denoting $b=\nu\sigma^2\in(0,1/2)$, and substituting $by^2=x$)}\\
&=1+\int_0^{\infty}\left[\PP^0(G\ge y)+\PP^0(G\le -y)\right]2by e^{by^2}dy.
\end{align*}
Using Hoeffding's bound~(\ref{hoeffding}), the last term is upper-bounded by
\begin{align*}
1+2\int_0^{\infty}e^{-y^2/2}  2by e^{by^2}dy&=1+2\left[\frac{-2b}{1-2b}e^{-(y^2/2)*(1-2b)}\right]^{\infty}_0
=1+\frac{4b}{1-2b},
\end{align*}
as announced in (\ref{beah}).
\end{proof}

Finally, using these three results, we can prove Lemma \ref{applemma:Htovar}:
\begin{proof}[Proof of Lemma \ref{applemma:Htovar}]
Fix $\delta>0$, and recall $\sigma^2:=\mbox{Var}^0[F]$. We now want find some $b>0$ such that $\mbox{Var}^T[F]\leq b\sigma^2$. In view of Lemma~\ref{cx-analysis}, it suffices to verify that for some $\nu>0$, Condition $\nu b\sigma^2 -\psi(\nu)\ge \delta$ holds. In view of Lemma~\ref{concentrate}, denoting the corresponding upper bound to $\psi(\nu)$ by: 
$$
\phi(\nu):=
\begin{cases}
\log\left(1+\frac{4\nu \sigma^2}{1-2\nu \sigma^2}\right)&\mbox{:If }\nu\sigma^2<1/2\\
+\infty&\mbox{:Otherwise}
\end{cases},
$$
it suffices to find $b$ such that for some $\nu$, $\nu b\sigma^2 - \phi(\nu)\ge \delta$. Maximizing $\nu b\sigma^2 -\phi(\nu)$ over $\nu$ for fixed $b$, one finds that the optimal value for $\nu$ is given by: 
$$
\nu^*=\frac{1}{2\sigma^2}\sqrt{1-\frac{4}{b}},
$$
Plugging this expression for $\nu^*$ in $\nu b\sigma^2 -\phi(\nu)$, we have that $b\sigma^2$ upper-bounds $\mbox{Var}^T[F]$ if:
$$
\frac{b}{2}\sqrt{1-\frac{4}{b}}-\log\left(\frac{1+(1-4/b)^{1/2}}{1-(1-4/b)^{1/2}}\right)\ge \delta.
$$
For $b\ge 16/3$, it holds that $1/2\le (1-4/b)^{1/2}\le 1$. Thus under this condition on $b$, the left-hand side of the above is at least as large as:
$$
\frac{b}{4}-\log\left(\frac{[1+(1-4/b)^{1/2}]^2}{1-1+4/b}\right)\ge \frac{b}{4}-\log(b)\ge \frac{b}{10},\quad\mbox{if $b>20$}.
$$
Thus, setting $b=\max\left\{20, 10\delta\right\}$, we see that the above conditions are satisfied.
\end{proof}

\end{document}